\newcolumntype{L}[1]{>{\raggedright\let\newline\\\arraybackslash\hspace{0pt}}p{#1}}
\newcolumntype{C}[1]{>{\centering\let\newline\\\arraybackslash\hspace{0pt}}p{#1}}
\newcolumntype{R}[1]{>{\raggedleft\let\newline\\\arraybackslash\hspace{0pt}}p{#1}}
\colorlet{LightViolet}{violet!40}
\colorlet{LightRed}{red!40}
\colorlet{LightOrange}{orange!40}
\colorlet{LightGreen}{green!40}
\colorlet{LightBlue}{blue!40}
\colorlet{DarkGreen}{green!50!black}
\colorlet{DarkRed}{red!70!black}
\colorlet{DarkCyan}{red!70!black}
\colorlet{DarkBlue}{blue!80!black}
\definecolor{DarkOrange}{rgb}{1.0, 0.49, 0.0}
\definecolor{Airforceblue}{rgb}{0.36, 0.54, 0.66}
\newcommand{\defeq}{\stackrel{\mathrm{def}}{=}}
\newcommand{\functionname}[1]{\operatorname{#1}} %
\newcommand{\dom}{\functionname{dom}}
\newcommand{\vf}{\functionname{v}}
\newcommand{\cf}{\functionname{c}}
\newcommand{\R}{\mathbb R} %
\newcommand{\pr}{{\textnormal{Pr}}}
\newcommand{\E}{\mathbf{E}}
\newcommand{\be}{\begin{enumerate}}
\newcommand{\ee}{\end{enumerate}}
\newcommand{\bi}{\begin{itemize}}
\newcommand{\ei}{\end{itemize}}
\newcommand{\beq}{\begin{equation}}
\newcommand{\eeq}{\end{equation}}
\newcommand{\bp}{\begin{proof}}
\newcommand{\ep}{\end{proof}}
\newcommand{\bcor}{\begin{cor}}
\newcommand{\ecor}{\end{cor}}
\newcommand{\bthm}{\begin{thm}}
\newcommand{\ethm}{\end{thm}}
\newcommand{\blmm}{\begin{lmm}}
\newcommand{\elmm}{\end{lmm}}
\newcommand{\bdefn}{\begin{defn}}
\newcommand{\edefn}{\end{defn}}
\newcommand{\bprop}{\begin{prop}}
\newcommand{\eprop}{\end{prop}}
\newcommand{\bconj}{\begin{conj}}
\newcommand{\econj}{\end{conj}}
\newcommand{\bopm}{\begin{opm}}
\newcommand{\eopm}{\end{opm}}
\newcommand{\bprm}{\begin{prm}}
\newcommand{\eprm}{\end{prm}}
\newcommand{\brmk}{\begin{rmk}}
\newcommand{\ermk}{\end{rmk}}
\newcommand{\bclm}{\begin{claim}}
\newcommand{\eclm}{\end{claim}}
\newcommand{\bex}{\begin{ex}}
\newcommand{\eex}{\end{ex}}
\theoremstyle{plain}            %
\newtheorem{thm}{Theorem}%
\newtheorem{lmm}[thm]{Lemma}
\newtheorem{prop}[thm]{Proposition}
\newtheorem{cor}[thm]{Corollary}
\theoremstyle{definition}              %
\newtheorem{opm}[thm]{Open Problem}
\newtheorem{prm}[thm]{Problem}
\newtheorem{conj}[thm]{Conjecture}
\newtheorem{ex}[thm]{Example}
\newtheorem{defn}{Definition}
\newtheorem{rmk}[thm]{Remark}
\newcounter{claimCounter}
\newtheorem{claim}[claimCounter]{Claim}
\newtheorem*{example*}{Example}
\newtheorem*{lmm*}{Lemma}
\definecolor{Red}{RGB}{255,204,204}
\definecolor{Green}{RGB}{204,255,204}
\definecolor{Blue}{RGB}{204,204,255}
\newcommand{\set}[1]{\{#1\}}                    %
\newcommand{\setof}[2]{\{{#1}\mid{#2}\}}        %
\newcommand{\shap}{\textsc{Shap}}
\newcommand{\fshap}{\text{\tt F-SHAP}}
\newcommand{\dshap}{\text{\tt D-SHAP}}
\newcommand{\nbn}{\text{\tt NBN}}
\newcommand{\ppcnf}{\text{\tt PP2CNF}}
\newcommand{\logit}{\text{\tt LOGIT}}
\newcommand{\numpar}{\text{\tt NUMPAR}}
\newcommand{\pbsp}{\text{\tt PR}}
\newcommand{\indep}{\text{\tt IND}}
\newcommand{\indeptwo}{\text{\tt INDB}}
\newcommand{\expprob}{\text{\tt E}}
\newcommand{\expquasiprob}{\text{\tt EQS}}
\newcommand{\ttx}{\text{\tt X}}
\newcommand{\ttf}{\text{\tt F}}
\newcommand{\tta}{\text{\tt A}}
\newcommand{\ttb}{\text{\tt B}}
\newcommand{\e}{\mathbf{e}}
\title{On the Tractability of SHAP Explanations}
\author {
    Guy Van den Broeck,\textsuperscript{\rm 1}
    Anton Lykov,\textsuperscript{\rm 1}
    Maximilian Schleich,\textsuperscript{\rm 2}
    Dan Suciu\textsuperscript{\rm 2} \\
}
\begin{document}

\maketitle

\begin{abstract}

\shap\ explanations are a popular feature-attribution mechanism for explainable AI. 
They use game-theoretic notions to measure the influence of individual features on
the prediction of a machine learning model. Despite a lot of recent interest
from both academia and industry, it is not known whether \shap\ explanations of common machine learning models can be computed efficiently. 
In this paper, we establish the complexity of computing the \shap\ explanation in three important settings.
First, we consider fully-factorized data distributions, and show that the
complexity of computing the \shap\ explanation is the same as the complexity of
computing the expected value of the model. This fully-factorized setting is often used to
simplify the \shap\ computation, yet our results show that the computation can
be intractable for commonly used models such as logistic regression.
Going beyond fully-factorized distributions, we show that computing
\shap\ explanations is already intractable for a very simple setting: computing
\shap\ explanations of trivial classifiers over naive Bayes distributions. Finally, we show that even computing \shap\ over the empirical distribution is \#P-hard.
\end{abstract}

\section{Introduction}
\label{sec:intro}

\noindent
Machine learning is increasingly applied in high stakes decision making. As a
consequence, there is growing demand for the ability to explain the prediction
of machine learning models. One popular explanation technique is to compute
feature-attribution scores, in particular using the Shapley values from cooperative game theory
\citep{roth1988shapley} as a principled aggregation measure to determine
the influence of individual features on the prediction of the collective model. Shapley
value based explanations have several desirable properties \citep{DBLP:conf/sp/DattaSZ16}, which is why they have attracted a lot of
interest in academia as well as industry in
recent years (see e.g., \citet{xai-industry}).

\citet{strumbelj2014} show that Shapley values can be used to explain arbitrary
machine learning models. \citet{DBLP:conf/sp/DattaSZ16} use Shapley-value-based
explanations as part of a broader framework for algorithmic
transparency. \citet{DBLP:conf/nips/LundbergL17} use Shapley values in a
framework that unifies various explanation techniques, and they coined the term
\shap\ explanation.  They show that the \shap\ explanation is effective in explaining
predictions in the medical domain; see~\citet{lundberg-2020}.  More recently
there has been a lot of work on the tradeoffs of variants of the original
\shap\ explanations, e.g., \citet{sundararajan2019}, \citet{kumar2020problems},
\citet{janzing20a}, \citet{merrick2020}, and \citet{aas2019}.

Despite all of this interest, there is considerable confusion about the
tractability of computing \shap\ explanations. The \shap\ explanations determine the
influence of a given feature by systematically computing the expected value of
the model given a subsets of the features. As a consequence, the complexity of computing
\shap\ explanations depends on the predictive model as well as assumptions on the underlying data distribution.  \citet{lundberg-2020} describe a polynomial-time algorithm
for computing the \shap\ explanation over decision trees, but online discussions
have pointed out that this algorithm is not correct as stated.  We present a
concrete example of this shortcoming in the supplementary material,
in Appendix~\ref{sec:bug}. In contrast, for fully-factorized
distributions, \citet{BLSSV:deem:20} prove that there are models for which
computing the \shap\ explanation is \#P-hard. A contemporaneous paper by
\citet{arenas2020tractability} shows that computing the \shap\ explanation for tractable logical circuits over uniform and fully factorized binary data distributions is tractable. In general, the complexity of the \shap\ explanation is open.

In this paper we consider the original formulation of the \shap\ explanation by
\citet{DBLP:conf/nips/LundbergL17} and analyze its computational complexity under
the following data distributions and model classes:
\begin{enumerate}
\item First, we consider fully-factorized distributions, which are the simplest possible data distribution. Fully-factorized distributions capture the
  assumption that the model's features are independent, which is a commonly used
  assumption to simplify the computation of the \shap\ explanations, see for example
  \citet{DBLP:conf/nips/LundbergL17}.

  For fully-factorized distributions and any prediction model, we show that the
  complexity of computing the \shap\ explanation is the same as the complexity of
  computing the expected value of the model.

  It follows that there are classes of models for which the computation is
  tractable (e.g., linear regression, decision trees, tractable circuits) while for other models,
  including commonly used ones such as logistic regression and neural nets with sigmoid activation functions, it is \#P-hard.

\item Going beyond fully-factorized distributions, we show that computing \shap\ explanation becomes intractable already for the simplest
  probabilistic model that does not assume feature independence: naive Bayes.
  As a consequence, the complexity of computing \shap\ explanations on such data distributions is also intractable for many classes of models, including linear and
  logistic regression.

\item Finally we consider the empirical distribution, and prove that computing \shap\ explanations is \#P-hard for this class of distributions. This result implies that the algorithm by \citet{lundberg-2020} cannot be fixed to compute the exact \shap\ explanations over decision trees in polynomial
  time.
\end{enumerate}

\section{Background and Problem Statement}
\label{sec:shap}

Suppose our data is described by $n$ indexed features $\mathbf{X} = \set{X_1, \ldots, X_n}$. Each feature variable $X$ takes a value from a finite domain $\dom(X)$.
A data instance $\mathbf{x} = (x_1, \dots, x_n)$ consists of values
$x \in \dom(X)$ for every feature $X$. This instance space is denoted
$\mathbf{x} \in \mathcal{X} = \dom(X_1) \times \dots \times
\dom(X_n)$. We are also given a learned function
$F: \mathcal{X} \to \mathbb{R}$ that computes a prediction
$F(\mathbf{x})$ on each instance $\mathbf{x}$. Throughout this paper
we assume that the prediction $F(\mathbf{x})$ can be computed in
polynomial time in $n$.

For a particular instance of prediction $F(\mathbf{x})$, the goal of
\textit{local explanations} is to clarify why the function $F$ gave
its prediction on instance $\mathbf{x}$, usually by attributing credit
to the features.  We will focus on local explanation that are inspired
by game-theoretic Shapley values~\citep{DBLP:conf/sp/DattaSZ16,
  DBLP:conf/nips/LundbergL17}.  Specifically, we will work with the
\shap\ explanations as defined by \citet{DBLP:conf/nips/LundbergL17}.

\subsection{\shap\ Explanations}

To produce \shap\ explanations, one needs an additional ingredient: a probability distribution $\Pr(\mathbf{X})$ over the features, which we call the data distribution.
We will use this distribution to reason about partial instances. Concretely, for a set of indices $S \subseteq [n] = \set{1,\dots,n}$, we let $\mathbf{x}_S$ denote the restriction of complete instance $\mathbf{x}$ to those features $\mathbf{X}_S$ with indices in $S$.
Abusing notation, we will also use $\mathbf{x}_S$
to denote the probabilistic event $\mathbf{X}_S = \mathbf{x}_S$.

Under this data distribution, it now becomes possible to ask for the \textit{expected value} of the predictive function~$F$.
Clearly, for a complete data instance $\mathbf{x}$ we have that $\E[F| \mathbf{x}] = F(\mathbf{x})$, as there is no uncertainty about the features.
However, for a partial instance $\mathbf{x}_S$, which does not assign values to the features outside of $\mathbf{X}_S$, we appeal to the data distribution $\Pr$ to compute the expectation of function $F$ as $\E_{\Pr}[F|\mathbf{x}_S] = \sum_{\mathbf{x} \in \mathcal{X}} F(\mathbf{x}) \Pr(\mathbf{x} | \mathbf{x}_S)$.

The \shap\ explanation framework draws from Shapley values in cooperative game theory.
Given a particular instance $\mathbf{x}$, it considers features $\mathbf{X}$ to be players in a coalition game: the game of making a prediction for~$\mathbf{x}$.
\shap\ explanations are defined in terms of a set function $v_{F, \mathbf{x}, \Pr} : 2^\mathbf{X} \to \mathbb{R}$. Its purpose is to evaluate the \textit{``value''} of each coalition of players/features $\mathbf{X}_{S}\subseteq \mathbf{X}$ in making the prediction $F(\mathbf{x})$ under data distribution $\Pr$.
Concretely, following~\citet{DBLP:conf/nips/LundbergL17}, this value function is the conditional expectation of function $F$:
\begin{align}
\vf_{F, \mathbf{x}, \Pr}(\mathbf{X}_S) \defeq \E_{\Pr}[F | \mathbf{x}_S].
\label{eq:v}
\end{align}
We will elide $F$, $\mathbf{x}$, and $\Pr$ when they are clear from context.

Our goal, however, is to assign credit to individual features. In the context of a coalition $\mathbf{X}_{S}$, the {\em contribution} of an individual feature $X \notin \mathbf{X}_{S}$ is given by
\begin{align}
\cf(X, \mathbf{X}_{S}) \defeq \vf(\mathbf{X}_{S}\cup\set{X}) -  \vf(\mathbf{X}_{S}) \label{eq:c}.
\end{align}
where each term is implicitly w.r.t.\ the same $F$, $\mathbf{x}$, and $\Pr$.

Finally, the \shap\ explanation computes a score for each feature
$X \in {\mathbf{X}}$ averaged over all possible contexts,
and thus measures the influence feature $X$ has on the outcome.
Let $\pi$ be a permutation on the set of features $\mathbf{X}$, i.e., $\pi$ fixes a total order on all features.
Let $\pi^{<X}$ be the set of features that come before $X$ in
the order $\pi$. The \shap\ explanations are then defined as computing the following scores.
\begin{defn}[\shap\ Score]
  Fix an entity $\mathbf{x}$, a predictive function $F$, and a data distribution $\Pr$.
  The \shap\ explanation of a feature $X$ is the contribution of $X$ given the features $\pi^{<X}$, averaged over all permutations~$\pi$:
  \begin{align}\label{eq:shap}
    \shap(X) \defeq \frac{1}{n!} \sum_\pi \cf(X, \pi^{<X}).
  \end{align}
\end{defn}

We mention two simple properties of the \shap\ explanations here; for more
discussion see~\citet{DBLP:conf/sp/DattaSZ16} and \citet{lundberg-2020}.
First, for the linear combination of functions $G(.) = \sum_k \lambda_k F_k(.)$, we have that
\begin{align}
\shap_{G}\left(X\right) \ = & \sum_k \lambda_k \shap_{F_k}(X).
\label{eq:shap:linear}
\end{align}
Second, the sum of the \shap\ explanation of all features is
related to the expected value of function $F$:
\begin{align}
\sum_i \shap_F(X_i) \ = & \ F(\mathbf{x}) - \E[F]. \label{eq:shap:sum}
\end{align}

\subsection{Computational Problems}

\label{subsec:computational:problems}

This paper studies the complexity of computing $\shap(X)$; a task we formally define next.
We write $\ttf$ for a class of functions.
We also write $\pbsp_n$ for a class of data distributions over $n$ features, and let $\pbsp = \bigcup_n \pbsp_n$.
We assume that all parameters are rationals.
Because \shap\ explanations are for an arbitrary fixed instance $\mathbf{x}$, we will simplify the notation throughout this paper by assuming it to be the instance $\e=(1,1,\ldots,1)$, and that each domain contains the value~$1$, which is without loss of generality.

\begin{defn}[\shap\ Computational Problems]
  For each function class $\ttf$ and distribution class $\pbsp$, consider the following computational problems.
  \begin{itemize}

  \item[--] The {\em functional \shap\ problem} $\fshap(\ttf, \pbsp)$: given a data distribution $\Pr \in \pbsp$ and a function $F \in \ttf$, compute
  $\shap(X_1), \ldots, \shap(X_n)$.

  \item[--] The {\em decision \shap\ problem} $\dshap(\ttf, \pbsp)$: given a data distribution $\Pr \in \pbsp$, a function $F \in \ttf$, a feature $X \in \mathbf{X}$, and a threshold $t \in \R$, decide if $\shap(X) > t$.
  \end{itemize}

\end{defn}

To establish the complexities of these problems, we use standard
notions of reductions.  A {\em polynomial time reduction} from a
problem $\tta$ to a problem $\ttb$, denoted by $\tta~\leq^P~\ttb$, and
also called a {\em Cook reduction}, is a polynomial-time algorithm for
the problem $\tta$ with access to an oracle for the problem~$\ttb$.
We write $\tta \equiv^P \ttb$ when both $\tta \leq^P \ttb$ and
$\ttb \leq^P \tta$.

In the remainder of this paper will study the computational complexity of these problems for natural hypothesis classes~$\ttf$ that are popular in machine learning, as well as common classes of data distributions $\pbsp$, including those most often used to compute \shap\ explanations.

\section{\shap\ over Fully-Factorized Distributions}
\label{sec:positive-result}

We start our study of the complexity of $\shap$ by considering the
simplest probability distribution: a fully-factorized distribution,
where all features are independent.

There are both practical and computational reasons why it makes sense to assume a fully-factorized data distribution when computing $\shap$ explanations.
First, functions $F$ are often the product of a supervised learning algorithm that does not have
access to a generative model of the data -- it is purely discriminative.
Hence, it is convenient to make the practical assumption that the data
distribution is fully factorized, and therefore easy to estimate.
Second, fully-factorized distributions are highly tractable; for example they make it easy to compute expectations of linear regression functions~\citep{DBLP:conf/ijcai/KhosraviLCB19} and other hard inference tasks~\citep{tutorial-pc}.

\citet{DBLP:conf/nips/LundbergL17} indeed observe that computing the $\shap$-explanation on an arbitrary data distribution is challenging and consider using fully-factorized
distributions (Sec.~4, Eq.~11).  Other prior work on computing
explanations also use fully-factorized distributions of features, e.g.,
\citet{DBLP:conf/sp/DattaSZ16,strumbelj2014}.
As we will show, the $\shap$ explanation can be computed efficiently for several popular classifiers when the distribution is fully factorized.
Yet, such simple data distributions are not a guarantee for tractability: computing \shap\ scores will be intractable for some other common classifiers.

\subsection{Equivalence to Computing Expectations}

Before studying various function classes, we prove a key result that connects the complexity of \shap\ explanations to the complexity of computing expectations.

Let $\indep_n$ be the class of fully-factorized probability
distributions over $n$ discrete and independent random variables
$X_1, \ldots, X_n$. That is, for every instance
$(x_1,\ldots,x_n) \in \mathcal{X}$, we have that
$\pr(X_1=x_1, \ldots, X_n=x_n) = \prod_i \pr(X_i=x_i)$.
Let $\indep \defeq \bigcup_{n \geq 0} \indep_n$.
We show that for every function class $\ttf$, the complexity of $\fshap(\ttf,\indep)$ is
the same as that of the {\em fully-factorized expectation problem}.
\begin{defn}[Fully-Factorized Expectation
  Problem] \label{def:fully:factorized:expectation:problem}
Let $\ttf$ be a class of real-valued
  functions with discrete inputs.  The {\em fully-factorized expectation
    problem} for $\ttf$, denoted $\expprob(\ttf)$, is the following:
  given a function $F \in \ttf$ and a probability distribution $\pr \in \indep$,
  compute $\E_{\pr}(F)$.
\end{defn}

We know from Equation~\ref{eq:shap:sum} that for any function~$F$ over $n$ features,
$\expprob(\set{F}) \leq^P \fshap(\set{F},\indep_n)$,
because $\E[F] = F(\mathbf{x})-\sum_{i=1,n}\shap_F(X_i)$.
In this section we prove that the converse
holds too:
\begin{thm} \label{thm:shap:to:wmc-general}
For any function
$F : \mathcal{X} \to \R$,
we have that
$\fshap(\set{F},\indep_n) \equiv^P \expprob(\set{F}).$
\end{thm}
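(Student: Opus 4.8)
Since the excerpt already gives $\expprob(\set{F}) \leq^P \fshap(\set{F},\indep_n)$ from Equation~\ref{eq:shap:sum}, the plan is to establish the reverse reduction $\fshap(\set{F},\indep_n) \leq^P \expprob(\set{F})$: assuming an oracle that computes fully-factorized expectations of $F$, I would compute all $n$ scores $\shap(X_1),\dots,\shap(X_n)$ in polynomial time. The enabling observation is that under a fully-factorized $\Pr$ with the fixed instance $\e=(1,\dots,1)$, the value function is itself a fully-factorized expectation: conditioning on $\mathbf{x}_S$ freezes the coordinates in $S$ to the value $1$ and leaves the marginals of the remaining coordinates unchanged, so $\vf(\mathbf{X}_S)=\E_{D_S}[F]$, where $D_S\in\indep_n$ places a point mass at $1$ on each $X_i$ with $i\in S$ and keeps $\pr(X_j)$ for $j\notin S$. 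Thus a single value $\vf(\mathbf{X}_S)$ costs one oracle call, but the Shapley formula ranges over exponentially many sets $S$, so we cannot afford to query them one at a time.

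The crux is to package all coalitions for a fixed feature $i$ into a single univariate polynomial. For a parameter $t$, let $D^{i=1}_t\in\indep_n$ freeze $X_i$ to $1$ and independently draw each $X_j$ ($j\ne i$) from the mixture marginal $t\,\delta_1+(1-t)\,\pr(X_j)$, and let $D^{i=*}_t\in\indep_n$ be identical except that $X_i$ keeps its original marginal $\pr(X_i)$. Both are genuine fully-factorized distributions whenever $t\in[0,1]$ (using that $1\in\dom(X_j)$ for every $j$). Expanding the product of mixtures and grouping by the set $S$ of coordinates that land on the point mass gives the polynomial identity
\begin{align}
g_i(t)\ :=\ \E_{D^{i=1}_t}[F]-\E_{D^{i=*}_t}[F]\ =\ \sum_{k=0}^{n-1} t^k(1-t)^{\,n-1-k}\!\!\sum_{\substack{S\subseteq[n]\setminus\set{i}\\ |S|=k}}\!\! \cf(X_i,\mathbf{X}_S),
\end{align}
so $g_i$ is a polynomial in $t$ of degree at most $n-1$. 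Recognizing the Shapley coefficient as a Beta integral, $\int_0^1 t^k(1-t)^{n-1-k}\,dt=\tfrac{k!\,(n-1-k)!}{n!}$, yields the closed form $\shap(X_i)=\int_0^1 g_i(t)\,dt$.

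The algorithm is then immediate. For each feature $i$, I would evaluate $g_i$ at $n$ distinct rational points in $[0,1]$ (say $0,\tfrac1n,\dots,\tfrac{n-1}{n}$), each evaluation costing two oracle calls on legal instances of $\expprob(\set{F})$; interpolate to recover $g_i$ as a degree-$(n-1)$ polynomial; and integrate it over $[0,1]$ — equivalently, read off the coalition-size sums $\sum_{|S|=k}\cf(X_i,\mathbf{X}_S)$ and apply the Shapley weights directly. This uses $O(n^2)$ oracle calls plus polynomial-time rational interpolation, which is the desired Cook reduction, and combined with the known direction completes $\fshap(\set{F},\indep_n)\equiv^P\expprob(\set{F})$.

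The routine points to verify are that each mixture marginal is an honest probability distribution (so every query is a valid instance of $\expprob(\set{F})$), that all quantities remain rational as the problem stipulates, and that the $\deg g_i\le n-1$ bound makes the interpolation and quadrature exact. The genuinely conceptual step — and the only real obstacle — is the passage from the exponentially large Shapley sum to the low-degree polynomial $g_i$: once one sees that mixing each off-feature coordinate between its point mass and its marginal turns the coalition average into evaluations of a single degree-$(n-1)$ polynomial, the reduction follows.
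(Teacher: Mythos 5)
Your reduction is correct, and at its core it is the same mechanism as the paper's: under the substitution $t=z/(1+z)$ your mixture marginal $t\,\delta_1+(1-t)\Pr(X_j)$ is exactly the paper's perturbed distribution $\Pr_z(X_j\!=\!1)=(p_j+z)/(1+z)$, your polynomial identity for $g_i(t)$ is the identity of Claim~\ref{claim:thm:shap:to:wmc:identity}, $\sum_k z^k v_k=(1+z)^n\E_z[G]$, rewritten in the basis $t^k(1-t)^{n-1-k}$, and your interpolation at $n$ distinct points is their Vandermonde system. Two organizational differences are worth noting. First, the paper splits $F$ into $F_0=F[X_0:=0]$ and $F_1=F[X_0:=1]$ and recovers the two families $v_{F_0,k}$ and $v_{F_1,k}$ separately before recombining them into the quantities $D_k$, whereas you work directly with the difference $g_i(t)=\E_{D_t^{i=1}}[F]-\E_{D_t^{i=*}}[F]$ and collapse the final weighted sum into the Beta integral $\int_0^1 g_i(t)\,dt$; this is tidier but equivalent. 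Second --- and this is a genuine gain --- your construction makes sense verbatim for arbitrary finite domains, since the mixture $t\,\delta_1+(1-t)\Pr(X_j)$ and the identity $\E[F\mid\e_S]=\E_{D_S}[F]$ require only that $1\in\dom(X_j)$ and that $\Pr$ be a product distribution. The paper instead proves the binary case first (Theorem~\ref{thm:shap:to:wmc}) and then lifts it to general discrete domains via the projection machinery of Proposition~\ref{prop:shap:to:wmc} in Appendix~\ref{app:cor:shap:to:wmc}; your argument renders that second step unnecessary. The routine checks you flag (validity of the mixtures for $t\in[0,1]$, rationality, exactness of the degree-$(n-1)$ interpolation and of the quadrature) all go through.
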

In other words, for {\em any} function $F$, the complexity of
computing the $\shap$ scores is the same as the complexity of
computing the expected value $\E[F]$ under a fully-factorized data
distribution.  One direction of the proof is immediate:
$\expprob(\set{F}) \leq^P \fshap(\set{F},\indep_n)$ because, if we are
given an oracle to compute $\shap_F(X_i)$ for every feature $X_i$,
then we can obtain $\E[F]$ from Equation~\ref{eq:shap:sum} (recall
that we assumed that $F(\mathbf{x})$ is computable in polynomial
time).  The hard part of the proof is the opposite direction: we will
show in Sec.~\ref{s:equivproof} how to compute $\shap_F(X_i)$ given an
oracle for computing $\E[F]$.  Theorem~\ref{thm:shap:to:wmc-general}
immediately extends to classes of functions $\ttf$, and to any number
of variables, and therefore implies that
$\fshap(\ttf, \indep) \equiv^P \expprob(\ttf)$.

Sections~\ref{ss:indep:tractable} and~\ref{ss:indep:intractable} will discuss the consequences of this result, by delineating which function classes support tractable \shap\ explanations, and which do not. The next section is devoted to proving our main technical result.

\subsection{Proof of Theorem~\ref{thm:shap:to:wmc-general}}
\label{s:equivproof}

We start with the special case when all features $X$ are binary: $\dom(X)=\set{0,1}$.
We denote by $\indeptwo_n$ the class of fully-factorized distributions over binary domains.
\begin{thm} \label{thm:shap:to:wmc} For any function
  $F : \set{0,1}^n \to \R$,
  we have that
  $\fshap(\set{F},\indeptwo_n) \equiv^P \expprob(\set{F})$.
\end{thm}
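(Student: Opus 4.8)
The direction $\expprob(\set{F}) \leq^P \fshap(\set{F},\indeptwo_n)$ is immediate by the same argument as in the general case: Equation~\ref{eq:shap:sum} gives $\E[F] = F(\e)-\sum_i \shap_F(X_i)$, and $F(\e)$ is computable in polynomial time. So the plan concentrates on the hard direction $\fshap(\set{F},\indeptwo_n) \leq^P \expprob(\set{F})$: given an oracle that returns $\E_{\pr}[F]$ for an arbitrary fully-factorized binary distribution $\pr$, I want to compute each $\shap(X_i)$ in polynomial time. The first observation is that, because the distribution is fully factorized and the fixed instance is $\e=(1,\dots,1)$, conditioning on $\mathbf{x}_S$ simply resets $\pr(X_j{=}1)=1$ for $j\in S$ and leaves the remaining marginals untouched. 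Hence each value $\vf(\mathbf{X}_S)=\E_{\pr}[F\mid \mathbf{x}_S]$ is itself the answer of the expectation oracle on a modified fully-factorized distribution, so every term of the \shap\ sum is oracle-computable. The catch is that this sum ranges over exponentially many subsets $S$, so I cannot evaluate them one at a time.

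To collapse the exponential sum I would exploit two facts: the expectation $E(q_1,\dots,q_n)\defeq \E_{\pr_q}[F]=\sum_{\mathbf{x}}F(\mathbf{x})\prod_{j}q_j^{x_j}(1-q_j)^{1-x_j}$ is \emph{multilinear} in the marginals $q_j=\pr(X_j{=}1)$, and the \shap\ weight of a subset depends only on its size. Fix the target feature $i$ and write the score in grouped-by-size form, $\shap(X_i)=\sum_{k=0}^{n-1}w_k\sum_{|S|=k,\,i\notin S}\big[\vf(\mathbf{X}_{S\cup\set{i}})-\vf(\mathbf{X}_S)\big]$ with $w_k=\tfrac{k!(n-k-1)!}{n!}$. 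Now I reparametrise all $n-1$ marginals $j\neq i$ by a single variable $y$ through $q_j=p_j+y(1-p_j)$ (so $y{=}0$ recovers $p_j$, i.e.\ $j\notin S$, and $y{=}1$ forces $X_j{=}1$, i.e.\ $j\in S$), and define $D(y)\defeq E|_{q_i=1}-E|_{q_i=p_i}$ evaluated at these $q_j$. Since $E$ is multilinear in the $q_j$ and each $q_j$ is affine in $y$, $D$ is a univariate polynomial of degree at most $n-1$. Each value $D(y)$ costs two oracle calls, so evaluating $D$ at $n$ distinct rational points of $[0,1]$ and applying Lagrange interpolation recovers its coefficients $d_0,\dots,d_{n-1}$ in polynomial time.

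Writing the multilinear extension of $D$ as $\sum_T c_T\prod_{j\in T}z_j$ over the indicator vector $z$ of $S$, the diagonal restriction gives $D(y)=\sum_m d_m y^m$ with $d_m=\sum_{|T|=m}c_T$, and a size-$m$ monomial contributes to exactly $\binom{n-1-m}{k-m}$ of the size-$k$ subsets, so $\sum_{|S|=k}\big[\vf(\mathbf{X}_{S\cup\set i})-\vf(\mathbf{X}_S)\big]=\sum_m d_m\binom{n-1-m}{k-m}$. Substituting into the grouped \shap\ sum and swapping the order of summation, the inner weight sum telescopes by a hockey-stick identity to $\sum_{k}w_k\binom{n-1-m}{k-m}=\tfrac{1}{m+1}$, yielding the clean closed form $\shap(X_i)=\sum_{m=0}^{n-1}\tfrac{d_m}{m+1}=\int_0^1 D(y)\,dy$. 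Thus each $\shap(X_i)$ follows from $O(n)$ oracle calls plus polynomial post-processing, and iterating over all $n$ features gives a polynomial-time Cook reduction, establishing the claimed equivalence.

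The crux of the argument—and the step I expect to be the main obstacle to state correctly—is recognising that the fully-factorized structure, combined with the size-only dependence of the \shap\ weights, lets the exponential subset sum be encoded by a single univariate polynomial of degree $n-1$, turning the whole computation into interpolation against the expectation oracle. The remaining work is bookkeeping that I would verify carefully: multilinearity of $E$; that the reparametrised probabilities $q_j$ stay in $[0,1]$ for $y\in[0,1]$ so the oracle is only ever queried on genuine distributions; and that the chosen rational interpolation nodes and the polynomially-sized oracle outputs keep all intermediate rationals of polynomial bit-size.
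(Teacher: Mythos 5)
Your argument is correct, and it differs from the paper's in an interesting way even though it queries the oracle on essentially the same one-parameter family of distributions: your reparametrisation $q_j=p_j+y(1-p_j)$ is exactly the paper's tilted distribution $\pr_z(X_j)=\frac{p_j+z}{1+z}$ under the substitution $y=\frac{z}{1+z}$. Where the two proofs diverge is in what gets interpolated and how the cardinality weights are handled. The paper isolates the per-cardinality subset sums $v_{G,k}=\sum_{|S|=k}\E[G\mid\e_S]$, proves the generating-function identity $\sum_k z^k v_{G,k}=(1+z)^n\,\E_z[G]$, recovers the $v_{G,k}$ by solving a Vandermonde system, and only then plugs them into the weighted \shap\ formula; it also splits $F$ into $F_0=F[X_0{:=}0]$ and $F_1=F[X_0{:=}1]$ and handles each separately. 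You instead interpolate the single univariate polynomial $D(y)$ directly, observe via the hockey-stick collapse $\sum_k w_k\binom{n-1-m}{k-m}=\frac{1}{m+1}$ that the \shap\ score is $\sum_m \frac{d_m}{m+1}=\int_0^1 D(y)\,dy$, and never need the individual $v_k$'s --- this is in effect a rediscovery of Owen's multilinear-extension integral representation of the Shapley value, and it yields a cleaner closed form with the same $O(n)$ oracle calls per feature. I verified the two steps you flagged as bookkeeping: the affine substitution does make $D(y)$ the diagonal of the multilinear extension of $S\mapsto \vf(\mathbf{X}_{S\cup\{i\}})-\vf(\mathbf{X}_S)$, so $d_m=\sum_{|T|=m}c_T$ as claimed, and the binomial identity evaluates as you say. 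What the paper's more laborious route buys is reusability: the machinery for extracting the quantities $v_{G,k}$ individually is invoked again verbatim in the empirical-distribution hardness proof, whereas your integral shortcut discards exactly that information. One shared caveat (present in the paper's proof as well, so not a gap relative to it): both arguments implicitly assume $p_j>0$ so that conditioning on $X_j=1$ is well defined.
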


\begin{proof} We prove only
  $\fshap(F,\indeptwo_n) \leq^P \expprob(\set{F})$; the opposite
  direction follows immediately from Equation~\ref{eq:shap:sum}.  We
  will assume w.l.o.g. that $F$ has $n+1$ binary features
  $\mathbf{X^\prime} = \set{X_0} \cup \mathbf{X}$ and show how to
  compute $\shap_F(X_0)$ using repeated calls to an oracle for
  computing $\E[F]$, i.e., the expectation of the same function $F$,
  but over fully-factorized distributions with different
  probabilities.
  The probability distribution $\pr$ is given to us by $n+1$ rational
  numbers, $p_i \defeq \pr(X_i\!=\!1)$, $i=0,n$; obviously,
  $\pr(X_i\!=\!0)=1-p_i$.
  Recall that the instance whose outcome we want to explain is $\e=(1,\ldots,1)$.
  Recall that for any set $S \subseteq [n]$ we
  write $\e_{S}$ for the event $\bigwedge_{i \in S} (X_i=1)$.  Then, we have that
  \begin{align}
    \shap(X_0) &= \sum_{k=0,n} \frac{k!(n-k)!}{(n+1)!} \, D_k, \ \ \ \ \mbox{where}\label{eq:shap:alt}\\
    D_k & \defeq  \sum_{S \subseteq [n]: |S|=k} \left( \E\!\left[F|\e_{S \cup \set{0}}\right] - \E\!\left[F|\e_{S}\right] \right).\nonumber
  \end{align}
  Let $F_0 \defeq F[X_0:=0]$ and $F_1 \defeq F[X_0:=1]$ (both
  are functions in $n$ binary features, $\mathbf{X} = \set{X_1, \ldots, X_n}$). Then:
  \begin{align*}
  \E\left[F\middle|\e_{S \cup \set{0}}\right] &= \E[F_1|\e_{S}]\\
  \E[F|\e_{S}] &= \E[F_0|\e_{S}] \cdot (1-p_0) + \E[F_1|\e_{S}] \cdot p_0
  \end{align*}
  and therefore $D_k$ is given by:
  \begin{align*}
   D_k = (1-p_0)\sum_{S \subseteq [n]: |S|=k}\left(\E[F_1|\e_S]-\E[F_0|\e_S]\right)
  \end{align*}

  For any function $G$, Equation~\ref{eq:v} defines value $\vf_{G, \e, \pr}(\mathbf{X}_S)$ as $\E[G | \e_{S}]$.  Abusing notation, we write $v_{G,k}$ for the sum of these quantities over all sets $S$ of cardinality~$k$:
  \begin{align}
    v_{G,k} \defeq & \sum_{S \subseteq [n], |S| = k} \E[G|\e_{S}]. \label{eq:v:g:k}
  \end{align}

  We will prove the following claim.
  \begin{claim} \label{claim:thm:shap:to:wmc} Let $G$ be a function
    over $n$ binary variables.  Then the $n+1$ quantities
    $v_{G,0}$ until $v_{G,n}$ can be computed in polynomial time, using $n+1$
    calls to an oracle for $\expprob(\set{G})$.
  \end{claim}
  Note that an oracle for $\expprob(\set{F})$ is also an oracle for both $\expprob(\set{F_0})$ and $\expprob(\set{F_1})$, by simply setting $\pr(X_0=1)=0$ or $\pr(X_0=1)=1$
  respectively.
  Therefore, Claim~\ref{claim:thm:shap:to:wmc} proves Theorem~\ref{thm:shap:to:wmc}, by applying it once to $F_0$
  and once to $F_1$ in order to derive all the quantities $v_{F_0,k}$ and $v_{F_1,k}$, thereby computing $D_k$, and finally computing $\shap_F(X_0)$ using
  Equation~\ref{eq:shap:alt}.
  It remains to prove Claim~\ref{claim:thm:shap:to:wmc}.

  Fix a function $G$ over $n$ binary variables and let $v_k = v_{G,k}$. Let $p_j = \pr(X_j\!=\!1)$, for $j=1,n$, define the distribution over
  which we need to compute $v_0,\ldots,v_n$.
  We will prove the following additional claim.
  \begin{claim}
  \label{claim:thm:shap:to:wmc:identity}
  Given any real number
  $z > 0$, consider the distribution
  $\pr_z(X_j) = p'_j \defeq \frac{p_j+z}{1+z}$, for $j=1,n$. Let $\E_{z}[G]$ denote $\E[G]$ under distribution $\pr_z$. We then have that
  \begin{align}
    \sum_{k=0,n} z^k \cdot v_k = & (1+z)^n \cdot \E_{z}[G]. \label{eq:v:vs:e}
  \end{align}
  \end{claim}
  Assuming Claim~\ref{claim:thm:shap:to:wmc:identity} holds, we prove Claim~\ref{claim:thm:shap:to:wmc}.  Choose any $n+1$
  distinct values for $z$, use the oracle to compute the quantities
  $\E_{z_0}[G], \ldots, \E_{z_n}[G]$, and form a system of $n+1$ linear equations \eqref{eq:v:vs:e} with unknowns $v_0, \ldots, v_n$. Next, observe that its matrix is a non-singular Vandermonde matrix, hence
  the system has a unique solution which can be computed in polynomial
  time.
It remains to prove Claim~\ref{claim:thm:shap:to:wmc:identity}.

   Because of independence, the probability of instance $\mathbf{x} \in \set{0,1}^n$ is
  $\pr(\mathbf{x})= \prod_{i:\mathbf{x}_i=1}p_i \cdot
  \prod_{i:\mathbf{x}_i=0}(1-p_i)$,
  where $\mathbf{x}_i$ looks up the value of feature $X_i$ in instance $\mathbf{x}$.
  Similarly, $\pr_z(\mathbf{x})= \prod_{i:\mathbf{x}_i=1}p_i' \cdot
  \prod_{i:\mathbf{x}_i=0}(1-p_i')$.  Using direct calculations we
  derive:
  \begin{align}
\pr(\mathbf{x})\prod_{i: \mathbf{x}_i=1}\left(1+\frac{z}{p_i}\right)
=     (1+z)^n \cdot \pr_z(\mathbf{x})
\label{eq:pr:prz}
  \end{align}
Separately we also derive the following identity, using the fact that $\pr(\e_{S})=\prod_{i \in S}
p_i$ by independence:
  \begin{align}
\E[G|\e_{S}]=\frac{1}{\prod_{i \in S} p_i} \sum_{\mathbf{x}:
\mathbf{x}_S = \e_S} G(\mathbf{x})\cdot \pr(\mathbf{x})
\label{eq:condexp}
  \end{align}
  We are now in a position to prove Claim~\ref{claim:thm:shap:to:wmc:identity}: %
  \begin{align*}
    \sum_{k=0,n} z^k \cdot v_k &= \sum_{k=0,n} z^k \sum_{S \subseteq [n]: |S|=k} \E[G|\e_S] \\
 &=  \sum_{S \subseteq [n]} z^{|S|} \cdot \E[G|\e_S] \\
&=  \sum_{S \subseteq [n]}\frac{z^{|S|}}{\prod_{i \in S} p_i} \sum_{\mathbf{x}: \mathbf{x}_S = \e_S}G(\mathbf{x}) \cdot \pr(\mathbf{x})
 \end{align*}
The last line follows from Equation~\ref{eq:condexp}.
Next, we simply exchange the summations $\sum_S$ and
  $\sum_\mathbf{x}$, after which we apply the identity
  $\sum_{S \subseteq A} \prod_{i \in S} u_i = \prod_{i \in A}
  (1+u_i)$.
 \begin{align*}
& \textit{(continuing)}\\
= & \sum_{\mathbf{x} \in \set{0,1}^n} G(\mathbf{x}) \cdot \pr(\mathbf{x})  \sum_{S: \mathbf{x}_S = \e_S}\frac{z^{|S|}}{\prod_{i \in S} p_i}\\
= & \sum_{\mathbf{x} \in \set{0,1}^n} G(\mathbf{x}) \cdot \pr(\mathbf{x})\prod_{i: \mathbf{x}_i = 1 }\left(1+\frac{z}{p_i}\right)\\
= & ~(1+z)^n \sum_{\mathbf{x} \in \set{0,1}^n} G(\mathbf{x}) \cdot \pr_z(\mathbf{x}) = (1+z)^n \cdot \E_z[G].
  \end{align*}
The final line uses Equation~\ref{eq:pr:prz}.  This
  completes the proof of Claim~\ref{claim:thm:shap:to:wmc:identity} as well as Theorem~\ref{thm:shap:to:wmc}.
\end{proof}

Next, we generalize this result from binary features to arbitrary
discrete features.  Fix a function with $n$ inputs,
$F: \mathcal{X} (\defeq \prod_i \dom(X_i)) \rightarrow \R$, where each
domain is an arbitrary finite set, $\dom(X_i) = \set{1,2,\ldots,m_i}$;
we assume w.l.o.g. that $m_i > 1$.  A fully factorized probability
space $\pr \in \indep_n$ is defined by numbers $p_{ij} \in [0,1]$,
$i=1,n$, $j=1,m_i$, such that, for all $i$, $\sum_j p_{ij}=1$.  Given
$F$ and $\pr$ over the domain $\prod_i \dom(X_i)$, we define their
{\em projections}, $F_\pi, \pr_\pi$ over the binary domain
$\set{0,1}^n$ as follows.  For any instance
$\mathbf{x} \in \set{0,1}^n$, let $T(\mathbf{x})$ denote the event
asserting that $X_j=1$ iff $\mathbf{x}_j=1$. Formally,
$$T(\mathbf{x}) \defeq \bigwedge_{j: \mathbf{x}_j = 1} (X_j\!=\!1)
\wedge \bigwedge_{j: \mathbf{x}_j = 0} (X_j\!\neq\!1).$$  Then,  the
projections are defined as follows: $  \forall \mathbf{x} \in
\set{0,1}^n$, 
\begin{align}
\pr_\pi(\mathbf{x}) \defeq & \pr(T(\mathbf{x})) &
F_\pi(\mathbf{x}) \defeq & \E[F~|~T(\mathbf{x})] \label{eq:f:pi}
\end{align}
Notice that $F_\pi$ depends both on $F$ and on the probability
distribution $\pr$.  Intuitively, the projections only distinguishes
between $X_j=1$ and $X_j \neq 1$, for example:
\begin{align*}
F_\pi(1,0,0)  = & \E[F | (X_1=1, X_2\neq 1, X_3\neq 1)] \\
\pr_\pi(1,0,0) = & \pr(X_1=1, X_2\neq 1, X_3\neq 1)
\end{align*}

We prove the following result in Appendix~\ref{app:cor:shap:to:wmc}:
\begin{prop} \label{prop:shap:to:wmc} Let $F: \mathcal{X} \to \R$ be a
  function with $n$ input features, and $\pr \in \indep_n$ a fully
  factorized distribution over $\mathcal{X}$.  Then (1) for any
  feature $X_j$, $\shap_{F,\pr}(X_j)=\shap_{F_\pi,\pr_\pi}(X_j)$, and
  (2) $\expprob(\set{F_\pi}) \leq^P \expprob(\set{F})$.
\end{prop}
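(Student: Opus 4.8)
The plan is to derive both parts from a single structural observation: the binary events $\{T(\mathbf{x}) : \mathbf{x} \in \set{0,1}^n\}$ form a partition of the original instance space $\mathcal{X}$ (each $\mathbf{y} \in \mathcal{X}$ satisfies exactly the $T(\mathbf{x})$ with $\mathbf{x}_j = 1 \iff \mathbf{y}_j = 1$), and, because $\pr$ is fully factorized, both the block probabilities $\pr(T(\mathbf{x}))$ and the within-block conditional distributions factor over the features. Everything then reduces to bookkeeping with this partition.

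For part (1), I would first establish that for every $S \subseteq [n]$,
\[
\E_{\pr_\pi}[F_\pi \mid \e_S] = \E_{\pr}[F \mid \e_S],
\]
where on the left the event $\e_S$ lives in the binary space and on the right in $\mathcal{X}$. To prove this I expand the left-hand side as a ratio of sums over binary $\mathbf{x}$ with $\mathbf{x}_i = 1$ for all $i \in S$, substitute $\pr_\pi(\mathbf{x}) = \pr(T(\mathbf{x}))$ and $F_\pi(\mathbf{x}) \cdot \pr_\pi(\mathbf{x}) = \sum_{\mathbf{y} \in T(\mathbf{x})} F(\mathbf{y}) \pr(\mathbf{y})$, and use that the blocks $T(\mathbf{x})$ with $\mathbf{x}_i = 1$ for $i \in S$ exactly tile $\setof{\mathbf{y}}{\mathbf{y}_i = 1 \ \forall i \in S}$. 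The numerator collapses to $\sum_{\mathbf{y}: \e_S} F(\mathbf{y}) \pr(\mathbf{y})$ and the denominator to $\pr(\e_S)$, giving the original conditional expectation. Since $\vf_{F,\e,\pr}(\mathbf{X}_S) = \E_\pr[F\mid \e_S]$ is determined entirely by these conditional expectations, the value functions $\vf_{F,\pr}$ and $\vf_{F_\pi,\pr_\pi}$ agree as set functions on $2^{[n]}$; as $\shap$ is a fixed linear functional of $\vf$ (the same permutations and feature index set in both problems), the scores coincide.

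For part (2), given a binary fully-factorized distribution $\mathbf{q} \in \indeptwo_n$ with $q_i = \mathbf{q}(X_i\!=\!1)$ as input to $\expprob(\set{F_\pi})$, I construct a single fully-factorized distribution $\rho \in \indep_n$ over $\mathcal{X}$ that agrees with $\pr$ inside each block but carries block-marginal $\mathbf{q}$: set $\rho(X_i\!=\!1) = q_i$, and for $j \neq 1$ redistribute the remaining mass proportionally, $\rho(X_i\!=\!j) = \tfrac{p_{ij}}{1-p_{i1}}(1-q_i)$. A direct check shows $\rho(T(\mathbf{x})) = \mathbf{q}(\mathbf{x})$ and, since the per-feature conditionals $\rho(X_i\!=\!j \mid X_i\!\neq\!1) = \tfrac{p_{ij}}{1-p_{i1}}$ match those of $\pr$, that $\E_\rho[F \mid T(\mathbf{x})] = \E_\pr[F \mid T(\mathbf{x})] = F_\pi(\mathbf{x})$. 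The law of total expectation then yields
\[
\E_\rho[F] = \sum_{\mathbf{x} \in \set{0,1}^n} \rho(T(\mathbf{x})) \, \E_\rho[F \mid T(\mathbf{x})] = \sum_{\mathbf{x}} \mathbf{q}(\mathbf{x}) \, F_\pi(\mathbf{x}) = \E_{\mathbf{q}}[F_\pi].
\]
Hence a single oracle call to $\expprob(\set{F})$ on $\rho$ returns the desired value, and since $\rho$ is computable in polynomial time from $\mathbf{q}$, this is the required reduction $\expprob(\set{F_\pi}) \leq^P \expprob(\set{F})$.

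The main obstacle I anticipate is the degenerate case in part (2) where $p_{i1} \in \set{0,1}$: the construction divides by $1-p_{i1}$, and $F_\pi$ itself conditions on the probability-zero block $T(\mathbf{x})$ when $\pr(T(\mathbf{x})) = 0$. I would address this by observing that $F_\pi$ is only well-defined when $\pr(T(\mathbf{x})) > 0$ for all $\mathbf{x}$, i.e.\ $0 < p_{i1} < 1$, and that any feature with $p_{i1} \in \set{0,1}$ is deterministic under $\pr$ and can be eliminated by fixing its value, reducing to a smaller well-defined instance. The remaining verifications---that $\rho$ is a genuine fully-factorized distribution and that the block marginals and conditionals are as claimed---are routine.
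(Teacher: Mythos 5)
Your proof is correct and follows essentially the same route as the paper's: part (1) via the partition $\{T(\mathbf{x})\}$ and the identity $\E_{\pr}[F\mid\e_S]=\E_{\pr_\pi}[F_\pi\mid\e_S]$, and part (2) via a single oracle call on a reweighted product distribution whose block marginals equal $\mathbf{q}$ and whose within-block conditionals match $\pr$ --- your $\rho$ is exactly the normalized form of the paper's $\pr'$ (the paper carries the normalization in the constants $Z$ and $W$ and verifies the identity by direct algebraic expansion rather than by the law of total expectation). Your explicit handling of the degenerate case $p_{i1}\in\{0,1\}$ addresses an edge case the paper glosses over.
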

Item (1) states that the \shap-score of $F$ computed over the
probability space $\pr$ is the same as that of its projection $F_\pi$
(which depends on $\pr$) over the projected probability space
$\pr_\pi$.  Item (2) says that, for any probability space over
$\set{0,1}^n$ (not necessarily $\pr_\pi$), we can compute $\E[F_\pi]$
in polynomial time given access to an oracle for computing $\E[F]$.
We can now complete the proof of
Theorem~\ref{thm:shap:to:wmc-general}, by showing that
$\fshap(\set{F},\indep_n) \leq^P \expprob(\set{F})$.  Given a function
$F$ and probability space $\pr \in \indep_n$, in order to compute
$\shap_{F,\pr}(X_j)$, by item (1) of
Proposition~\ref{prop:shap:to:wmc} it suffices to show how to compute
$\shap_{F_\pi,\pr_\pi}(X_j)$.  By Theorem~\ref{thm:shap:to:wmc}, we
can compute the latter given access to an oracle for computing
$\E[F_\pi]$.  Finally, by item (2) of the proposition, we can compute
$\E[F_\pi]$ given an oracle for computing $\E[F]$.

\subsection{Tractable Function Classes}
\label{ss:indep:tractable}

Given the polynomial-time equivalence between computing \shap\ explanations and computing expectations under fully-factorized distributions,
a natural next question is: which real-world hypothesis classes in machine learning support efficient computation of \shap\ scores?
\begin{cor} \label{cor:tractable-cases}
For the following function classes $\ttf$,
computing \shap\ scores
$\fshap(\ttf,\indep)$ is in polynomial time in the size of the representations of function $F \in \ttf$ and fully-factorized distribution $\pr \in \indep$.
\begin{enumerate}
    \item Linear regression models
    \item Decision and regression trees
    \item Random forests or additive tree ensembles
    \item Factorization machines, regression circuits
    \item Boolean functions in d-DNNF, binary decision diagrams
    \item Bounded-treewidth Boolean functions in CNF
\end{enumerate}
\end{cor}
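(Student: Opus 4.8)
The plan is to leverage Theorem~\ref{thm:shap:to:wmc-general}, which establishes that $\fshap(\ttf,\indep) \equiv^P \expprob(\ttf)$: computing \shap\ scores under a fully-factorized distribution is polynomial-time equivalent to computing the expectation $\E_{\pr}[F]$. Hence for each of the six function classes it suffices to exhibit a polynomial-time algorithm for $\expprob(\ttf)$, i.e., for evaluating $\E_{\pr}[F]$ when $\pr$ is fully factorized. For the Boolean classes (items 5 and 6) $F$ takes values in $\set{0,1}$, so $\E_{\pr}[F] = \pr[F=1]$ coincides with the weighted model count of $F$ under the marginal weights $p_i = \pr(X_i=1)$ and $1-p_i$, and tractability reduces to the tractability of weighted model counting for that representation.

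For the arithmetic classes the argument is elementary. For linear regression $F(\mathbf{x}) = w_0 + \sum_i w_i X_i$, linearity of expectation gives $\E_{\pr}[F] = w_0 + \sum_i w_i \E_{\pr}[X_i]$, and each marginal $\E_{\pr}[X_i]$ is read directly from $\pr$. For a single decision or regression tree, every root-to-leaf path is a conjunction of literals over distinct features, so by independence the probability of reaching a leaf is the product of the per-feature probabilities along its path; summing the leaf values weighted by these reachability probabilities yields $\E_{\pr}[F]$ in time linear in the tree size. Random forests and additive tree ensembles are linear combinations $\sum_k \lambda_k F_k$ of trees, so either linearity of expectation or Equation~\ref{eq:shap:linear} reduces them to the single-tree case. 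Factorization machines compute $w_0 + \sum_i w_i X_i + \sum_{i<j}\langle v_i, v_j \rangle X_i X_j$; under independence $\E_{\pr}[X_i X_j] = \E_{\pr}[X_i]\,\E_{\pr}[X_j]$ for $i \neq j$, so the expectation factorizes term by term.

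For the remaining structured classes I would invoke the known tractability of weighted model counting and expectation from the knowledge-compilation literature. For a function in d-DNNF, decomposability lets the expectation distribute over conjunctions (whose children depend on disjoint variables) and determinism lets it sum over disjunctions without overcounting, so a single bottom-up pass computes $\E_{\pr}[F]$ in time linear in the circuit; binary decision diagrams and regression circuits admit the same bottom-up evaluation. For bounded-treewidth CNF, I would compile to a tree decomposition and run the standard dynamic program that propagates partial weighted counts along the decomposition, which is polynomial for fixed treewidth. In each case the polynomial bound is in the size of the given representation of $F$, as required.

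The main obstacle, such as it is, lies not in a single deep step but in correctly matching each representation to the structural property that renders expectation tractable: decomposability and determinism for d-DNNF and BDDs, and a width bound for CNF. The one point demanding care is verifying that $\E_{\pr}[F]$ indeed equals a weighted model count for the Boolean classes, and that this count is computed against the size of the representation rather than the number of models; once this identification is made, the standard tractability results for these representations apply directly, and the equivalence of Theorem~\ref{thm:shap:to:wmc-general} transfers them to \shap.
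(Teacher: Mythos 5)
Your proposal is correct and follows essentially the same route as the paper: invoke Theorem~\ref{thm:shap:to:wmc-general} to reduce $\fshap(\ttf,\indep)$ to $\expprob(\ttf)$, then verify tractability of fully-factorized expectations class by class via linearity, mutual exclusivity of tree paths, factorization of products under independence, and known weighted-model-counting results for d-DNNF and BDDs. The only cosmetic divergence is item~6, where you run the treewidth dynamic program directly while the paper first compiles the bounded-treewidth CNF to an OBDD and then applies item~5; both are standard and equivalent for the stated polynomial bound.
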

These are all consequences of Theorem~\ref{thm:shap:to:wmc-general}, and the fact that computing fully-factorized expectations $\expprob(\ttf)$ for these function classes $\ttf$ is in polynomial time. Concretely, we have the following observations about fully-factorized expectations:
\begin{enumerate}
    \item Expectations of linear regression functions are efficiently computed by mean imputation \citep{DBLP:conf/ijcai/KhosraviLCB19}. The tractability of \shap\ on linear regression models is well known. In fact, \citet{strumbelj2014} provide a closed-form formula for this case.
    \item Paths from root to leaf in a decision or regression tree are mutually exclusive. Their expected value is therefore the sum of expected values of each path, which are tractable to compute within $\indep$; see \citet{KhosraviArtemiss20}.
    \item Additive mixtures of trees, as obtained through bagging or boosting, are tractable, by the linearity of expectation.
    \item Factorization machines extend linear regression models with feature interaction terms and factorize the parameters of the higher-order terms~\citep{rendle2010factorization}. Their expectations remain easy to compute.
    Regression circuits are a graph-based generalization of linear regression. \citet{KhosraviNeurIPS19} provide an algorithm to efficiently take their expectation w.r.t.\ a probabilistic circuit distribution, which is trivial to construct for the fully-factorized case.
\end{enumerate}

The remaining tractable cases are Boolean functions. Computing fully-factorized expectations of Boolean functions is widely known as the \textit{weighted model counting} task (WMC) \citep{sang2005performing,chavira2008probabilistic}. WMC has been
extensively studied both in the theory and the AI communities, and the
precise complexity of $\expprob(\ttf)$ is known for many families of
Boolean functions $\ttf$.  These results immediately carry over to the
$\fshap(\ttf,\indep)$ problem through Theorem~\ref{thm:shap:to:wmc-general}:
\begin{itemize}
\item[5.] Expectations can be computed in time linear in the size of various circuit representations, called d-DNNF, which includes binary decision diagrams (OBDD, FBDD) and SDDs \citep{bryant1986graph,darwiche2002knowledge}.\footnote{In contemporaneous work, \citet{arenas2020tractability} also show that the \shap\ explanation is tractable for d-DNNFs, but for the more restricted class of uniform data distributions.}

\item[6.] Bounded-treewidth CNFs are efficiently compiled into OBDD circuits~\citep{ferrara2005treewidth}, and thus enjoy tractable expectations.
\end{itemize}

To conclude this section, the reader may wonder about the algorithmic complexity of solving $\fshap(\ttf,\indep)$ with an oracle for $\expprob(\ttf)$ under the reduction in Section~\ref{s:equivproof}. Briefly, we require a linear number of calls to the oracle, as well as time in $O(n^3)$ for solving a system of linear equations.
Hence, for those classes, such as d-DNNF circuits, where expectations are linear in the size of the (circuit) representation of $F$, computing $\fshap(\ttf,\indep)$ is also linear in the representation size and polynomial in $n$.

\subsection{Intractable Function Classes}
\label{ss:indep:intractable}

The polynomial-time equivalence of Theorem~\ref{thm:shap:to:wmc-general} also implies that computing \shap\ scores must be intractable whenever computing fully-factorized expectations is intractable.
This section reviews some of those function classes $\ttf$, including some for which the computational hardness of  $\expprob(\ttf)$ is well known. We begin, however, with a more surprising result.

Logistic regression is one of the simplest and most widely used machine learning models, yet it is conspicuously absent from Corollary~\ref{cor:tractable-cases}.
We prove that computing the expectation of a logistic regression model is \#P-hard, even under a uniform data distribution, which is of independent interest.

A {\em logistic regression} model is a parameterized function $F(\bm x)\defeq\sigma(\bm w \cdot \bm x)$, where
$\bm w = (w_0, w_1, \ldots, w_n)$ is a vector of weights,
$\sigma(z) = 1 / (1+e^{-z})$ is the logistic function,
$\bm x \defeq (1, x_1, x_2, \ldots, x_n)$, and
$\bm w \cdot \bm x \defeq \sum_{i=0,n} w_i x_i$ is the dot
product. Note that we define the logistic regression function to output probabilities, not data labels.
Let $\logit_n$ denote the class of logistic regression
functions with $n$ variables, and $\logit = \bigcup_n \logit_n$.
We prove the following:
\begin{thm} \label{th:logistic:regression:hard}
    Computing the expectation of a logistic regression model w.r.t.\ a \textit{uniform} data distribution is \#P-hard.
\end{thm}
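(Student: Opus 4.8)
The plan is to prove hardness by a polynomial-time Turing reduction from a classical \#P-complete counting problem, namely \textsc{\#Knapsack}: given nonnegative integers $a_1,\dots,a_n$ and a target $b$, compute $K_{\le b}\defeq\bigl|\{\bm x\in\{0,1\}^n:\sum_i a_ix_i\le b\}\bigr|$, the number of $0/1$ vectors whose weighted sum is at most $b$. (Equivalently one could reduce from \textsc{\#SubsetSum}, since the number of vectors with sum exactly $b$ is $K_{\le b}-K_{\le b-1}$.) I would exhibit a reduction that, on input $(a_1,\dots,a_n,b)$, makes a \emph{single} call to an oracle for the expectation problem and recovers $K_{\le b}$ by rounding.

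The construction fixes the scaling constant $c\defeq 2(n+2)$ and builds the logistic regression model $F(\bm x)=\sigma(w_0+\sum_{i=1}^n w_ix_i)$ with integer weights $w_i\defeq -c\,a_i$ and bias $w_0\defeq c\,(b+\tfrac12)=(n+2)(2b+1)$, all of polynomial bit length, together with the uniform distribution $\pr$ on $\{0,1\}^n$. For every $\bm x$ the argument of $\sigma$ then equals $c\,(b+\tfrac12-s)$ where $s=\sum_i a_ix_i$, so
\[
2^n\,\E_{\pr}[F]=\sum_{\bm x\in\{0,1\}^n}\sigma\!\left(c\,\bigl(b+\tfrac12-\textstyle\sum_i a_ix_i\bigr)\right).
\]

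The key step is \emph{sigmoid saturation}. Since $s$ is an integer, $b+\tfrac12-s$ is a nonzero half-integer, so the argument of $\sigma$ has magnitude at least $c/2$; using $|\sigma(z)-1|\le e^{-c/2}$ when $z\ge c/2$ and $|\sigma(z)|\le e^{-c/2}$ when $z\le -c/2$, each summand lies within $e^{-c/2}$ of the indicator $\mathbf 1[\,s\le b\,]$. Summing the $2^n$ terms yields
\[
\bigl|\,2^n\,\E_{\pr}[F]-K_{\le b}\,\bigr|\;\le\;2^n e^{-c/2}\;<\;2^n\,2^{-(n+2)}\;=\;\tfrac14,
\]
where the strict inequality uses $e>2$ and $c=2(n+2)$. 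Hence $K_{\le b}$ is exactly the integer nearest to $2^n\,\E_{\pr}[F]$, and the reduction simply outputs $\mathrm{round}\bigl(2^n\,\E_{\pr}[F]\bigr)$; note we only need the oracle's value to additive precision below $2^{-(n+2)}$, which any genuine algorithm for the expectation must supply. As \textsc{\#Knapsack} is \#P-hard and the reduction runs in polynomial time with one oracle call, computing $\E_{\pr}[F]$ for logistic regression over the uniform distribution is \#P-hard.

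The main obstacle is that $\sigma$ is non-linear and its values are transcendental, so no exact algebraic interpolation over the exponentially large range of possible sums $s$ could recover the counts with polynomially many queries. The insight that makes the argument work is to bypass exact algebra entirely: scaling the weights by a factor $c$ that is only \emph{linear} in $n$ forces the sigmoid to saturate to the half-space indicator $\mathbf 1[\sum a_ix_i\le b]$ with per-term error $e^{-c/2}$, which beats the $2^n$ terms and leaves the integer count recoverable by rounding. The one point requiring care is keeping $c$ rational and polynomially bounded while guaranteeing total error below $1/2$; the choice $c=2(n+2)$ achieves both, and the half-integer shift $b+\tfrac12$ ensures $\sigma$ is never evaluated at $0$, where it would contribute the ambiguous value $\tfrac12$.
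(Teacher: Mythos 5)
Your proof is correct, and it rests on the same core device as the paper's proof in Appendix~\ref{app:proof:logistic:regression:hard}: pick integer weights of polynomial bit length, large enough that the sigmoid saturates, so that $2^n\,\E[F]$ lands within less than $1/2$ of an integer count and can be recovered by rounding. The genuine differences are the source problem and how the per-term error is controlled. The paper reduces from counting solutions of number partitioning: with weights $w_i=mk_i$ and bias $-m/2-mc$, a solution set $S$ and its complement $\bar S$ both map to argument $-m/2$, while for a non-solution one of $S,\bar S$ gets a large positive argument and the other a large negative one; the paper therefore must pair each subset with its complement so that each \emph{pair} contributes approximately $0$ or approximately $1$ before it can read off $|P|$. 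Your reduction from counting knapsack solutions avoids the pairing entirely: because you count a threshold constraint $\sum_i a_ix_i\le b$ rather than an equality, and because the half-integer shift $b+\tfrac12$ keeps the argument of $\sigma$ bounded away from $0$, every \emph{individual} summand is within $e^{-c/2}$ of an indicator, and the error analysis reduces to a one-line triangle inequality. You also fix the scaling constant explicitly as $c=2(n+2)$, whereas the paper only stipulates that $m$ be large enough that $2\sigma(-m/2)$ and $1-\sigma(m/2)$ are at most $2^{-(n+3)}$; both choices are polynomial, yours is just more concrete. Finally, both arguments carry the same implicit caveat, which you at least make explicit: since $\E[F]$ is irrational, the oracle must be read as supplying the expectation to sufficient additive precision (here $2^{-(n+2)}$, i.e., polynomially many bits).
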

The full proof in Appendix~\ref{app:proof:logistic:regression:hard} is by reduction from counting solutions to the number partitioning problem.

Because the uniform distribution is contained in $\indep$, and following Theorem~\ref{thm:shap:to:wmc-general}, we immediately obtain:
\begin{cor} \label{cor:logistic:regression:hard}
The computational problems $\expprob(\logit)$ and $\fshap(\logit,\indep)$ are both \#P-hard.
\end{cor}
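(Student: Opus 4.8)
The plan is to give a Cook reduction from the counting version of number partitioning (\#SubsetSum): given positive integers $a_1,\dots,a_n$ and a target $b$, compute the number $c_b \defeq \#\{S\subseteq[n]:\sum_{i\in S}a_i=b\}$ of subsets with sum exactly $b$. This problem is \#P-complete (standard, and can be cited). I will show that an oracle for $\expprob(\logit)$ restricted to the uniform distribution computes $c_b$ with only two calls.

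The central observation is that for a logistic model $F(\bm x)=\sigma\big(w_0+\sum_{i=1}^n w_i x_i\big)$ over $n$ binary features under the uniform distribution,
\[
2^n\cdot\E[F]=\sum_{S\subseteq[n]}\sigma\Big(w_0+\textstyle\sum_{i\in S}w_i\Big).
\]
I would set $w_i\defeq\lambda a_i$ for $i\ge 1$ and the bias $w_0\defeq\lambda(\tfrac12-\tau)$, for a scaling factor $\lambda>0$ and an integer threshold $\tau$, so that each summand becomes $\sigma\big(\lambda(\sum_{i\in S}a_i-\tau+\tfrac12)\big)$. Since every subset sum is an integer, the argument of $\sigma$ has absolute value at least $\lambda/2$ and is positive exactly when $\sum_{i\in S}a_i\ge\tau$; using $\sigma(-y)\le e^{-y}$, each term is within $e^{-\lambda/2}$ of the indicator $\mathbf{1}[\sum_{i\in S}a_i\ge\tau]$. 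Consequently $2^n\E[F]$ approximates the knapsack count $N(\tau)\defeq\#\{S:\sum_{i\in S}a_i\ge\tau\}$ with total error at most $2^n e^{-\lambda/2}$.

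Choosing $\lambda$ polynomial in $n$ (for instance $\lambda=2(n+2)$, a rational of polynomial bit-size) forces $2^n e^{-\lambda/2}<\tfrac14$. Requesting the oracle's value of $\E[F]$ to $n+2$ bits of precision contributes at most another $\tfrac14$ of error in $2^n\E[F]$, so $N(\tau)=\operatorname{round}\!\big(2^n\E[F]\big)$ is recovered exactly. Two oracle calls, at $\tau=b$ and $\tau=b+1$, then yield $c_b=N(b)-N(b+1)$, completing the reduction; all weights and the bias are polynomial-size rationals, and only $O(1)$ oracle calls of polynomial precision are used.

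The main obstacle is that $\E[F]$ is in general transcendental, so the oracle can only return a finite-precision approximation; the crux of the argument is to certify that polynomially many bits suffice to round correctly to the integer $N(\tau)$. This is precisely where the integrality of the subset sums is essential: it guarantees a fixed margin ($\tfrac12$) between the ``on'' and ``off'' arguments of $\sigma$, so a polynomially large scaling $\lambda$ separates the two cases enough that the accumulated approximation error across all $2^n$ subsets, together with the oracle's rounding error, stays strictly below $\tfrac12$. Getting this margin bookkeeping right (the choice of $\lambda$, the half-integer shift in $w_0$, and the required oracle precision) is the only delicate part; everything else is routine.
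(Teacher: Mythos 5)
Your reduction is correct and rests on the same core device as the paper's proof of Theorem~\ref{th:logistic:regression:hard}: encode a subset-sum-type instance into the weights, scale by a factor polynomial in $n$ so that the sigmoid saturates to a $0/1$ indicator, and exploit the integrality of the subset sums to guarantee a fixed margin, so that the accumulated error over all $2^n$ subsets stays below $1/2$. The difference lies in how the count is extracted. The paper reduces from the counting version of $\numpar$ and pairs each $S$ with its complement $\bar S$: for a valid partition both weights equal $-m/2$ (so the pair contributes nearly $0$), while otherwise exactly one weight is large and positive (so the pair contributes nearly $1$); this isolates the number of partitions with a single oracle call and a ceiling operation. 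You instead reduce from counting subsets with sum exactly $b$, realize the cumulative count $N(\tau)=\#\{S:\sum_{i\in S}a_i\ge\tau\}$ via the half-integer shift in the bias, and recover the exact count as $N(b)-N(b+1)$ with two calls; this is arguably more transparent, at the cost of one extra oracle query. You also make explicit a point the paper glosses over: $\E[F]$ is irrational, so the oracle can only be consulted to finite precision, and your bookkeeping certifies that $O(n)$ bits suffice for correct rounding --- a genuine improvement in rigor. One small omission: the corollary also asserts hardness of $\fshap(\logit,\indep)$, and your argument only establishes hardness of $\expprob(\logit)$; you should add the one line that the second claim follows from the reduction $\expprob(\set{F})\leq^P\fshap(\set{F},\indep_n)$ given by Equation~\ref{eq:shap:sum} (the easy direction of Theorem~\ref{thm:shap:to:wmc-general}), since the uniform distribution lies in $\indep$.
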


We are now ready to list general function classes for which computing the \shap\ explanation is \#P-hard.
\begin{cor} \label{cor:intractable-classes}
For the following function classes $\ttf$, computing \shap\ scores
$\fshap(\ttf,\indep)$ is \#P-hard in the size of the representations of function $F \in \ttf$ and fully-factorized distribution $\pr \in \indep$.
\begin{enumerate}
    \item Logistic regression models (Corollary~\ref{cor:logistic:regression:hard})
    \item Neural networks with sigmoid activation functions
    \item Naive Bayes classifiers, logistic circuits
    \item Boolean functions in CNF or DNF
\end{enumerate}
\end{cor}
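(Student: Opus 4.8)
The plan is to route everything through Theorem~\ref{thm:shap:to:wmc-general}, which gives $\fshap(\ttf,\indep) \equiv^P \expprob(\ttf)$ for every function class. This reduces the corollary to proving that the fully-factorized expectation problem $\expprob(\ttf)$ is \#P-hard for each of the four classes; the transfer back to $\fshap$ is then automatic. Class~1 is already done: Corollary~\ref{cor:logistic:regression:hard} asserts \#P-hardness of both $\expprob(\logit)$ and $\fshap(\logit,\indep)$.

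For the remaining classes I would argue either by syntactic containment or by reusing the number-partitioning reduction behind Theorem~\ref{th:logistic:regression:hard}. For neural networks with sigmoid activations and for logistic circuits, a single sigmoid unit (respectively, a trivial logistic circuit) computes exactly $F(\bm x) = \sigma(\bm w \cdot \bm x)$, so $\logit$ embeds into these classes with only a polynomial increase in representation size. Hence $\expprob(\logit) \leq^P \expprob(\ttf)$, and \#P-hardness is inherited. For Boolean functions in CNF or DNF, computing $\E_{\pr}[F]$ under a fully-factorized $\pr$ is precisely weighted model counting; specializing $\pr$ to the uniform distribution makes $2^n \cdot \E[F]$ equal to the number of satisfying assignments of $F$, and counting satisfying assignments of a CNF (equivalently a DNF) is the canonical \#P-hard problem, so $\expprob(\ttf)$ is \#P-hard for these classes too.

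The one case that needs genuine care is naive Bayes classifiers. Here $F$ is the posterior $F(\bm x) = \pr(C{=}1 \mid \bm x)$, which over binary features equals $\sigma\!\left(w_0 + \sum_i w_i x_i\right)$ with each weight a log-likelihood ratio $w_i = \log\frac{\theta_{i,1}(1-\theta_{i,0})}{\theta_{i,0}(1-\theta_{i,1})}$, where $\theta_{i,c} = \pr(X_i{=}1 \mid C{=}c)$. Thus a naive Bayes classifier realizes exactly those logistic regression functions whose weights lie in $\set{\log r : r \in \Q_{>0}}$, and this is the crux: the weights produced in Corollary~\ref{cor:logistic:regression:hard} are rationals, not logarithms of rationals, so I cannot embed a given logistic-regression instance verbatim.

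Instead I would re-run the reduction from the number partitioning problem directly, encoding the input integers $a_1,\dots,a_n$ multiplicatively: fix a rational base $b$ and set the likelihood ratio to $r_i = b^{a_i}$, so the induced weight is $w_i = a_i \log b$. Since $\sum_{i\in S} a_i = \sum_{i\notin S} a_i$ iff $\prod_{i\in S} b^{a_i} = \prod_{i\notin S} b^{a_i}$, the common factor $\log b$ does not change which subsets are balanced; it only rescales the argument of $\sigma$. I expect this multiplicative re-encoding to be the main obstacle, since it requires checking that the interpolation/counting argument underlying Theorem~\ref{th:logistic:regression:hard} survives the global temperature change induced by $\log b$, and that $b$ and rational $\theta_{i,c}$ can be chosen so every target weight is simultaneously realizable by an honest naive Bayes parameterization. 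Once that is verified, $\expprob(\text{naive Bayes})$ is \#P-hard, and Theorem~\ref{thm:shap:to:wmc-general} transfers all four hardness results to $\fshap(\ttf,\indep)$.
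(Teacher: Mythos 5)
Your proposal follows the same skeleton as the paper: everything is routed through Theorem~\ref{thm:shap:to:wmc-general} to reduce the corollary to \#P-hardness of $\expprob(\ttf)$, item~1 is Corollary~\ref{cor:logistic:regression:hard}, items~2 and the logistic-circuit half of item~3 are handled by observing that these classes syntactically subsume $\logit$, and item~4 is weighted model counting, which under the uniform distribution is ordinary \#SAT/\#DNF counting (the paper additionally cites hardness for monotone 2CNF/2DNF, a stronger fact you do not need for the statement as written). The one place you genuinely diverge is naive Bayes: the paper dispatches it with a one-line appeal to the generative--discriminative equivalence of \citet{ng2002discriminative}, whereas you correctly observe that a rational-parameter naive Bayes classifier realizes only logistic regressions whose weights are logarithms of rationals, so the hard instances of Theorem~\ref{th:logistic:regression:hard} (which use integer weights $w_i = mk_i$) cannot be embedded verbatim. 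Your fix --- re-running the number-partitioning reduction with likelihood ratios $b^{a_i}$ for a rational base $b$, so that all weights acquire a common factor $\log b$ that preserves which subsets are balanced and only rescales the argument of $\sigma$ --- is sound: the interpolation in the appendix proof only needs the scale to satisfy $2\sigma(-m/2)\leq\varepsilon$ and $1-\sigma(m/2)\leq\varepsilon$ with $\varepsilon=2^{-(n+3)}$, which holds for $m=\log b$ with $b$ of polynomial bit-length, and the offset and the remaining naive Bayes parameters can be kept rational (e.g., by taking $b$ a perfect square). So your treatment of item~3 is more careful than the paper's and closes a gap the paper leaves to the cited equivalence; the other three items are essentially identical to the paper's argument.
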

\noindent Our intractability results stem from these observations:
\begin{enumerate}
    \item[2.] Each neuron is a logistic regression model, and therefore this class subsumes $\logit$.
    \item[3.] The conditional distribution used by a naive Bayes classifier is known to be equivalent to a logistic regression model~\citep{ng2002discriminative}. Logistic circuits are a graph-based classification model that subsumes logistic regression~\citep{LiangAAAI19}.
    \item[4.] For general CNFs and DNFs, weighted model counting, and therefore $\expprob(\ttf)$ is \#P-hard. This is true even for very restricted classes, such as monotone 2CNF and 2DNF functions, and Horn clause logic \citep{wei2005new}.
\end{enumerate}

\section{Beyond Fully-Factorized Distributions}
\label{sec:beyond-indep}

Features in real-world data distributions are not
independent. In order to capture more realistic assumptions about the data when computing \shap\ scores, one needs a more intricate probabilistic model. In this section we prove that the
complexity of computing the $\shap$-explanation quickly becomes
intractable, even over the simplest probabilistic models, namely naive
Bayes models. To make computing the $\shap$-explanation as easy as
possible, we will assume that the function $F$ simply outputs the value of one feature. We show that even in this case, even for function classes that are tractable under fully-factorized distributions, computing $\shap$ explanations becomes computationally hard.

Let $\nbn_n$ denote the family of naive Bayes networks over $n+1$
variables $\mathbf{X} = \set{X_0, X_1, \ldots, X_n}$, with binary domains, where $X_0$ is
a parent of all features:
\begin{align*}
  \pr(\mathbf{X}) =  \pr(X_0)\cdot \prod_{i=1,n} \pr(X_i|X_0).
\end{align*}
As usual, the class
$\nbn\defeq \bigcup_{n \geq 0} \nbn_n$.
We write $X_0$ for the function $F$ that returns the value of feature $X_0$; that is, $F(\mathbf{x})=\mathbf{x}_0$.   
We prove the following.
\begin{thm} \label{th:nbn:hard}
  The decision problem $\dshap(\set{X_0}, \nbn)$ is NP-hard.
\end{thm}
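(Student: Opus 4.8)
The plan is to give a Cook reduction from Subset-Sum, which is NP-complete: given positive integers $w_1,\dots,w_n$ and a target $B$, decide whether some $S\subseteq[n]$ has $\sum_{i\in S}w_i=B$. Since the explained instance is $\e=(1,\dots,1)$ and $F$ returns $X_0$, the value of a coalition $S$ is $\E[X_0\mid\e_S]=\Pr(X_0=1\mid\e_S)$, and only coalitions avoiding $X_0$ contribute to $\shap(X_0)$ (adding $X_0$ to a coalition already forces $X_0=1$, so $\E[X_0\mid\e_{S\cup\{0\}}]=1$). Writing $\alpha=\Pr(X_0=1)$, $\beta=\Pr(X_0=0)$, $a_i=\Pr(X_i=1\mid X_0=1)$, $b_i=\Pr(X_i=1\mid X_0=0)$, naive-Bayes independence gives the familiar logistic posterior
\[ \Pr(X_0=1\mid\e_S)=\frac{1}{1+c\prod_{i\in S}t_i},\qquad c=\beta/\alpha,\ \ t_i=b_i/a_i, \]
so that $\shap(X_0)=\sum_{S\subseteq[n]}\tfrac{1}{(n+1)\binom{n}{|S|}}\bigl(1-\Pr(X_0=1\mid\e_S)\bigr)$.

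First I would encode the instance. Fix a large integer base $\lambda=\mu^2$ (a perfect square, to keep all numbers rational) and choose conditional probabilities realizing $t_i=\lambda^{w_i}$ (e.g.\ $b_i=\tfrac12$, $a_i=\tfrac12\lambda^{-w_i}$), so that $\prod_{i\in S}t_i=\lambda^{W(S)}$ with $W(S)=\sum_{i\in S}w_i$. Setting the prior so that $c=\lambda^{-\theta}$ for a half-integer $\theta$ turns each summand into a pure logistic threshold, $1-\Pr(X_0=1\mid\e_S)=\sigma\bigl(\ln\lambda\,(W(S)-\theta)\bigr)$, where $\sigma$ is the logistic function. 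Let $\Phi(\theta)$ denote the resulting value of $\shap(X_0)$; for large $\lambda$ each summand is exponentially close to the indicator $\mathbb{1}[W(S)>\theta]$.

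The detection step uses the two thresholds $\theta=B-\tfrac12$ and $\theta=B+\tfrac12$ and their difference. Since $\sigma$ is increasing, every subset contributes nonnegatively to $\Phi(B-\tfrac12)-\Phi(B+\tfrac12)$; a subset with $W(S)=B$ contributes $\tfrac{1}{(n+1)\binom{n}{|S|}}\cdot\tfrac{\lambda^{1/2}-1}{\lambda^{1/2}+1}\ge\tfrac{1}{2(n+1)2^{n}}$, whereas the total contribution of all subsets with $W(S)\neq B$ is at most $\lambda^{-1/2}$ (each off-target subset contributes at most $\lambda^{-1/2}$, and the Shapley weights sum to $1$). Choosing $\lambda>16(n+1)^2 2^{2n}$—an $O(n)$-bit number—separates the cases: the difference exceeds $\tfrac{1}{2(n+1)2^{n}}$ when some subset sums to $B$, and is below $\tfrac{1}{4(n+1)2^{n}}$ otherwise. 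Because $\Phi(\cdot)\in[0,1]$, I can pin down each of $\Phi(B-\tfrac12)$, $\Phi(B+\tfrac12)$ to $2^{-\mathrm{poly}(n)}$ precision with polynomially many calls to the $\dshap$ oracle by binary search on the threshold $t$, subtract, and compare against $\tfrac{3}{8(n+1)2^{n}}$; this decides Subset-Sum in polynomial time.

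The main obstacle is that the Shapley coefficients reweight coalitions by size as $1/\binom{n}{|S|}$, so $\shap(X_0)$ is \emph{not} the uniform average of the logistic function over subsets and cannot be read off as a single weighted model count. What makes the decision version go through is that I never need to invert this reweighting: the difference $\Phi(B-\tfrac12)-\Phi(B+\tfrac12)$ is a nonnegative combination of the level-counts $\#\{S:|S|=k,\ W(S)=B\}$, hence bounded away from $0$ exactly when a solution exists. The remaining care is purely quantitative—ensuring the soft-threshold error, summed over all $2^n$ coalitions, stays below the exponentially small but nonzero gap created by a single exact-sum subset—which forces $\lambda$ to carry $\Theta(n)$ bits. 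Disentangling the size-reweighting to recover the exact counts, and thereby strengthen this to \#P-hardness, would require extra machinery (e.g.\ dummy features and solving a linear system) and is not needed here.
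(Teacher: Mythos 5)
Your construction is, at its core, the same gadget the paper uses: encode the input numbers into the likelihood ratios of a naive Bayes model so that $\E[X_0\mid \e_S]$ is a logistic function of $\sum_{i\in S} w_i$ shifted by the prior, then exploit the closed form $\shap(X_0)=\sum_{S\subseteq[n]} \frac{|S|!\,(n-|S|)!}{(n+1)!}\bigl(1-\E[X_0\mid\e_S]\bigr)$. Where you genuinely diverge is the detection step. The paper reduces from \textsc{Partition} and pairs each $S$ with its complement $\bar S$: with the prior centered at half the total sum, $\sigma(\mathrm{weight}(S))+\sigma(\mathrm{weight}(\bar S))$ is close to $0$ exactly for solution pairs and close to $1$ otherwise, so a \emph{single} oracle call with threshold $\tfrac12(1+\varepsilon)$ decides the instance. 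You reduce from \textsc{Subset-Sum}, instantiate the network at two prior settings $\theta=B\pm\tfrac12$, and detect a solution from the finite difference of the two \shap\ values, each recovered to $2^{-O(n)}$ precision by binary search on the threshold $t$; this costs polynomially many oracle calls rather than one, but it is a valid Cook reduction under the paper's definition of $\leq^P$, and your quantitative analysis is sound (a solution subset contributes at least $\frac{1}{2(n+1)2^{n}}$ to the difference since $\frac{\lambda^{1/2}-1}{\lambda^{1/2}+1}\ge\tfrac12$, all off-target subsets together contribute at most $\lambda^{-1/2}$, and $\lambda>16(n+1)^2 2^{2n}$ separates the two cases). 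Your variant buys some robustness --- it does not rely on the complement symmetry specific to \textsc{Partition}, and taking $\lambda=\mu^2$ keeps every parameter exactly rational, whereas the paper's ratios $e^{mk_i}$ need a rational surrogate --- at the price of many oracle calls instead of one. One caveat you inherit from the shared construction rather than introduce: realizing the ratio $\lambda^{w_i}$ (respectively $e^{mk_i}$) with explicit rational probabilities takes $\Theta(w_i\log\lambda)$ bits, which is exponential in the bit-length of $w_i$; neither write-up addresses this, and it affects both detection mechanisms equally.
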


The proof in  Appendix~\ref{app:th:nbn:hard} is by reduction from the number partitioning problem, similar to the proof of
Corollary~\ref{cor:logistic:regression:hard}.  
We note that the subset sum problem was also used to prove related hardness results, e.g., for proving hardness of the Shapely value in network games~\cite{DBLP:conf/atal/ElkindGGW08}.

This result is in sharp contrast with the complexity of the
$\shap$ score over fully-factorized distributions in Section~\ref{sec:positive-result}.  
There, the complexity was dictated by the choice of function class $\ttf$.  
Here, the function is as simple as possible, yet computing $\shap$ is hard.  This ruins any hope of
achieving tractability by restricting the function, and this motivates
us to restrict the probability distribution in the next section.  
This result is also surprising because it is efficient to compute marginal probabilities (such as the expectation of $X_0$) and conditional probabilities in naive Bayes distributions.

Theorem~\ref{th:nbn:hard} immediately extends to a large class of probability distributions and functions.
We say that $F$ {\em depends only on $X_i$} if there exist two
constants $c_0 \neq c_1$ such that $F(\mathbf{x}) = c_0$
when $\mathbf{x}_i=0$ and $F(\mathbf{x}) = c_1$ when $\mathbf{x}_i=1$.  In other
words, $F$ ignores all variables other than $X_i$, and {\em does}
depend on $X_i$. We then have the following.
\begin{cor} \label{cor:nbn:hard:consequences} The problem
  $\dshap(\ttf,\pbsp)$ is NP-hard, when $\pbsp$ is any of the
  following classes of distributions:
  \begin{enumerate}
  \item Naive Bayes, bounded-treewidth Bayesian networks
  \item Bayesian networks Markov networks, Factor graphs
  \item Decomposable probabilistic circuits
  \end{enumerate}
  and when $\ttf$ is any class that contains some function $F$ that depends only on $X_0$, including the class of linear regression models and all the classes listed in Corollaries~\ref{cor:tractable-cases} and \ref{cor:intractable-classes}.
\end{cor}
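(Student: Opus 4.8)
The plan is to prove NP-hardness by a single Cook reduction that lifts the one hard instance we already have, namely $\dshap(\set{X_0},\nbn)$ from Theorem~\ref{th:nbn:hard}, into each of the claimed settings; once we establish $\dshap(\set{X_0},\nbn) \leq^P \dshap(\ttf,\pbsp)$, hardness follows immediately. The reduction factors into two independent liftings: enlarging the distribution class from $\nbn$ to the target $\pbsp$, and generalizing the function from the specific projection $X_0$ to an arbitrary $F \in \ttf$ that depends only on $X_0$. Neither lifting touches the feature $X$ or the naive Bayes parameters; the instance is copied verbatim, only its interpretation as a member of the larger classes is adjusted, together with a rescaling of the threshold $t$.

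For the distribution lifting I would check that every naive Bayes network is, after a size-preserving syntactic transformation, a member of each listed class. A naive Bayes network is a Bayesian network with star topology, hence has treewidth one, which places it in both the bounded-treewidth and the general Bayesian-network classes. Moralizing it adds no edges, since every feature has the single parent $X_0$, so the same star is a Markov network / factor graph with potentials $\pr(X_0)$ and $\pr(X_i \mid X_0)$. Finally a decomposable probabilistic circuit for $\pr$ is built from a single sum node over the two values of $X_0$ feeding, for each value, a product over the disjoint-scope leaves $\pr(X_i \mid X_0)$; disjointness of these scopes is exactly the decomposability condition. Each construction is immediate, so $\nbn$ effectively sits inside every $\pbsp$ in the statement.

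For the function lifting I would exploit the linearity of the \shap\ score. Since $X_0$ is binary, any $F$ depending only on $X_0$ is the affine function $F = c_0 + (c_1-c_0)\,X_0$ with $c_0 \neq c_1$. A constant function has \shap\ score zero in every feature, because its value function $\vf$ is constant and so every contribution $\cf$ vanishes; Equation~\ref{eq:shap:linear} then gives $\shap_F(X) = (c_1-c_0)\,\shap_{X_0}(X)$. Writing $a = c_1 - c_0 \neq 0$, the query ``$\shap_{X_0}(X) > t$'' is equivalent to ``$\shap_F(X) > a t$'' when $a>0$, and to the sign-reversed comparison when $a<0$; the latter is recovered from the ``$>$'' oracle because all quantities are rationals of polynomially bounded bit-length, so strict and non-strict comparisons are interchangeable up to a poly-time-computable slack. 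To reach the concrete classes I would note that each contains such a function: linear regression (weights selecting only $X_0$), decision trees (a single split on $X_0$), the Boolean families (the single literal $X_0$), and likewise for every class of Corollaries~\ref{cor:tractable-cases} and \ref{cor:intractable-classes}, so the hypothesis on $\ttf$ is always met.

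The main obstacle is not any single deep step but making the two liftings airtight at once: one must confirm that the distribution embeddings genuinely satisfy the formal definition of each target class — in particular that moralization of naive Bayes introduces no new edges and that the constructed circuit really satisfies decomposability — and that the threshold rescaling together with the sign bookkeeping yields a faithful Cook reduction for either sign of $c_1-c_0$. All the genuine computational difficulty is already isolated in Theorem~\ref{th:nbn:hard}; the corollary only has to transport it upward without accidentally introducing tractable structure.
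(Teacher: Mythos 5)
Your proposal is correct and follows essentially the same two-step argument as the paper: embed the naive Bayes instance of Theorem~\ref{th:nbn:hard} into each distribution class, and use the affine form $F=(c_1-c_0)X_0+c_0$ together with the linearity of the \shap\ score (Equation~\ref{eq:shap:linear}) to transfer hardness to any $F$ depending only on $X_0$. Your extra care about rescaling the threshold and handling the sign of $c_1-c_0$ is a welcome detail the paper leaves implicit, but it does not change the route.
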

This corollary follows from two simple observations.
First, each of the classes of probability distributions listed in the
corollary can represent a naive Bayes network over binary
variables $\mathbf{X}$.  For example, a Markov network will
consists of $n$ factors $f_1(X_0,X_1), f_2(X_0,X_2), \ldots, f_n(X_0,X_n)$; similar simple arguments prove that all the other classes can represent naive Bayes, including tractable probabilistic circuits such as sum-product networks~\citep{tutorial-pc}.

Second, for each function that depends only on $X_0$, there exist two distinct constants $c_0\neq c_1 \in \R$ such
that $F(\mathbf{x})=c_0$ when $\mathbf{x}_0=0$ and $F(\mathbf{x})=c_1$ when $\mathbf{x}_0=1$.  For example, if
we consider the class of logistic regression functions
$F(\mathbf{x})=\sigma(\sum_i w_i \mathbf{x}_i)$, then we choose the weights $w_0 = 1$,
$w_1=\ldots=w_n=0$ and we obtain $F(\mathbf{x})=1/2$ when $\mathbf{x}_0=0$ and
$F(\mathbf{x})=1/(1+e^{-1})$ when $\mathbf{x}_0=1$.  Then, over the binary domain $\set{0,1}$ the function is equivalent to $F(\mathbf{x})=(c_1-c_0)\mathbf{x}_0+c_0$,
and, therefore, by the linearity of the $\shap$ explanation
(Equation~\ref{eq:shap:linear}) we have
$\shap_F(X_0) = (c_1-c_0) \cdot \shap_{X_0}(X_0)$ (because the
$\shap$ explanation of a constant function $c_0$ is 0) for which, by
Theorem~\ref{th:nbn:hard}, the decision problem is NP-hard.

We end this section by proving that Theorem~\ref{th:nbn:hard}
continues to hold even if the prediction function $F$ is the value
of some leaf node of a (bounded treewidth) Bayesian Network.  In other words, the hardness
of the $\shap$ explanation is not tied to the function returning
the root of the network, and applies to more general functions.  
\begin{cor} \label{cor:nbn:hard} The $\shap$ decision problem for Bayesian networks with latent variables is NP-hard, even if the function $F$ returns a single leaf variable of the network.
\end{cor}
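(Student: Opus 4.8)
The plan is to give a polynomial-time reduction from the problem proved NP-hard in Theorem~\ref{th:nbn:hard}, namely $\dshap(\set{X_0},\nbn)$, in which the prediction function $F=X_0$ returns the \emph{root} of a naive Bayes network over the observed features $\set{X_0,X_1,\ldots,X_n}$. The goal is to move the prediction target onto a leaf without changing any \shap\ score, which I would achieve by attaching a deterministic copy of the root to the network as a fresh leaf and then hiding the root.

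Concretely, given a naive Bayes instance with $\pr(\mathbf{X})=\pr(X_0)\prod_{i=1}^n\pr(X_i\mid X_0)$, I would build a Bayesian network over $X_0,X_1,\ldots,X_n,X_{n+1}$ by adding one further child $X_{n+1}$ of $X_0$ whose conditional table is the identity, $\pr(X_{n+1}\!=\!x\mid X_0\!=\!x)=1$, so that $X_{n+1}=X_0$ with probability one. I then declare $X_0$ to be \emph{latent}: the observed features become the leaves $\set{X_1,\ldots,X_n,X_{n+1}}$, and the new prediction function is $F'=X_{n+1}$, which returns a single leaf of the network. Since $X_0$ remains the unique parent of every other node, the moral graph is a star and the treewidth is $1$, which is what lets the corollary assert hardness already for bounded-treewidth Bayesian networks with latent variables. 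The construction adds one node and one conditional table, hence is polynomial.

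The core of the argument is that this transformation preserves every \shap\ score. First I would marginalize out the latent $X_0$: because $X_{n+1}=X_0$ forces the sum over $X_0$ to collapse to the single term $X_0=X_{n+1}$, the induced distribution over the observed leaves satisfies $\pr(X_1,\ldots,X_n,X_{n+1})=\pr(X_{n+1})\prod_{i=1}^n\pr(X_i\mid X_{n+1})$, which is precisely the original naive Bayes distribution with $X_0$ renamed to $X_{n+1}$. Second, since $F'=X_{n+1}$ is $F=X_0$ under the same renaming, the value functions agree: for every set $S$ of observed features, $\E[F'\mid\e_S]=\pr(X_{n+1}\!=\!1\mid\e_S)$ equals $\pr(X_0\!=\!1\mid\e_S)=\E[F\mid\e_S]$ in the original instance under the relabeling $X_0\leftrightarrow X_{n+1}$. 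Hence $\shap_{F'}$ over $\set{X_1,\ldots,X_n,X_{n+1}}$ coincides with $\shap_{F}$ over $\set{X_1,\ldots,X_n,X_0}$, the question ``$\shap(X)>t$'' maps identically, and NP-hardness carries over.

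The step I expect to demand the most care is that last value-function identity in the case where the copy $X_{n+1}$ itself belongs to the conditioning set $S$. There I must use the deterministic coupling $X_{n+1}=X_0$ so that conditioning on $X_{n+1}\!=\!1$ is literally conditioning on $X_0\!=\!1$, matching the original instance in which $X_0$ was an observed feature; verifying this faithful substitution uniformly over all partial instances is the one place where a sloppy treatment of the latent variable could introduce an error. Once it is checked, the remainder is pure relabeling and the reduction is complete.
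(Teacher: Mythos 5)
Your proposal is correct and follows essentially the same route as the paper's proof: both attach a deterministic copy of the root as a fresh leaf, declare the root latent, and observe that the induced distribution and prediction function over the observed features are unchanged up to renaming, so every \shap\ score (and hence the decision problem of Theorem~\ref{th:nbn:hard}) carries over verbatim. The only cosmetic difference is that the paper uses a chain of two deterministic copies $X_0 \rightarrow X_{n+1} \rightarrow X_{n+2}$ with $F = X_{n+2}$ and both $X_0, X_{n+1}$ latent, whereas you add a single copy; both constructions produce a leaf and the argument is identical.
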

\noindent The full proof is given in Appendix~\ref{app:leaf-hard}.

\section{\shap\ on Empirical Distributions}
\label{sec:empirical}

In supervised learning one does not require a generative model of the
data, instead, the model is trained on some concrete data set: the
{\em training data}.  When some probabilistic model is needed, then
the training data itself is conveniently used as a probability model,
called the {\em empirical distribution}.  This distribution captures
dependencies between features, while its set of possible worlds is limited to those in the data set.  For example, the intent of the KernelSHAP
algorithm by~\citet{DBLP:conf/nips/LundbergL17} is to compute the
$\shap$ explanation on the empirical distribution.
In another example,~\citet{aas2019} extend KernelSHAP to work with dependent features, by estimating the conditional probabilities from the empirical distribution.

Compared to the data distributions considered in the previous sections, the empirical distribution has one key advantage: it has many fewer possible worlds with positive probability -- this suggests increased tractability.
Unfortunately, in this section, we prove that computing the $\shap$ explanation
over the empirical distribution is \#P-hard in general.

To simplify the presentation, this section assumes that all
features are binary: $\dom(X_j)=\set{0,1}$.  The probability distribution is
given by a 0/1-matrix $\bm d = (x_{ij})_{i \in [m], j \in [n]}$, where
each row $(x_{i1},\ldots, x_{in})$ is an outcome with probability
$1/m$.  One can think of $\bm d$ as a dataset with $n$ features and
$m$ data instances, where each row $(x_{i1}, \ldots, x_{in})$ is one data
instance.  Repeated rows are possible: if a row occurs $k$ times, then its
probability is $k/m$.  We denote by $\ttx$ the class of empirical
distributions.
The predictive function can be any function
$F : \set{0,1}^n \rightarrow \R$.  As our data distribution is no longer strictly positive, we adopt the standard convention
that $\E[F | \mathbf{X}_S=1] =0$ when $\pr(\mathbf{X}_S=1)=0$.

Recall from Section~\ref{subsec:computational:problems} that, by
convention, we compute the $\shap$-explanation w.r.t. instance
$\e=(1,1,\ldots,1)$, which is without loss of generality.
Somewhat surprisingly, the complexity of computing the
$\shap$-explanation of a function $F$ over the empirical distribution
given by a matrix $\bm d$ is related to the problem of computing the
expectation of a certain CNF formula associated to $\bm d$.
\begin{defn} \label{def:phi:matrix} The {\em positive, partitioned 2CNF formula}, PP2CNF,
  associated to a matrix $\bm d \in \set{0,1}^{m \times n}$ is:
  \begin{align*}
      \Phi_{\bm d} \defeq \bigwedge_{(i,j): x_{ij}=0} (U_i \vee V_j).
  \end{align*}
\end{defn}

Thus, a PP2CNF formula is over $m+n$ variables
$U_1,\ldots,U_m,V_1,\ldots,V_n$, and has only positive clauses.  The
matrix $\bm d$ dictates which clauses are present.  A {\em
  quasy-symmetric probability distribution} is a fully factorized
distribution over the $m+n$ variables for which there exists two
numbers $p,q \in [0,1]$ such that for every $i=1,m$, $\pr(U_i=1)=p$ or
$\pr(U_i=1)=1$, and for every $j=1,n$, $\pr(V_j=1)=q$ or $\pr(V_j=1)$.
In other words, all variables $U_1, \ldots, U_m$ have the same
probability $p$, or have probability $1$, and similarly for the
variables $V_1, \ldots, V_n$.  We denote by $\expquasiprob(\ppcnf)$
the expectation computation problem for PP2CNF over quasi-symmetric
probability distributions.  $\expquasiprob(\ppcnf)$ is \#P-hard,
because computing $\E[\Phi_{\bm d}]$ under the uniform distribution
(i.e.  $\pr(U_1=1)=\cdots=\pr(V_n=1)=1/2$) is
\#P-hard~\cite{DBLP:journals/siamcomp/ProvanB83}.  We prove:
\begin{thm} \label{th:empirical:hard}
    Let $\ttx$ be the class of empirical distributions,
    and $\ttf$ be any class of function such that, for each $i$, it
    includes some function that depends only on $X_i$.
    Then, we have that $\fshap(\ttf, \ttx) \equiv^P \expquasiprob(\ppcnf)$.

    As a consequence, the problem $\fshap(\ttf, \ttx)$ is \#P-hard in the size of the
    empirical distribution.
\end{thm}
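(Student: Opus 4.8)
The plan is to prove both directions through a single dictionary between the Shapley subset-sum over the empirical distribution and the multilinear expectation of $\Phi_{\bm d}$ over quasi-symmetric distributions. Fix the explained instance $\e=(1,\dots,1)$. For $S\subseteq[n]$ let $R(S)$ be the set of data rows that are $1$ on every column in $S$, let $N(S)=|R(S)|$, and let $Z_i=\set{j:x_{ij}=0}$, so that $i\in R(S)\iff S\cap Z_i=\varnothing$. Over the empirical distribution $\vf(S)=\E[F\mid\e_S]=A(S)/N(S)$ with $A(S)=\sum_{i\in R(S)}F(\mathbf x_i)$ (and $\vf(S)=0$ when $N(S)=0$, matching the convention), and $\shap(X_\ell)=\sum_{S\subseteq[n]\setminus\set{\ell}}w(|S|)\,[\vf(S\cup\set{\ell})-\vf(S)]$ with $w(j)=j!(n-1-j)!/n!$. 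Since $A(S)$ is linear in the row-values $F(\mathbf x_i)$ and $\shap$ is linear in $F$ (Eq.~\ref{eq:shap:linear}), the combinatorial heart is the single-feature case $F=X_k$, where $A(S)=N(S\cup\set{k})$; this case also lands inside $\ttf$ (up to the affine transform used for Corollary~\ref{cor:nbn:hard:consequences}) and drives the hardness direction.

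First I would record the bijection $W=\overline S$ that turns $\E[\Phi_{\bm d}]$ into a generating polynomial. Conditioning on the $V$-assignment with $W=\set{j:V_j=1}$, the clause group of row $i$ is satisfied for free iff $Z_i\subseteq W$, i.e.\ iff $i\in R(S)$ for $S=\overline W$; hence for independent probabilities $\pr(U_i{=}1)=p_i$, $\pr(V_j{=}1)=q_j$ we get $\E[\Phi_{\bm d}]=\sum_{S}\big(\prod_{j\notin S}q_j\prod_{j\in S}(1-q_j)\big)\prod_{i\notin R(S)}p_i$. Specializing to a quasi-symmetric distribution -- a common value $p$ on all $U$'s, and a common value $q$ on a chosen column set $C$ with the rest of the $V$'s fixed to $1$ (which forces $S\subseteq C$) -- this collapses to $q^{|C|}p^{m}\sum_{S\subseteq C}(\tfrac{1-q}{q})^{|S|}\,t^{N(S)}$ with $t=1/p$. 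Evaluating the oracle at $\Theta(m)$ values of $p$ and $\Theta(n)$ values of $q$ and inverting the resulting Kronecker product of Vandermonde systems recovers, for every size $j$ and every restriction $C$, the polynomial $G^C_j(t)=\sum_{S\subseteq C,\,|S|=j}t^{N(S)}$; since $\deg G^C_j\le m$, its values at the nodes $t=1/p\ge 1$ determine it on all of $[0,1]$.

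The main obstacle is that $\vf(S)$ is a \emph{ratio} $A(S)/N(S)$ while the PP2CNF expectation is multilinear in the probabilities, and a quasi-symmetric distribution exposes only the \emph{marginal} data $G^C_j(t)$, not the joint moments of $(N(S),N(S\cup\set{k}))$ that a naive expansion of the ratio would demand. I would defeat the division with the integral identity $1/N=\int_0^1 t^{N-1}\,dt$ and linearize the numerator via $N(S\cup\set{k})=\sum_{r:x_{rk}=1}\mathbf 1[r\in R(S)]$: marking a single row $r$ and forcing it alive merely restricts the subset-sum to $S\subseteq\overline{Z_r}$, which is exactly the ``fix these $V_j$ to $1$'' move. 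The Shapley score then rewrites as the polynomial-size combination $\sum_{r:x_{rk}=1}\int_0^1\frac1t\sum_j w(j)\,G^{C_r}_j(t)\,dt$ of quantities the oracle already delivers ($C_r=\overline{Z_r}$ minus the pivot feature), the $\vf(S\cup\set{\ell})$ term handled by the analogous restriction and the $N(S)=0$ terms dropping out; the remaining $t$-integral is evaluated in closed form on the recovered polynomials. This yields $\fshap(\ttf,\ttx)\leq^P\expquasiprob(\ppcnf)$, and because each $A(S)$ is an explicit row-weighted count, the same argument covers an arbitrary $F\in\ttf$, not just single-feature functions.

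For the converse $\expquasiprob(\ppcnf)\leq^P\fshap(\ttf,\ttx)$ I would run the dictionary backwards: a target $\E[\Phi_{\bm d'}]$ is, by the generating-polynomial identity, a fixed linear combination of the counts $G^C_j(t)$, and each such count is read off from $\shap$-scores of single-feature functions over suitably padded empirical matrices -- adding dummy all-ones features varies $n$ and separates contributions by size $j$ through a Vandermonde system in the reweighting $w(j)$, while the choice of pivot feature and of padding rows exposes $N(S)$. Combining the two reductions gives $\fshap(\ttf,\ttx)\equiv^P\expquasiprob(\ppcnf)$, and the \#P-hardness of $\fshap(\ttf,\ttx)$ follows since $\expquasiprob(\ppcnf)$ is already \#P-hard under the uniform distribution~\cite{DBLP:journals/siamcomp/ProvanB83}. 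I expect the fiddly points to be the exact invertibility of the interpolation systems (distinct nodes, and that fixing variables to $1$ preserves the Vandermonde structure) and the bookkeeping of the $S$ versus $S\cup\set{\ell}$ terms and the zero-denominator boundary; the genuine conceptual difficulty is reconciling the conditional-expectation ratio with the multilinear PP2CNF expectation, which the integral-plus-marking device resolves.
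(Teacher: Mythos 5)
Your forward direction ($\fshap(\ttf,\ttx)\leq^P\expquasiprob(\ppcnf)$) is correct and is essentially the paper's argument in different notation: your conditioning on the $V$-assignment reproduces the identity $\E[\Phi_{\bm d}]=p^mq^nQ((1-p)/p,(1-q)/q)$ of Claim~\ref{claim:pp2cnf:to:coefficients:of:q}; your ``mark a row $r$ and restrict to $S\subseteq\overline{Z_r}$'' is the paper's reduction by linearity to the row-indicator functions $F_1$ on a ``good'' matrix; your double Vandermonde interpolation to recover $G^C_j$ is the recovery of the coefficients $b_{\ell k}$ of $Q(u,v)$; and your integral $\int_0^1 t^{N-1}\,dt=1/N$ is just a repackaging of the coefficient-wise division $v_k=\sum_\ell a_{\ell k}/\ell$. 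That part stands.

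The converse ($\expquasiprob(\ppcnf)\leq^P\fshap(\ttf,\ttx)$), which is exactly the direction the \#P-hardness rests on, has a genuine gap. A single $\shap$-oracle call returns only the one scalar $V=\sum_k\frac{k!(n-k)!}{(n+1)!}v_k$, so to expose the $O(mn)$ coefficients $a_{\ell k}$ you must pad the matrix in two directions and invert the combined linear systems; your claim that size-$j$ contributions are ``separated through a Vandermonde system in the reweighting $w(j)$'' is not correct. Padding with $\Delta$ all-ones and $n-\Delta$ all-zero columns yields the system with matrix $A_{\Delta,k}=\sum_{q}\binom{\Delta}{q}\big/\binom{2n}{k+q}$, which is the product of a Pascal matrix with the reciprocal-binomial matrix $C_{q,k}=1/\binom{2n}{k+q}$; the non-singularity of the latter is the technical heart of this direction and requires a separate inductive determinant computation (the paper's $\det(\bm C^{(n,N)})\neq 0$ recurrence), not a Vandermonde evaluation. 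Likewise the row-padding step that separates the $a_{\ell k}$ from the $v_k^{(\Gamma)}$ produces a Cauchy double-alternant system, not a Vandermonde one. You flag ``exact invertibility of the interpolation systems'' as a fiddly point to check, but for the natural padding schemes these matrices are genuinely non-Vandermonde and their invertibility is where the proof could actually fail; without that argument the reduction, and hence the hardness claim, is not established.
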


The theorem is surprising, because the set of possible outcomes of an
empirical distribution is small.  This is unlike all the
distributions discussed earlier, for example those mentioned in
Corollary~\ref{cor:nbn:hard:consequences}, which have $2^n$
possible outcomes, where $n$ is the number of features.  In
particular, given an empirical distribution $\bm d$, one can compute the expectation
$\E[F]$ in polynomial time for any function $F$, by doing just one
iteration over the data.  Yet, computing the $\shap$ explanation of
$F$ is \#P-hard.

Theorem~\ref{th:empirical:hard} implies hardness of \shap\ explanations on the empirical distribution for a large class of functions.
\begin{cor}
  The problem  $\fshap(\ttf, \ttx)$ is \#P-hard, when $\ttx$ is
  the class of empirical distributions, and $\ttf$ is any class such that for each feature $X_i$, the class contains some function that depends only on $X_i$. This includes all the function classes listed in Corollaries~\ref{cor:tractable-cases} and \ref{cor:intractable-classes}.
\end{cor}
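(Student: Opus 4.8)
The plan is to obtain this Corollary as a direct application of Theorem~\ref{th:empirical:hard}, which I may assume. That theorem already establishes that $\fshap(\ttf,\ttx)$ is \#P-hard for \emph{any} function class $\ttf$ that contains, for each feature $X_i$, some function depending only on $X_i$ (recall that this means the function equals a constant $c_0$ on the fiber $\{x_i=0\}$, a \emph{distinct} constant $c_1$ on the fiber $\{x_i=1\}$, and is insensitive to every other feature). Consequently the entire proof reduces to a single verification: for each function class listed in Corollaries~\ref{cor:tractable-cases} and~\ref{cor:intractable-classes}, and each index $i$, I must exhibit one member of that class that depends only on $X_i$. Once this is done for every class, the \#P-hardness claim follows immediately from Theorem~\ref{th:empirical:hard}.

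I would organize the verification by the structural form of the witness rather than enumerating every class separately. First, the linear witness $F(\mathbf{x})=x_i$ (weight $1$ on $X_i$, all other weights and the intercept $0$) equals $0$ on $\{x_i=0\}$ and $1$ on $\{x_i=1\}$, hence depends only on $X_i$; this covers linear regression, and also factorization machines and regression circuits, since both contain the linear functions. Second, the single-test tree that branches on $X_i$ and returns two distinct leaf constants is itself a decision/regression tree and is a one-tree member of any random forest or additive ensemble, covering those classes. Third, the one-literal Boolean function $X_i$ — written as the single clause $(X_i)$, the single term $X_i$, a two-node OBDD, or a trivial d-DNNF — takes value $1$ exactly when $x_i=1$, and this covers d-DNNF/BDDs, bounded-treewidth CNF, and general CNF/DNF. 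Fourth, the logistic witness $F(\mathbf{x})=\sigma(x_i)$ (weight $1$ on $X_i$, zero elsewhere) equals $\sigma(0)=\tfrac12$ when $x_i=0$ and $\sigma(1)$ when $x_i=1$; since $\tfrac12\neq\sigma(1)$ it depends only on $X_i$, and this covers logistic regression, sigmoid neural nets (each neuron being a logistic regression), and naive Bayes classifiers and logistic circuits, which subsume logistic regression. These four witnesses together account for all classes in both corollaries, and the subsumption facts invoked are exactly those already recorded in Corollary~\ref{cor:intractable-classes}.

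There is no deep obstacle here: the Corollary is an application of Theorem~\ref{th:empirical:hard} once its hypothesis is checked. The only point requiring genuine care is the precise meaning of \emph{depends only on $X_i$}: it is not enough that the exhibited function be a function of $X_i$ alone, it must additionally be \emph{non-constant} in $X_i$, i.e. $c_0\neq c_1$. This is why, for the logistic-type classes, I explicitly record that $\sigma(0)\neq\sigma(1)$, and for the Boolean classes that the literal $X_i$ truly separates the two fibers. Assembling these per-form witnesses shows every listed class satisfies the hypothesis of Theorem~\ref{th:empirical:hard}, which yields \#P-hardness of $\fshap(\ttf,\ttx)$ in each case.
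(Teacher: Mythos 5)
Your proposal is correct and matches the paper's argument: the paper likewise derives this corollary directly from Theorem~\ref{th:empirical:hard} by noting that each listed class contains, for every $i$, a single-variable witness depending only on $X_i$ (citing the Boolean literals $X_i$ and the logistic-regression construction from Section~\ref{sec:beyond-indep}). Your version simply spells out the witnesses for each class more explicitly, including the necessary check that the two fiber values are distinct.
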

For instance, any class of Boolean function that contains the $n$
single-variable functions $F \defeq X_i$, for $i=1,n$, fall under
this~corollary.  Section~\ref{sec:beyond-indep} showed an example of
how the class of logistic regression functions fall under this
corollary as~well.

The proof of Theorem~\ref{th:empirical:hard} follows from the following
technical lemma, which is of independent interest:

\begin{lmm} \label{lemma:empirical:hard} We have that:
  \begin{enumerate}
  \item \label{item:empirical:hard:1} For every matrix $\bm d$,
    $\fshap(\ttf, \bm d) \leq^P \expquasiprob(\set{\Phi_{\bm d}})$.
  \item \label{item:empirical:hard:2}
    $\expquasiprob(\ppcnf) \leq^P \fshap(\ttf, \ttx)$.
  \end{enumerate}
\end{lmm}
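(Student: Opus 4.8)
The plan is to prove the two reductions of Lemma~\ref{lemma:empirical:hard} separately; together they give $\fshap(\ttf,\ttx)\equiv^P\expquasiprob(\ppcnf)$, and the announced \#P-hardness then follows because $\expquasiprob(\ppcnf)$ is already \#P-hard under the uniform distribution (a quasi-symmetric special case), as noted just above the theorem. A preliminary simplification available in both directions is the linearity of the \shap\ explanation (Equation~\ref{eq:shap:linear}): every $F\in\ttf$ depending only on $X_i$ satisfies $F(\mathbf x)=(c_1-c_0)\mathbf x_i+c_0$ on the binary domain, so $\shap_F(X_\ell)=(c_1-c_0)\,\shap_{X_i}(X_\ell)$ and it suffices to work with the indicator $F=X_i$. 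Over the empirical distribution the value function (Equation~\ref{eq:v}) then becomes $\vf(S)=\E[X_i\mid\e_S]=N(S\cup\{i\})/N(S)$ when $N(S)>0$ and $0$ otherwise, where $N(S)=|\{r\in[m]: x_{rj}=1\text{ for all }j\in S\}|$ counts the data rows agreeing with $\e$ on $S$; by the zero-probability convention these are the only two cases.

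The structural bridge is that both quantities are generating functions in the same \emph{subset-agreement profile} of $\bm d$, namely the numbers $N(S)$. Reading Definition~\ref{def:phi:matrix} through its models, an assignment setting $U_i=0$ exactly on $A$ and $V_j=0$ exactly on $B$ satisfies $\Phi_{\bm d}$ iff the submatrix $A\times B$ is all-ones, i.e.\ iff every row of $A$ contains $B$. Hence under a quasi-symmetric distribution with $\pr(U_i{=}1)=p$, $\pr(V_j{=}1)=q$ one obtains
\begin{align*}
\E[\Phi_{\bm d}](p,q)=\sum_{B\subseteq[n]}(1-q)^{|B|}q^{\,n-|B|}\,p^{\,m-N(B)},
\end{align*}
a bivariate polynomial whose coefficients are exactly the counts $|\{B:|B|=k,\ N(B)=v\}|$. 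Because quasi-symmetry permits forcing individual variables to probability $1$, the sum can be restricted to any chosen sub-matrix of rows and columns, and interpolating in $p$ and $q$ over polynomially many evaluations recovers these profile counts through a nonsingular Vandermonde system, exactly as the $z$-interpolation in the proof of Theorem~\ref{thm:shap:to:wmc} recovers the quantities $v_{G,k}$.

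For part~\ref{item:empirical:hard:1} I would write $\shap_{X_i}(X_\ell)$ in the size-indexed form $\sum_k w(k)\,D_k$ analogous to Equation~\ref{eq:shap:alt}, where each $D_k$ aggregates the conditional expectations $N(S\cup\{i\})/N(S)$ over the size-$k$ coalitions $S$, and then express each $D_k$ through the profile information extracted from the $\expquasiprob(\{\Phi_{\bm d}\})$ oracle. For part~\ref{item:empirical:hard:2} I would go the other way: given an arbitrary $\bm d$ and a target quasi-symmetric $(p,q)$, I build empirical distributions \emph{from} $\bm d$ and read $\E[\Phi_{\bm d}]$ off their \shap\ scores. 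The key device here is to \emph{pad} $\bm d$ with controlled all-ones-type rows so that a free integer parameter $P$ enters every denominator as $N(S)+P$; the resulting \shap\ scores are then a rational function of $P$ with simple poles at the integers $-v$, whose partial-fraction coefficients are precisely the profile quantities, recoverable by interpolation over polynomially many values of $P$.

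The main obstacle is exactly what distinguishes the empirical case from the fully-factorized case of Theorem~\ref{thm:shap:to:wmc}: the conditional expectations $N(S\cup\{i\})/N(S)$ are \emph{ratios} that do \emph{not} factorize over features, so the clean identity $\sum_k z^k v_{G,k}=(1+z)^n\E_z[G]$ has no direct analogue and the normalizing denominators $N(S)$ must be confronted head-on. The whole reason for routing through $\Phi_{\bm d}$ and its partitioned $U/V$ variable structure is that these denominators become the interpolation handle --- via $(p,q)$ in the first direction and via the padding parameter $P$ in the second --- so the crux of the proof is to make this correspondence exact, to keep every constructed instance of polynomial size, and to treat carefully the boundary cases produced by the zero-probability convention (in particular the coalitions $S$ with $N(S)=0$ or $N(S\cup\{i\})=0$, where $\vf$ jumps to $0$).
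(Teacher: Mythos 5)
Your plan gets the right structural bridge for item~\ref{item:empirical:hard:1}: the identity $\E[\Phi_{\bm d}]=\sum_{B\subseteq[n]}(1-q)^{|B|}q^{n-|B|}p^{m-N(B)}$ is correct and is exactly the paper's Claim~\ref{claim:pp2cnf:to:coefficients:of:q} in the form $\E[\Phi_{\bm x}]=p^mq^nQ\bigl(\tfrac{1-p}{p},\tfrac{1-q}{q}\bigr)$, and the bivariate (Kronecker--Vandermonde) interpolation recovering the profile counts $a_{\ell k}=|\{B:|B|=k,\ N(B)=\ell\}|$ is also the paper's route. But the step you leave as ``express each $D_k$ through the profile information'' is precisely where the argument is missing, not merely deferred: for $F=X_i$ the quantity $\sum_{|S|=k}N(S\cup\{i\})/N(S)$ depends on the \emph{joint} behaviour of numerator and denominator across coalitions, and is not a function of the single marginal profile $\{a_{\ell k}\}$. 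The paper resolves this with a second application of linearity, over \emph{rows} rather than over features: on the support of $\bm d$ any $F$ equals $\sum_r y_rF_r$ with $F_r$ the indicator of row $r$, and $\E[F_r\mid\e_S]=[r\in g(S)]/N(S)$, which after restricting to the columns $J^{(r)}=\{j:x_{rj}=1\}$ (a ``good'' matrix, reachable from the oracle by forcing the complementary $V_j$ to probability $1$) collapses to $\sum_\ell a^{(r)}_{\ell k}/\ell$, i.e.\ one profile \emph{per row} (Claim~\ref{claim:good:x:makes:vk:work}). This row decomposition is also what lets item~\ref{item:empirical:hard:1} handle an arbitrary $F$, which the lemma requires; your reduction to single-feature indicators via the two constants $c_0,c_1$ only covers part of the class $\ttf$.

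For item~\ref{item:empirical:hard:2}, your row-padding device with the free parameter $P$ and poles at the negative integers is the paper's Step~1 (it produces the Cauchy double-alternant system $v^{(\Gamma)}_k=\sum_\ell a_{\ell k}/(\ell+\Gamma)$). The gap is that the $\shap$ oracle returns only a single Shapley-weighted aggregate per constructed matrix, namely $\shap_{F_1}(X_0)=1-\sum_k\frac{k!(n-k)!}{(n+1)!}\,v_k$, so the residue of your rational function of $P$ at $P=-\ell$ is $\sum_k w_k a_{\ell k}$ with the fixed weights $w_k$, \emph{not} the individual profile quantities: the $k$-dependence remains entangled and one interpolation parameter cannot separate an $(m+1)\times(n+1)$ array of unknowns. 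The paper needs an entire additional stage (its Step~2): pad with $\Delta$ all-ones and $n-\Delta$ all-zero columns so that $|S|$ is shifted and the Shapley weights change with $\Delta$, and then prove the resulting $(n+1)\times(n+1)$ linear system nonsingular by factoring its matrix as a binomial matrix (Vandermonde-equivalent) times the reciprocal-binomial matrix with entries $1/\binom{2n}{k+q}$, whose nonvanishing determinant requires its own inductive argument. That disentangling step is the technical heart of the converse reduction and is absent from your proposal.
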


The proof of the Lemma is given in
Appendix~\ref{sec:proof:empirical:hard:1}
and~\ref{sec:proof:empirical:hard:2}.
The first item says that we can compute the $\shap$-explanation in
polynomial time using an oracle for computing $\E[\Phi_{\bm d}]$ over
quasi-symmetric distributions.  The oracle is called only on the
PP2CNF $\Phi_{\bm d}$ associated to the data $\bm d$, but may perform
repeated calls, with different probabilities of the Boolean variables.
This is somewhat surprising because the $\shap$ explanation is over an
empirical distribution, while $\E[\Phi_{\bm d}]$ is taken over a
fully-factorized distribution; there is no connection between these
two distributions.  This item immediately implies
$\fshap(\ttf, \ttx) \leq^P \expquasiprob(\ppcnf)$, where $\ttx$ is the
class of empirical distributions $\bm d$, since the formula
$\Phi_{\bm d}$ is in the class $\ppcnf$.

The second item says that a weak form of converse also holds. It
states that we can compute in polynomial time the expectation
$\E[\Phi]$ over a quasi-symmetric probability distributions by using
an oracle for computing $\shap$ explanations, over several matrices
$\bm d$, but not necessarily restricted to the matrix associated to
$\Phi$.  Together, the two items of the lemma prove
Theorem~\ref{th:empirical:hard}.

We end this section with a comment on the TreeSHAP algorithm
in~\citet{lundberg-2020}, which is computed over a distribution
defined by a tree-based model.  Our result implies that the problem that
TreeSHAP tries to solve is \#P-hard.  This follows immediately by
observing that every empirical distribution $\bm d$ can be represented
by a binary tree of size polynomial in the size of $\bm d$.  The tree
examines the attributes in the order $X_1, X_2, \ldots, X_n$, and each
decision node for $X_i$ has two branches: $X_i=0$ and $X_i=1$.  A
branch that does not exists in the matrix $\bm d$ will end in a leaf
with label $0$.  A complete branch that corresponds to a row
$x_{i1}, x_{i2}, \ldots, x_{in}$ in $\bm d$ ends in a leaf with label
$1/m$ (or $k/m$ if that row occurs $k$ times in $\bm d$).  The size of
this tree is no larger than twice the size of the matrix (because of
the extra dead end branches).
This concludes our study of \shap\ explanations on the empirical distribution.

\section{Perspectives and Conclusions}

We establish the complexity of computing the \shap\ explanation in three important
settings. First, we consider fully-factorized data distributions and show that
for any prediction model, the complexity of computing the \shap\ explanation is the
same as the complexity of computing the expected value of the model. It follows
that there are commonly used models, such as logistic regression, for which
computing \shap\ explanations is intractable. Going beyond fully-factorized
distributions, we show that computing \shap\ explanations is also intractable for simple functions and simple distributions -- naive Bayes and empirical distributions.

The recent literature on \shap\ explanations predominantly studies tradeoffs of
variants of the original \shap\ formulation, and relies on approximation
algorithms to compute the explanations. These approximation algorithms, however,
tend to make simplifying assumptions which can lead to counter-intuitive
explanations, see e.g., \citet{advlime:aies20}. We believe that more focus
should be given to the computational complexity of \shap\ explanations.  In
particular, which classes of machine learning models can be explained
efficiently using the \shap\ scores? Our results show that, under
the assumption of fully-factorized data distributions, there are classes of
models for which the \shap\ explanations can be computed in polynomial time. In future work, we plan to explore if there are classes of models for which the complexity of the \shap\ explanations is tractable under more complex data distributions, such as the ones defined by tractable probabilistic circuits~\cite{tutorial-pc} or tractable symmetric probability spaces~\citep{DBLP:conf/kr/BroeckMD14,DBLP:conf/pods/BeameBGS15}.

\subsection*{Acknowledgements}
This work is partially supported by NSF grants IIS-1907997, IIS-1954222 IIS-1943641, IIS-1956441, CCF-1837129, DARPA grant N66001-17-2-4032, a Sloan Fellowship, and gifts by Intel and Facebook research.
Schleich is supported by a RelationalAI fellowship. The authors would like to thank YooJung Choi for valuable discussions on the proof of Theorem~\ref{th:logistic:regression:hard}.

\bibliography{main}

\clearpage
\appendix

\section{Discussion on  the TreeSHAP algorithm}
\label{sec:bug}

\begin{algorithm}[t]
  \caption{Algorithm to compute value function $\vf$ from \cite{treeshap2018}}
  \label{alg:treeshap:bug}

  \newcommand{\TAB}{\hspace*{1em}}
  \newcommand{\STAB}{\hspace*{0.5em}}
  \newcommand{\IF}{\textbf{if}\STAB}
  \newcommand{\THEN}{\textbf{then}}
  \newcommand{\ELSE}{\textbf{else}\STAB}
  \newcommand{\RETURN}{\textbf{return}\STAB}

  \textbf{procedure} EXPVALUE(x, S, tree = \{$v,a,b,t,r,d$\}) \\[0.3em]
  \TAB\textbf{procedure} G(j)\\
  \TAB\TAB\IF $v_j \neq $ \emph{internal} \THEN \\
  \TAB\TAB\TAB \RETURN $v_j$ \\
  \TAB\TAB \ELSE \\
  \TAB\TAB\TAB\IF $d_j \subseteq S$\STAB \THEN \\
  \TAB\TAB\TAB\TAB \RETURN $G(a_j)$ \IF $x_{d_j} \leq t_j$ \ELSE  $G(b_j)$ \\
  \TAB\TAB\TAB \ELSE \\
  \TAB\TAB\TAB\TAB \RETURN $(G(a_j) \cdot r_{a_j} + G(b_j) \cdot r_{b_j}) / r_j$ \\[0.5em]
  \TAB \RETURN $G(1)$
\end{algorithm}

\citet{treeshap2018} propose TreeSHAP, a variant of \shap\ explanations for
tree-based machine learning models such as decision trees, random forests and
gradient boosted trees. The authors claim that, for the case when both the
model and probability distribution are defined by a tree-based model, the
algorithm can compute the exact \shap\ explanations in polynomial time.
However, it has been pointed out in Github discussions (e.g.,
\url{https://github.com/christophM/interpretable-ml-book/issues/142}) that the
TreeSHAP algorithm does not compute the \shap\ explanation as defined in
Section~\ref{sec:shap}. In this section, we provide a concrete example of this
shortcoming.

The main shortcoming of the TreeSHAP algorithm is captured by
Algorithm~\ref{alg:treeshap:bug}. The authors claim that
Algorithm~\ref{alg:treeshap:bug} computes the conditional expectation
$\E[F \mid \bm x_S]$, for a given set of features $S$ and tree-based model $F$. We
first describe the algorithm and then show by example that this algorithm does
not accurately compute the conditional expectation.

Algorithm~\ref{alg:treeshap:bug} takes as input a feature vector $\bm x$, a set
of features $S$, and a binary tree, which represents the tree-based model. The
tree is defined by the following vectors: $v$ is a vector of node values;
internal nodes are assigned the value \emph{internal}. The vectors $a$ and $b$
represent the left and right node indexes for each internal node. The vector $t$
contains the thresholds for each internal node, and $d$ is a vector of indexes
of the features used for splitting in internal nodes. The vector $r$ represents
the cover of each node (i.e., how many data samples fall in that sub-tree).

The algorithm proceeds recursively in a top-down traversal of the tree. For
inner nodes, the algorithm follows the decision path for $x$ if the split
feature is in $S$, and takes the weighted average of both branches if the split
feature is not in $S$.  For leaf nodes, it returns the value of the node, which
corresponds to the prediction of the model.

The algorithm does not accurately compute the conditional expectation
$E[F \mid \bm x_S]$, because it does not normalize expectation by the
probability of the condition. The following simple example shows that the value
returned by Algorithm~\ref{alg:treeshap:bug} does not represent the conditional
expectation.

\begin{ex}
  We consider the following dataset and decision tree model. The dataset
  has two binary variables $X_1$ and $X_2$, and each instance $(x_1, x_2)$ is
  weighted by the occurrence count (i.e., the instance (0,0) occurs twice in the
  dataset).  We want to compute $E[F(X_1,X_2) | X_2 = 0]$, where $F(X_1,X_2)$ is
  the outcome of the decision tree.
  \begin{center}
    \begin{minipage}{0.3\columnwidth}
      \begin{tabular}{c|c|c}
        $X_1$ & $X_2$ & \# \\\hline
        0 & 0 & 2 \\
        0 & 1 & 1 \\
        1 & 0 & 1 \\
        1 & 1 & 2
      \end{tabular}
    \end{minipage}
    \begin{minipage}{0.6\columnwidth}
      \begin{tikzpicture}[scale=0.8]
        \tikzstyle{tnode}=[draw, rectangle, inner sep = .08cm]

        \node[tnode] at (0,0) (r) {$X_1$};
        \node[tnode] at ($(r)+(-1.6,-1)$) (n1) {$X_2$};
        \node[tnode] at ($(r)+(1.6,-1)$) (n2) {$X_2$};

        \draw (r) -- (n1) node[midway, fill=white, scale=0.9] {0}; 
        \draw (r) -- (n2) node[midway, fill=white, scale=0.9] {1}; 

        \node at ($(n1)+(-.8,-1.5)$) (o1) {$F(0,0)$};
        \node at ($(n1)+(.8,-1.5)$) (o2) {$F(0,1)$};

        \node at ($(n2)+(-.75,-1.5)$) (o3) {$F(1,0)$};
        \node at ($(n2)+(.75,-1.5)$) (o4) {$F(1,1)$};

        \draw (n1) -- (o1) node [midway, fill=white, scale=0.9] {0}; 
        \draw (n1) -- (o2) node [midway, fill=white, scale=0.9] {1};

        \draw (n2) -- (o3) node[midway, fill=white, scale=0.9] {0}; 
        \draw (n2) -- (o4) node[midway, fill=white, scale=0.9] {1};
      \end{tikzpicture}
    \end{minipage}
  \end{center}

  The correct value is:
  $$E[F(X_1,X_2) \mid X_2 = 0] = 2/3 \cdot F(0,0) + 1/3 \cdot F(1,0)$$
  This is because there are three items with $X_2=0$, and their probabilities are $2/3$
  and $1/3$. 

  Algorithm~\ref{alg:treeshap:bug}, however, returns:
  $$G(1) =  1/2 \cdot F(0,0) + 1/2 \cdot F(1,0),$$
  and thus does not compute $E[F(X_1,X_2) \mid X_2 = 0]$.

\end{ex}

\section{Proof of Proposition~\ref{prop:shap:to:wmc}}

\label{app:cor:shap:to:wmc}

We start with item (1).  Recall that
$\dom(X_i)= \set{1,2,3,\ldots,m_i}$.  We denote by
$p_{i1},p_{i2},\ldots, p_{im_i}$ their probabilities, thus
$\sum_{j=1,m_i} p_{ij}=1$.  By definition, the projected distribution
is: $\pr_\pi(X_i=1) \defeq p_{i1}$, and $\pr_\pi(X_i=0)=1-p_{i1}$.  We
denote by $\E_\pi$ be the corresponding expectation.  Our goal is to
prove $\shap_{F,\Pr}(X_j)=\shap_{F_\pi,\Pr_\pi}(X_j)$.

Let $\e_S$ again denote the event $\bigwedge_{i \in S} (X_i = 1)$.
Note that, by construction, for any set $S$,
$\pr(\e_S)=\pr_\pi(\e_S)$.  Recall that for any instance
$\mathbf{x} \in \set{0,1}^n$, we let $T(\mathbf{x})$ denote the event
asserting that $X_j=1$ iff $\mathbf{x}_j=1$; formally,
$$T(\mathbf{x}) \defeq \bigwedge_{j: \mathbf{x}_j = 1} (X_j\!=\!1) \wedge \bigwedge_{j: \mathbf{x}_j = 0} (X_j\!\neq\!1).$$
Then, for any instance $\mathbf{x} \in \set{0,1}^n$,
$$\pr(T(\mathbf{x}))=\prod_{i: \mathbf{x}_i=1} p_{i1}\cdot
\prod_{i: \mathbf{x}_i=0}
(p_{i2}+p_{i3}+\cdots)=\pr_\pi(\mathbf{x}).$$
Thus, there are $2^n$ disjoint events $T(\mathbf{x})$, which partition
the space $\mathcal{X}$.  Therefore, for every set $S$:
\begin{align*}
  \E[F \wedge  \e_S]
  &= \sum_{\mathbf{x} : \forall i \in S, \mathbf{x}_i = 1} \E[F | T(\mathbf{x})]\, \pr(T(\mathbf{x}))\\
  &= \sum_{\mathbf{x} : \forall i \in S, \mathbf{x}_i = 1} F_\pi(\mathbf{x})\, \pr_\pi(\mathbf{x})\\
  &= \E_\pi[F_\pi\wedge \e_S]
\end{align*}
This implies that $\E[F | \e_S] = \E_\pi[F_\pi| \e_S]$ for any set
$S$, and $\shap_{F,\Pr}(X_j)=\shap_{F_\pi, \Pr_\pi}(X_j)$ for all $j$
follows from Equation~\ref{eq:shap:alt}.

We now prove item (2): we show how to compute $\E[F_\pi]$ given an
oracle for computing $\E[F]$.  Recall that we want to compute
$\E[F_\pi]$ on some arbitrary distribution $\pr_\pi'$ on
$\set{0,1}^n$; this should not be confused with the probability
$\pr_\pi$ defined in Eq.\ref{eq:f:pi}, and used to define the function
$F_\pi$.  Denote $q_i \defeq \pr_\pi'(X_i\!=\!1)$, thus
$\pr_\pi(X_i\!=\!0)= 1-q_i$.  To compute $\E[F_\pi]$ we will use the
oracle for computing $\E[F]$, on the probability space defined by the
numbers $p_{ij}'$, $i=1,n$, $j=1,m_i$ defined as follows:
\begin{align*}
  w_i \defeq & \frac{1-q_i}{q_i}  \text{\qquad and \qquad } Z \defeq \prod_{i=1,n} q_i\\
  W \defeq & \prod_{i=1,n}\left(1+\sum_{j=2,m_i} \frac{p_{ij}w_i}{1-p_{i1}}\right)
\\
  p'_{i1} \defeq & \frac{1}{W}  & i = & 1,n\\
  p'_{ij} \defeq & \frac{p_{ij}w_i}{W(1-p_{i1})} & i = & 1,n; j=2,m_i
\end{align*}

One can check that the numbers $p_{ij}'$ indeed define a probability
space on $\mathcal{X}$, in other words $p_{ij}' \in [0,1]$ and, for
all $i=1,n$: $\sum_{j=1,m_j}p_{ij}' = 1$.  We denote by $\pr'$ the
probability space that they define, and denote by $\E'[F]$ the
expectation of $F$ in this space.  We prove:

\begin{claim}
  $\E_\pi[F_\pi] = Z\cdot W\cdot \E'[F]$
\end{claim}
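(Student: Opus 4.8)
The plan is to prove the claimed identity by viewing both sides as linear functionals of $F$ and comparing them coefficient by coefficient. Since $F$ enters only through its $\prod_i m_i$ values $F(\mathbf{y})$, $\mathbf{y}\in\mathcal{X}$, and both $\E_{\pr_\pi'}[F_\pi]$ and $Z\cdot W\cdot\E'[F]$ are linear in these values, it suffices to check that the coefficient of each $F(\mathbf{y})$ is the same on the two sides. This turns the statement into a purely algebraic identity among the numbers $p_{ij}$, $q_i$, $w_i$, $Z$, $W$, $p_{ij}'$, with $F$ eliminated entirely.

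First I would expand the left-hand side. Writing $\E_{\pr_\pi'}[F_\pi]=\sum_{\mathbf{x}\in\set{0,1}^n} F_\pi(\mathbf{x})\,\pr_\pi'(\mathbf{x})$ and substituting $F_\pi(\mathbf{x})=\E[F\mid T(\mathbf{x})]=\tfrac{1}{\pr(T(\mathbf{x}))}\sum_{\mathbf{y}:\,T(\mathbf{x})}F(\mathbf{y})\,\pr(\mathbf{y})$, I use the fact established for item (1) that the events $T(\mathbf{x})$ partition $\mathcal{X}$: each $\mathbf{y}\in\mathcal{X}$ satisfies exactly one $T(\mathbf{x})$, namely the $\mathbf{x}$ with $\mathbf{x}_i=1\Leftrightarrow y_i=1$. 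Exchanging the order of summation, the coefficient of $F(\mathbf{y})$ is $\pr(\mathbf{y})\,\pr_\pi'(\mathbf{x})/\pr(T(\mathbf{x}))$ for this matching $\mathbf{x}$. Setting $S=\set{i:y_i=1}$ and expanding each probability as a product over features by independence, the factors $p_{i1}$ for $i\in S$ cancel between $\pr(\mathbf{y})$ and $\pr(T(\mathbf{x}))$, and the coefficient simplifies to $\prod_{i\in S}q_i\cdot\prod_{i\notin S}\tfrac{p_{iy_i}(1-q_i)}{1-p_{i1}}$.

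Next I would expand the right-hand side, whose coefficient of $F(\mathbf{y})$ is simply $Z\cdot W\cdot\pr'(\mathbf{y})=Z\cdot W\cdot\prod_i p_{iy_i}'$. Substituting the definitions of $p_{i1}'$ and $p_{ij}'$, collecting the normalizing constants so that the global $W$ cancels, and using $q_iw_i=1-q_i$ (immediate from $w_i=(1-q_i)/q_i$) together with $Z=\prod_i q_i$ to supply a factor $q_i$ for each $i\in S$ and to turn $q_iw_i$ into $1-q_i$ for each $i\notin S$, I obtain the same expression $\prod_{i\in S}q_i\cdot\prod_{i\notin S}\tfrac{p_{iy_i}(1-q_i)}{1-p_{i1}}$, which proves the claim. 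The main obstacle is the bookkeeping of $W$: one must confirm that the per-feature normalizers that make each $\set{p_{ij}'}_{j=1}^{m_i}$ sum to one multiply together to exactly $W$, so that the powers of $W$ produced by $\prod_i p_{iy_i}'$ cancel cleanly against the single $W$ in front and leave no residual $W$-factor. Once this normalization is pinned down, the remaining cancellations (the $p_{i1}$ terms over $i\in S$ and the $q_iw_i=1-q_i$ substitution over $i\notin S$) are routine.
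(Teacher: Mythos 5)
Your proposal is correct and follows essentially the same route as the paper: it expands $F_\pi(\mathbf{x})=\E[F\mid T(\mathbf{x})]$ using the fact that the events $T(\mathbf{x})$ partition $\mathcal{X}$, cancels the $p_{i1}$ factors between $\pr(\mathbf{y})$ and $\pr(T(\mathbf{x}))$, and uses $q_iw_i=1-q_i$ together with the per-feature normalizers of the $p'_{ij}$ multiplying to $W$ — the paper performs the identical computation as a single chain of equalities over pairs $(\theta,\tau)$ with $\theta$ determined by $\tau$, rather than by matching coefficients of each $F(\mathbf{y})$. Your ``main obstacle'' is resolved exactly as you suspect: each $\set{p'_{ij}}_{j}$ sums to one with normalizer $W_i=1+\sum_{j=2,m_i}p_{ij}w_i/(1-p_{i1})$, and $W=\prod_i W_i$.
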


The claim immediately proves item (2) of
Proposition~\ref{prop:shap:to:wmc}: we simply invoke the oracle to
compute $\E'[F]$, then multiply with the quantities $Z$ and $W$, both
of which are computable in polynomial time.  It remains to prove the
claim.

We start with some observations and notations.  Recall that the
projection $F_\pi$ depends on both $F$ and on $\pr$, see
Equation~\ref{eq:f:pi}.  We express it here in terms of the
probabilities $p_{ij}$:
\begin{align*}
  F_\pi[\mathbf{x}] =~ & \E[F | T(\mathbf{x})] = \frac{\E[F \cdot T(\mathbf{x})]}{\pr(T(\mathbf{x}))} \\
    =~ & \frac{\sum_{\tau \in \mathcal{X}: \mathbf{x}^{-1}(1) = \tau^{-1}(1)}F(\tau) \cdot\prod_i p_{i\tau_i}}{\prod_{i: \mathbf{x}_i=1} p_{i1}\cdot \prod_{i:\mathbf{x}_i\neq 1} (1-p_{i1})}\\
    =~ & \sum_{\tau \in \mathcal{X}: \mathbf{x}^{-1}(1) = \tau^{-1}(1)}F(\tau)\cdot \prod_{i: \tau_i\neq 1}\frac{p_{i\tau_i}}{1-p_{i1}}.
\end{align*}
We used the fact that, for every instance
$\mathbf{x} \in \mathcal{X}$,
$\pr(\mathbf{x}) = \prod_i p_{i\mathbf{x}_i}$, and denoted by
$\mathbf{x}^{-1}(1)$ the set of feature indices for which example
$\mathbf{x}$ has value $1$.  We now prove the claim by applying
directly the definition of $\E_\pi[F_\pi]$:
\allowdisplaybreaks
\begin{align*}
  \E_\pi[F_\pi] = & \sum_{\theta \in \set{0,1}^n} F_\pi(\theta) \prod_{i:  \theta_i=1}q_i \prod_{i: \theta_i=0}(1-q_i)\\
  = & Z \cdot \sum_{\theta \in \set{0,1}^n} F_\pi(\theta) \prod_{i:  \theta_i=0}w_i \\
  = & Z \cdot \sum_{\scriptsize \begin{array}{l} \theta \in \set{0,1}^n \\ \tau \in \mathcal{X} \\ \theta^{-1}(1) = \tau^{-1}(1) \end{array}}F(\tau) \prod_{i: \tau_i\neq 1} \frac{p_{i\tau_i}}{1-p_{i1}}\prod_{i: \theta_i=0}w_i\\
  = & Z \cdot \sum_{\tau \in \mathcal{X}}F(\tau) \prod_{i: \tau_i\neq 1} \frac{p_{i\tau_i}w_i}{1-p_{i1}}\\
  = & Z \cdot W \cdot \sum_{\tau \in \mathcal{X}}F(\tau) \prod_i p'_{i\tau_i}\\
  = & Z \cdot W \cdot \E'[F]
\end{align*}
In line 4 we noticed that the conditions $\tau_i\neq 1$ and
$\theta_i=0$ are equivalent, because $\theta^{-1}(1)=\tau^{-1}(1)$,
and that the assignment $\tau \in \mathcal{X}$ uniquely defines
$\theta$, hence $\theta$ can be dropped from the summation.  This
completes the proof of the claim, and of
Proposition~\ref{prop:shap:to:wmc}.

\section{Proof of Theorem~\ref{th:logistic:regression:hard}}
\label{app:proof:logistic:regression:hard}

The \textit{number partitioning} problem, $\numpar$, is the following: given $n$
natural numbers $k_1, \ldots, k_n$, decide whether there exists a subset
$S \subseteq [n]$ that partitions the numbers into two sets with equal sums:
$\sum_{i \in S} k_i = \sum_{i \not\in S} k_i$.  $\numpar$ is known to be
NP-complete.  The corresponding counting problem, in notation $\#\numpar$, asks
for the number of sets $S$ such that
$\sum_{i \in S} k_i = \sum_{i \not\in S} k_i$, and is \#P-hard.

We show that we can solve the $\#\numpar$ problem using an oracle for
$\E_{\mathbf{U}}[F]$, where $F$ is a logistic regression function and
$\mathbf{U}$ is the uniform probability distribution. This implies that
computing the expectation of a logistic regression function is \#P-hard.

Fix an instance of $\numpar$, $k_1, \ldots, k_n$, and assume
w.l.o.g. that the sum of the numbers $k_i$ is even, $\sum_i k_i = 2c$
for some natural number $c$.  Let 
\begin{align}
P \defeq & \setof{S}{S \subseteq [n], \sum_{i\in S}k_i = c} \label{eq:numpar:p}
\end{align}
For each set $S \subseteq [n]$, denote by $\bar S$ its complement.
Obviously, $S \in P$ iff $\bar S \in P$, therefore $|P|$ is an even
number.

We next describe an algorithm that computes $|P|$ using an oracle for
$\E_\mathbf{U}[F]$, where $F$ is a logistic regression function and $\mathbf{U}$
is the uniform probability distribution.  Let $m$ be a natural number large
enough, to be chosen later, and define the following weights:
\begin{align*}
  w_0 \defeq & -\frac{m}{2} - mc & w_i \defeq & mk_i\;\;\; \forall i=1,n
\end{align*}

Let $\bm w = (w_1, \ldots, w_n)$, then
$F(x_1, \ldots, x_n) \defeq \sigma(w_0 + \bm w \cdot \bm x)$ is the logistic
regression function defined by the weights $w_0,\ldots, w_n$.

\begin{claim} Let $\varepsilon \defeq 1/2^{n+3}$.  If $m$ satisfies
  both $2\sigma(-m/2) \leq \varepsilon$ and
  $1-\sigma(m/2) \leq \varepsilon$, then:
  \begin{align*}
    |P| = &  \left\lceil2^n - \frac{2^{n+1}\E[F]}{1-\varepsilon}\right\rceil
  \end{align*}
\end{claim}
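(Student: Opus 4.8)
The plan is to evaluate $2^n\E[F] = \sum_{S\subseteq[n]} \sigma\!\left(m(t_S - 1/2)\right)$, where for $\bm x = $ the indicator of $S$ we have $w_0 + \bm w\cdot\bm x = -m/2 - mc + m\sum_{i\in S}k_i = m(t_S - 1/2)$ with $t_S \defeq \sum_{i\in S}k_i - c \in \Z$. Since $t_S$ is an integer, $\sigma$ falls into exactly three regimes: for $S\in P$ we have $t_S=0$ and the value is $\sigma(-m/2)$; for $t_S\geq 1$ the argument is $\geq m/2$, so the value is $\geq \sigma(m/2)\geq 1-\varepsilon$; and for $t_S\leq -1$ the argument is $\leq -3m/2$, so the value is $\leq \sigma(-m/2)\leq\varepsilon/2$, using the two hypotheses on $m$.

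First I would partition the $2^n$ subsets into $P$, $P_> \defeq \{S : t_S\geq 1\}$, and $P_< \defeq \{S : t_S\leq -1\}$. The key structural observation is that complementation $S\mapsto \bar S$ sends $t_S$ to $-t_S$, because $\sum_{i\in\bar S}k_i = 2c - \sum_{i\in S}k_i$; hence it is a bijection between $P_>$ and $P_<$, giving $|P_>| = |P_<| = N \defeq (2^n - |P|)/2$. This symmetry is what lets $\E[F]$ see $|P|$ at all: the tiny contributions of $P$ and $P_<$ are negligible, while $P_>$ contributes essentially $N$, and $N$ determines $|P|$.

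Next I would split $2^n\E[F]$ over the three groups and bound each: the $P$-terms lie in $[0,\,|P|\varepsilon/2]$, the $P_>$-terms lie in $[N(1-\varepsilon),\,N]$, and the $P_<$-terms lie in $[0,\,N\varepsilon/2]$. Summing these and using $|P|+N = (|P|+2^n)/2 \leq 2^n$ yields
\[
  N(1-\varepsilon)\ \leq\ 2^n\E[F]\ \leq\ N + \varepsilon\, 2^{n-1}.
\]
Setting $y \defeq 2^n - \dfrac{2^{n+1}\E[F]}{1-\varepsilon}$, the lower bound gives $\dfrac{2^{n+1}\E[F]}{1-\varepsilon}\geq 2N$, hence $y\leq 2^n - 2N = |P|$; the upper bound gives $y \geq \dfrac{|P| - 2\varepsilon\, 2^n}{1-\varepsilon}$.

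The main obstacle is the final $\varepsilon$-bookkeeping, where the choice $\varepsilon = 2^{-(n+3)}$ is exactly tuned: it makes $2\varepsilon\,2^n = 1/4$, so $y \geq (|P| - 1/4)/(1-\varepsilon)$. A short case check finishes it—if $|P|\geq 1$ then $|P|-1/4>0$ and dividing by $1-\varepsilon<1$ only increases it, so $y > |P|-1$; if $|P|=0$ one checks directly that $-(1/4)/(1-\varepsilon) > -1$ since $\varepsilon<3/4$. In all cases $|P|-1 < y \leq |P|$ with $|P|\in\Z$, so $\lceil y\rceil = |P|$, which is the claimed identity. The delicate part is precisely confirming that the three accumulated error terms combine to strictly less than $1$ after the division, and handling the degenerate boundary case $|P|=0$ where the target sits at the left edge of the ceiling window.
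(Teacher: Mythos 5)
Your proof is correct and follows essentially the same route as the paper's: both exploit the complementation symmetry $t_S \mapsto -t_S$ to split the subsets into the solution set and two equinumerous halves, use the integrality of $t_S$ to force $\sigma$ into the three regimes $\sigma(-m/2)$, $\geq\sigma(m/2)$, $\leq\sigma(-3m/2)$, and then verify that the accumulated $\varepsilon$-error is strictly less than one unit so the ceiling of the lower bound recovers $|P|$. The only cosmetic difference is that the paper sums the paired quantity $\sigma(\mathrm{weight}(S))+\sigma(\mathrm{weight}(\bar S))$ over all $S$ with denominator $2^{n+1}$, whereas you partition into $P$, $P_>$, $P_<$ directly; the bounds and the conclusion are the same.
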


The claim immediately proves the theorem: in order to solve the $\#\numpar$
problem, compute $\E[F]$ and then use the formula above to derive $|P|$.  To
prove the claim, for each set $S \subseteq [n]$ denote by:
\begin{align*}
  \text{weight}(S)  \defeq & -\frac{m}{2} - mc + m(\sum_{i\in S} k_i)
\end{align*}
Let $\mathbf{U}$ denote the uniform probability distribution over the domain $\{0,1\}^n$. Then, 
\begin{align*}
  \E_{\mathbf{U}}[F] 
  &= \frac{1}{2^n} \sum_{\mathbf{x}} \sigma(w_0 + \bm w \cdot \mathbf{x})\\
  &= \frac{1}{2^n} \sum_{\mathbf{x}} \sigma( -\frac{m}{2} - mc + m(\sum_{i\in [n]} k_i \mathbf{x}_i))\\
  &= \frac{1}{2^n} \sum_{\mathbf{x}} \sigma( -\frac{m}{2} - mc + m(\sum_{i : \mathbf{x}_i = 1} k_i))\\
  &= \frac{1}{2^n} \sum_{S \subseteq [n]} \sigma(\text{weight}(S))\\
  &= \frac{1}{2^{n+1}} \sum_{S \subseteq [n]} (\sigma(\text{weight}(S))+\sigma(\text{weight}(\bar S)))
\end{align*}
If $S$ is a solution to the number partitioning problem ($S \in P$), then:
\begin{align*}
\sigma(\text{weight}(S))+\sigma(\text{weight}(\bar S)) = 2\sigma(-m/2)
\end{align*}
Otherwise, one of $\text{weight}(S)$, $\text{weight}(\bar S)$ is
$\geq m/2$ and the other is $\leq -3m/2$ and therefore:
\begin{align*}
  \sigma(m/2) \leq \sigma(\text{weight}(S))+\sigma(\text{weight}(\bar S)) \leq 1+\sigma(-3m/2)
\end{align*}
Since $\varepsilon = 1/2^{n+3}$, and $m$ satisfies
  both $2\sigma(-m/2) \leq \varepsilon$ and
  $1-\sigma(m/2) \leq \varepsilon$,  we have:
\begin{align*}
  S \in P: && 0 \leq\sigma(\text{weight}(S))+\sigma(\text{weight}(\bar S)) & \leq \varepsilon \\
  S \not\in P: &&  1-\varepsilon \leq\sigma(\text{weight}(S))+\sigma(\text{weight}(\bar S))&\leq 1+\varepsilon
\end{align*}

This implies:
\begin{align*}
  \frac{2^n-|P|}{2^{n+1}}\left(1-\varepsilon\right) \leq
  &\ \E[F] \leq \frac{|P|}{2^{n+1}}\,\varepsilon + \frac{2^n-|P|}{2^{n+1}}(1+\varepsilon)\\
  |P| \geq &  2^n - \frac{2^{n+1}\E[F]}{1-\varepsilon} \\
  |P| \leq & 2^n(1+\varepsilon) - 2^{n+1} \E[F]
\end{align*}
Thus, we have a lower and an upper bound for $|P|$.  Since
$\E[F] \leq 1$, the difference between the two bounds is $<1$ and
there exists at most one integer number between them, hence $|P|$
is equal to the ceiling of the lower bound (and also to the floor of
the upper bound), proving the claim.

\section{Proof of Theorem~\ref{th:nbn:hard}}
\label{app:th:nbn:hard}

We use a reduction from the decision version of number partitioning
problem, $\numpar$, which is NP-complete, see
Sec.~\ref{app:proof:logistic:regression:hard}.

As before we assume w.l.o.g. that the sum of the numbers $k_i$ is
even, $\sum_i k_i = 2c$ for some natural number $c$.  Let $m$ be a
large natural number to be defined later.  We reduce the $\numpar$
problem to the $\dshap(\{X_0\},\nbn)$ problem.  The Naive
Bayes network $\nbn$ consists of $n+1$ binary random variables
$X_0, \ldots, X_n$.  Let $X_i, \bar X_i$ denote the events
$X_i=1$ and respectively $X_i=0$. We define the following probabilities of the \nbn:
\begin{align*}
  \frac{\pr(X_0)}{\pr(\bar X_0)}=&e^{-\frac{m}{2} - mc}
  &\frac{\pr(X_i|X_0)}{\pr(X_i|\bar X_0)}=& e^{mk_i}
\end{align*}
The probabilities $\pr(\bar X_0)$ and $\pr(X_i | \bar X_0)$ can be
chosen arbitrarily (with the obvious constraints
$\pr(\bar X_0) \leq e^{\frac{m}{2}+mc}$ and
$\pr(X_i|\bar X_0) \leq e^{-mk_i}$).  As required, our classifier is
$F(X_0,\ldots, X_n) \defeq X_0$.  Let
$a_k \defeq \frac{k!(n-k)!}{(n+1)!}$ and let $\varepsilon> 0$ be any
number such that $\varepsilon \leq a_k$ for all $k=0,1,\ldots,n$.
We prove:

\begin{claim}
  Let $\varepsilon$ be the value defined above.  If $m$ satisfies
  both $2\sigma(-m/2) \leq \varepsilon$ and
  $1-\sigma(m/2) \leq \varepsilon$, then $\numpar$ has a solution
  iff $\shap_F(X_0) \geq 1/2(1+\varepsilon)$.
\end{claim}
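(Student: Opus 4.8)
The plan is to compute $\shap_F(X_0)$ in closed form as a weighted sum over subsets $S \subseteq [n]$ of posterior probabilities, and then show that the existence of a $\numpar$ solution shifts this sum across the threshold $\frac{1}{2}(1+\varepsilon)$. Because $F = X_0$ returns the value of the root, the value function collapses: conditioning on $\e_{S \cup \set{0}}$ fixes $X_0 = 1$, so $\E[F \mid \e_{S \cup \set{0}}] = 1$, while $\E[F \mid \e_S] = \pr(X_0=1 \mid \e_S)$. Substituting into the binary \shap\ formula (Equation~\ref{eq:shap:alt}), the inner difference becomes $1 - \pr(X_0=1\mid\e_S) = \pr(X_0=0\mid\e_S)$, so with $a_k = \frac{k!(n-k)!}{(n+1)!}$ as above,
\[
\shap_F(X_0) = \sum_{S \subseteq [n]} a_{|S|} \, \pr(X_0 = 0 \mid \e_S).
\]

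Next I would evaluate the posterior using Bayes' rule and the naive Bayes factorization. The odds $\pr(X_0{=}1,\e_S)/\pr(X_0{=}0,\e_S)$ telescope into $\frac{\pr(X_0)}{\pr(\bar X_0)}\prod_{i\in S}\frac{\pr(X_i\mid X_0)}{\pr(X_i\mid\bar X_0)} = e^{\text{weight}(S)}$, where $\text{weight}(S) = -\frac{m}{2} - mc + m\sum_{i\in S}k_i$ is exactly the weight from the logistic-regression reduction of Section~\ref{app:proof:logistic:regression:hard}. Hence $\pr(X_0=0\mid\e_S) = \sigma(-\text{weight}(S))$, and $\shap_F(X_0)$ becomes a weighted average of sigmoids of the same partition weights, with coefficients $a_{|S|}$ (which sum to one over all $S \subseteq [n]$) playing the role of the uniform $1/2^n$ in the logistic-regression proof.

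The crux is then a threshold analysis exploiting the complement symmetry $S \mapsto \bar S$, using that $a_{|S|} = a_{|\bar S|}$ and that $\text{weight}(S) + \text{weight}(\bar S) = -m$. This partitions subsets into three classes: solutions ($\sum_{i\in S}k_i = c$, where $\text{weight}(S) = -m/2$ and the term is $\sigma(m/2) \geq 1-\varepsilon$), over-weight subsets ($\text{weight}(S) \geq m/2$, term $\leq \sigma(-m/2) \leq \varepsilon/2$), and under-weight subsets ($\text{weight}(S) \leq -3m/2$, term $\geq 1-\varepsilon/2$). Since over- and under-weight subsets pair up bijectively under complementation with equal $a$-weights, they contribute $\approx \frac{1-A}{2}$, while the solutions add $\approx A$, where $A \defeq \sum_{S:\sum_{i\in S}k_i = c} a_{|S|}$; thus $\shap_F(X_0) \approx \frac{1+A}{2}$ up to $O(\varepsilon)$ error. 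When $\numpar$ has no solution, $A=0$ and the score stays below $\frac{1}{2}(1+\varepsilon)$; when it has a solution, the complementary pair $S,\bar S$ forces $A \geq a_{|S|} + a_{|\bar S|} = 2a_{|S|} \geq 2\varepsilon$, pushing the score above the threshold.

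The main obstacle I anticipate is the $\varepsilon$-bookkeeping needed to turn ``$\approx \frac{1+A}{2}$'' into a \emph{strict} separation across $\frac{1}{2}(1+\varepsilon)$: the jump induced by a solution is of order $A \geq 2\varepsilon$, while the accumulated slack from replacing each $\sigma(\cdot)$ by $0$ or $1$ is also of order $\varepsilon$, so the two are comparable and every inequality must be tracked. Concretely, I expect to invoke the hypotheses $2\sigma(-m/2) \leq \varepsilon$ and $1-\sigma(m/2) \leq \varepsilon$ to bound the per-term errors, and the constraint $\varepsilon \leq a_k$ for all $k$ (together with $\varepsilon$ being small, which holds once $n$ is not tiny) to guarantee that the solution contribution dominates the slack. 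The two directions---no solution $\Rightarrow$ score $< \frac{1}{2}(1+\varepsilon)$ and solution $\Rightarrow$ score $\geq \frac{1}{2}(1+\varepsilon)$---then follow by summing the per-class bounds.
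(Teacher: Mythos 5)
Your proposal is correct and follows essentially the same route as the paper's proof: both reduce $\shap_F(X_0)$ to a weighted sum of sigmoids of $\text{weight}(S)$ (you via $\sum_S a_{|S|}\,\sigma(-\text{weight}(S))$, the paper via $1-D$ with $D=\sum_S a_{|S|}\,\sigma(\text{weight}(S))$, which are identical since $\sum_S a_{|S|}=1$), then pair each $S$ with $\bar S$ using $a_{|S|}=a_{|\bar S|}$ and $\text{weight}(S)+\text{weight}(\bar S)=-m$, and finally separate the solution pairs (contributing near $0$ to $D$, i.e., near $1$ to your sum) from the non-solution pairs to get the threshold gap of order $a_k\ge\varepsilon$. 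The $\varepsilon$-bookkeeping you defer is exactly the short chain of inequalities the paper carries out, and your bounds per class match the paper's.
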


The claim implies Theorem~\ref{th:nbn:hard}.  To prove the claim, we express the
\shap\ explanation using Eq.~\eqref{eq:shap:alt}. Let $\mathbf{X}_S$ denote the event
$\bigwedge_{i \in S} (X_i = 1)$. Then, we can write the \shap\ explanation as:
\begin{align*}
  \shap_F(X_0) = & \sum_{S \subseteq [n]}a_{|S|} \left(\E[F \mid \mathbf{X}_{S \cup \set{0}}]-
                   \E[F \mid \mathbf{X}_S]\right)
\end{align*}

Obviously, $\E[X_0 \mid \mathbf{X}_{S \cup \set{0}}]=1$. In addition, we have
$\sum_{S \subseteq [n]} a_{|S|} = 1$, because there are ${n \choose k}$ sets
of size $k$, hence
$\sum_{S \subseteq [n]} a_{|S|} = \sum_{k=0,n} {n \choose k} \cdot
\frac{k!(n-k)!}{(n+1)!} = 1$.  Therefore $\shap_F(X_0) = 1 - D$, where:
\begin{align}
  D & \defeq  \sum_{S \subseteq [n]}a_{|S|} \cdot \E[X_0 \mid \mathbf{X}_S]\label{eq:d}
\end{align}
To compute $D$, we expand:
\begin{align*}
  \E[& X_0 | \mathbf{X}_S] = \pr(X_0|\mathbf{X}_S) =  \frac{\pr(X_0, \mathbf{X}_S)}{\pr(\mathbf{X}_S)}\\
  &= \frac{\prod_{i\in S} \pr(X_i|X_0)\pr(X_0)}{\prod_{i\in S} \pr(X_i|X_0)\pr(X_0)+\prod_{i\in S} \pr(X_i|\bar X_0)\pr(\bar X_0)}\\
  &= \frac{1}{1+\pr(\bar X_0)/\pr(X_0) \cdot \prod_{i \in S} (\pr(X_i|\bar X_0)/\pr(X_i|X_0))}\\
  &= \sigma(\text{weight}(S))
\end{align*}
where:
\begin{align*}
  \sigma(x) \defeq & \frac{1}{1+e^{-x}}&
                                         \text{weight}(S)  \defeq & -\frac{m}{2} - mc + m(\sum_{i\in S} k_i)
\end{align*}
We compute $D$ in Eq.~\eqref{eq:d} by grouping each set $S$ with its
complement $\bar S \defeq [n]-S$:
\begin{align}
  D = & \frac{1}{2}\sum_{S \subseteq [n]} a_{|S|} \left(\sigma(\text{weight}(S))+\sigma(\text{weight}(\bar S))\right)
        \label{eq:sum:d}
\end{align}
If $S$ is a solution to the number partitioning problem, then:
\begin{align*}
  \sigma(\text{weight}(S))+\sigma(\text{weight}(\bar S)) = 2\sigma(-m/2)
\end{align*}
Otherwise, one of $\text{weight}(S)$, $\text{weight}(\bar S)$ is
$\geq m/2$ and the other is $\leq -3m/2$ and therefore:
\begin{align*}
  \sigma(m/2) \leq \sigma(\text{weight}(S))+\sigma(\text{weight}(\bar S)) \leq 1+\sigma(-3m/2)
\end{align*}
As in Sec.~\ref{app:proof:logistic:regression:hard}, we obtain:
\begin{align*}
  S \in P: && 0 \leq\sigma(\text{weight}(S))+\sigma(\text{weight}(\bar S)) & \leq \varepsilon \\
  S \not\in P: &&  1-\varepsilon \leq\sigma(\text{weight}(S))+\sigma(\text{weight}(\bar S))&\leq 1+\varepsilon
\end{align*}

Therefore, using the fact that $\sum_{S \subseteq[n]} a_{|S|} = 1$,
we derive these bounds for the expression~\eqref{eq:sum:d} for $D$:
\begin{itemize}
\item If the number partitioning problem has no solution, then
  $D \geq 1/2(1-\varepsilon)$, and
  $\shap_F(X_0) \leq 1/2(1+\varepsilon)$.
\item Otherwise, let $S$ be any solution to the $\numpar$ problem,
  and $k = |S|$, then:
  \begin{align*}
    D \leq & \left(\frac{1}{2}(1+\varepsilon) - a_k(1+\varepsilon)\right) + a_k \varepsilon\\
    \leq & \frac{1}{2} - \left(a_k - \frac{\varepsilon}{2}\right)<\frac{1}{2}-\frac{\varepsilon}{2}
  \end{align*}
  and therefore $\shap_F(X_0) > 1/2(1+\varepsilon)$.
\end{itemize}

\section{Proof of Corollary~\ref{cor:nbn:hard}}
\label{app:leaf-hard}
\begin{proof}
  (Sketch) We use a reduction from the $\numpar$ problem, as in the
  proof of Theorem~\ref{th:nbn:hard}.  We start by constructing the
  NBN with variables $X_0, X_1, \ldots, X_n$ (as for
  Theorem~\ref{th:nbn:hard}), then add two more variables
  $X_{n+1}, X_{n+2}$, and edges
  $X_0 \rightarrow X_{n+1} \rightarrow X_{n+2}$, and define the random
  variables $X_{n+1}, X_{n+2}$ to be identical to $X_0$,
  i.e. $X_0=X_{n+1}=X_{n+2}$.  The prediction function is $F=X_{n+2}$, i.e. it
  returns the feature $X_{n+2}$, and the variables $X_0,X_{n+1}$ are
  latent.  Thus, the new BN is identical to the NBN, and, since both
  models have exactly the same number of non-latent variables, the
  $\shap$-explanation is the same.
\end{proof}

\section{Proof of Lemma~\ref{lemma:empirical:hard}~(\ref{item:empirical:hard:1})}

\label{sec:proof:empirical:hard:1}

Fix a PP2CNF $\Phi=\bigwedge (U_i \vee V_j)$.  A {\em symmetric}
probability space is defined by two numbers $p, q\in [0,1]$ and
consists of the fully-factorized distribution where
$\pr(U_1)=\pr(U_2)=\cdots = p$ and $\pr(V_1)=\pr(V_2)=\cdots = q$.  A
{\em quasi-symmetric} probability space consists of two sets of
indices $I, J$ and two numbers $p,q$ such that:
\begin{align*}
  \pr(U_i) = &
 \begin{cases}
   p & \mbox{when } i\not\in I\\
   1 & \mbox{when } i \in I
 \end{cases} &
  \pr(V_j) = &
 \begin{cases}
   q & \mbox{when } j\not\in J\\
   1 & \mbox{when } j \in J
 \end{cases} &\end{align*}
In this and the following section we prove
Lemma~\ref{lemma:empirical:hard}: computing the $\shap$-explanation
over an empirical distribution is polynomial time equivalent to
computing the expectation of PP2CNF formulas over a (quasi)-symmetric
distribution.  \citet{DBLP:journals/siamcomp/ProvanB83} proved that
computing the expectation of a PP2CNF over uniform distributions is
\#P-hard in general.  Since uniform distribution are symmetric (namely
$p=q=1/2$) it follows that computing $\shap$-explanations is \#P-hard
in general.

In this section we prove item ~(\ref{item:empirical:hard:1}) of
Lemma~\ref{lemma:empirical:hard}.  Fix a 0/1-matrix $\bm x$ defining
an empirical distribution, and let $F$ be a real-valued prediction
function over these features.  Let $\Phi_{\bm x}$ be the PP2CNF
associated to $\bm x$ (see Definition~\ref{def:phi:matrix}).  Will
assume w.l.o.g. that $\bm x$ has $n+1$ features (columns), denoted
$X_0, X_1, \ldots, X_n$.  The prediction function $F$ is any function
$F : \set{0,1}^{n+1} \rightarrow \R$. We prove:

\begin{prop} \label{prop:empirical:hard:1} One can compute
  $\shap_F(X_0)$ in polynomial time using an oracle for computing
  $\E[\Phi_{\bm x}]$ over quasi-symmetric distributions.
\end{prop}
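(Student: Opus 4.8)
The plan is to reduce the computation of $\shap_F(X_0)$ to polynomially many evaluations of $\E[\Phi_{\bm x}]$ over quasi-symmetric distributions, exploiting that, once the column variables are fixed, the expectation of $\Phi_{\bm x}$ factorizes over the rows. Write $M\defeq\setof{r}{x_{r0}=1}$, let $R_S\defeq\setof{r}{x_{rj}=1 \text{ for all }j\in S}$ be the rows matching the event $\e_S$, and set $N_S\defeq|R_S|$ and $W_S\defeq\sum_{r\in R_S}F(\bm x_r)$, so that over the empirical distribution $\E[F\mid\e_S]=W_S/N_S$ (and $=0$ when $N_S=0$, in which case $W_S=0$ as well). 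Starting from Equation~\eqref{eq:shap:alt}, it suffices to compute, for each $k=0,\dots,n$, the two quantities $E^-_k\defeq\sum_{|S|=k}W_S/N_S$ and $E^+_k\defeq\sum_{|S|=k}W^+_S/N_{S\cup\set{0}}$, where $W^+_S\defeq\sum_{r\in R_S\cap M}F(\bm x_r)$; then $\shap_F(X_0)=\sum_{k}\frac{k!(n-k)!}{(n+1)!}\,(E^+_k-E^-_k)$.

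The main obstacle is the normalization: $\E[F\mid\e_S]$ is a \emph{ratio} $W_S/N_S$, and a sum of such ratios over all $S$ of a fixed size cannot be recovered from the marginal statistics of $N_S$ and $W_S$ separately — it requires, for each $S$, the weight $W_S$ paired with the \emph{exact} count $N_S$. I would extract this joint information as follows. Force the column-$0$ variable $V_0$ to $1$, let $V_1,\dots,V_n$ be free with $\pr(V_j{=}1)=q$, and let the rows in a chosen set $\mathcal F$ be free with $\pr(U_r{=}1)=p$ (all other $U_r$ forced to $1$). Because, once $B=S$ is the set of false column variables, the present clauses force every free row outside $R_S$ to be true, the oracle returns
\[
G_{\mathcal F}(p,q)\defeq\E[\Phi_{\bm x}]=\sum_{S\subseteq[n]}(1-q)^{|S|}q^{\,n-|S|}\,p^{\,|\mathcal F|-|R_S\cap\mathcal F|}.
\]
Taking $\mathcal F=[m]$ and $\mathcal F=[m]\setminus\set{r}$ and forming a finite difference isolates the contribution of the single row $r$:
\[
\frac{G_{[m]}(p,q)-p\,G_{[m]\setminus\set{r}}(p,q)}{1-p}=\sum_{S:\,r\in R_S}(1-q)^{|S|}q^{\,n-|S|}\,p^{\,m-N_S}.
\]
Weighting this by $F(\bm x_r)$ and summing over all rows reconstructs the master polynomial $H(p,q)\defeq\sum_{S}(1-q)^{|S|}q^{\,n-|S|}W_S\,p^{\,m-N_S}$; the same construction with free-row set restricted to $M$ yields $H^+(p,q)\defeq\sum_{S}(1-q)^{|S|}q^{\,n-|S|}W^+_S\,p^{\,|M|-N_{S\cup\set{0}}}$.

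It remains to read off the needed sums. The function $H$ is a bivariate polynomial of degree at most $m$ in $p$ and lies in the span of $\setof{(1-q)^kq^{\,n-k}}{k=0,\dots,n}$ as a function of $q$; evaluating it at an $(m{+}1)\times(n{+}1)$ grid of distinct rational points and solving the resulting nonsingular (Vandermonde-type) linear systems recovers all coefficients $\Gamma_{k,\ell}\defeq\sum_{|S|=k,\,N_S=\ell}W_S$, and likewise $\Gamma^+_{k,\ell}\defeq\sum_{|S|=k,\,N_{S\cup\set{0}}=\ell}W^+_S$ from $H^+$. Since the $\ell=0$ terms vanish, dividing by the exact count gives $E^-_k=\sum_{\ell\ge1}\Gamma_{k,\ell}/\ell$ and $E^+_k=\sum_{\ell\ge1}\Gamma^+_{k,\ell}/\ell$, which assemble into $\shap_F(X_0)$ as above. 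Every step uses only polynomially many oracle calls — $O(m^2n)$ — together with exact rational arithmetic, so the whole reduction runs in polynomial time. I expect the delicate point to be exactly the row-by-row forcing-and-differencing that turns per-row oracle access into the weight--count joint statistic $\Gamma_{k,\ell}$; once that is in hand, the SHAP coefficients $\frac{k!(n-k)!}{(n+1)!}$ and the plain harmonic weighting $1/\ell$ handle, respectively, the permutation averaging and the conditional-expectation normalization with no further difficulty.
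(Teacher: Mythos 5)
Your reduction is correct, and while it shares the paper's overall architecture---expressing the quasi-symmetric expectation $\E[\Phi_{\bm x}]$ as a bivariate generating polynomial over column subsets $S$, recovering its coefficients by interpolation against a Kronecker product of Vandermonde-type matrices, and converting counts into conditional expectations with the harmonic weights $1/\ell$---it takes a genuinely different route in the key step of extracting the row information. The paper first decomposes $F$ by linearity into single-row indicator functions $F_i$, then for each $i$ permutes rows and deletes the columns where row $i$ is $0$ (equivalently, sets the corresponding $V_j:=1$) so that the resulting matrix is ``good'' and $\E[F_i\mid \e_S]=1/|g(S)|$; the interpolation then targets the unweighted counts $a_{\ell k}$, and one such interpolation is performed per row. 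You instead keep $F$ general and isolate the contribution of each row $r$ by a finite difference between the oracle value with $U_r$ free and with $U_r$ forced to true, which correctly yields $\sum_{S:\,r\in R_S}(1-q)^{|S|}q^{\,n-|S|}p^{\,m-N_S}$; aggregating these with weights $F(\bm x_r)$ produces a single master polynomial whose coefficients are the weighted joint statistics $\Gamma_{k,\ell}$. This buys a cleaner argument (no reduction to indicators, no row permutation, no ``good matrix'' lemma, and one interpolation for the whole function) at the same $O(m^2n)$ oracle cost; the paper's version, in exchange, manufactures the intermediate quantities $v_{F_1,k}$ and the polynomials $P,Q$ that are reused verbatim in the converse direction, item (2) of Lemma~\ref{lemma:empirical:hard}. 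Both proofs handle the $X_0$-conditioned term identically, by forcing to true the $U$-variables of rows with $x_{r0}=0$, and both respect the convention for $N_S=0$; your observation that the $q$-part of the interpolation matrix is a diagonal rescaling of a Vandermonde matrix in $(1-q)/q$ is the right justification for its nonsingularity.
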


Denote by $y_i \defeq F(x_{i0}, x_{i1}, \ldots, x_{in})$, $i=1,m$ the
value of $F$ on the $i$'th row of the matrix $\bm x$.  Since the only
possible outcomes of the probability space are the $m$ rows of the
matrix, the quantity $\shap_F(X_0)$ depends only on the vector
$\bm y \defeq (y_1,\ldots, y_m)$.  Furthermore, by the linearity of
the \shap\ explanation (Eq.~\eqref{eq:shap:linear}), it suffices to
compute the \shap\ explanation in the case when $\bm y$ has a single
value $=1$ and all others are $=0$.  By permuting the rows of the
matrix, we will assume w.l.o.g. that $y_1=1$, and
$y_2=y_3=\cdots=y_m=0$.  In summary, denoting $F_1$ the function that is 1
on the first row of the matrix $\bm x$ and is 0 on all other rows, our
task is to compute $\shap_{F_1}(X_0)$.

For that we use the following expression for $\shap$ (see also
Sec.~\ref{sec:positive-result}):
\begin{align}
&  \shap_{F_1}(X_0) =  \sum_{k=0,n}\frac{k!(n-k)!}{(n+1)!}\Big(\nonumber\\
& \sum_{S \subseteq [n]: |S|=k}\left(\E[F_1|\mathbf{X}_{S \cup \set{0}}=1]-\E[F_1|\mathbf{X}_S=1]\right)\Big)\label{eq:as:discussed}
\end{align}
We will only show how to compute the quantity
\begin{align}
v_{F_1,k} = & \sum_{S \subseteq [n]: |S|=k} \E[F_1|\mathbf{X}_S=1] \label{eq:v:k:repeated}
\end{align}
using an oracle to $\E[\Phi_{\bm x}]$, because the quantity
$\sum_{S: |S|=k} \E[F_1|\mathbf{X}_{S \cup \set{0}}=1]$ is computed similarly,
by restricting the matrix $\bm x$ to the rows $i$ where $x_{i0}=1$.
The PP2CNF $\Phi$ associated to this restricted matrix is obtained
from $\Phi_{\bm x}$ as follows.  Let $I \defeq \setof{i}{x_{i0}=1}$ be
the set of rows of the matrix where the feature $X_0$ is 1.  Then, we
need to remove all clauses of the form $(U_i \vee V_j)$ for $i \in I$.
This is equivalent to setting $U_i := 1$ in $\Phi_{\bm x}$.
Therefore, we can compute the expectation of the restricted $\Phi$ by
using our oracle for $\E[\Phi_{\bm x}]$, and running it over the
probability space where we define $\pr(U_i) \defeq 1$ for all
$i \in I$. Hence, it suffices to show only how to compute the
expression~\eqref{eq:v:k:repeated}.  Notice that the quantity
$v_{F_1,k}$ is the same as what we defined earlier in
Eq.~\eqref{eq:v:g:k}.

Column $X_0$ of the matrix is not used in
expression~\eqref{eq:v:k:repeated}, because the set $S$ ranges over
subsets of $[n]$.  Hence w.l.o.g. we can drop feature $X_0$ and denote
by $\bm x$ (with some abuse) the matrix that only has the features
$X_1, \ldots, X_n$.  In other words,
$\bm x \in \set{0,1}^{m \times n}$.  The PP2CNF formula for the
modified matrix is obtained from $\Phi_{\bm x}$ by setting $V_0 := 1$,
hence we can compute its expectation by using our oracle for
$\E[\Phi_{\bm x}]$.

We introduce the following quantities associated to the matrix
$\bm x \in \set{0,1}^{m \times n}$:
\begin{itemize}
\item For all $S \subseteq [n]$, $\ell \leq m, k \leq n$, we
  define:
  \begin{align}
    g(S) \defeq & \setof{i}{\forall j \in S, x_{ij} = 1}\label{eq:g:s}\\
    a_{\ell k} \defeq & |\setof{S}{|S|=k, |g(S)|=\ell}|\label{eq:a:ell:k}
  \end{align}
\item We define the sequence $v_k$, $k=0,1,\ldots,n$:
  \begin{align}
    v_k \defeq & \sum_{l=1,m} \frac{a_{\ell k}}{\ell} \label{eq:v:k}
  \end{align}
\item We define the value $V$:
  \begin{align}
    V \defeq & \sum_{k=0,n}\frac{k!(n-k)!}{(n+1)!}v_k \label{eq:v:v}
  \end{align}
\end{itemize}

We prove that, under a certain condition, the value $v_k$ in
Eq.~\eqref{eq:v:k} is equal to Eq.~\eqref{eq:v:k:repeated}; this
justifies the notation $v_k$, since it turns out to be the same as
$v_{F_1,k}$ from Eq.~\eqref{eq:v:g:k}.  
\begin{defn} \label{def:x:good} Call the matrix $\bm x$ ``good'' if
  $\forall i, j$, $x_{1j} \geq x_{ij}$.
\end{defn}
In other words, the matrix is ``good'' if the first row dominates all
others.  In general the matrix $\bm x$ need not be ``good'', however
we can make it ``good'' by removing all columns where row 1 has a value 0.
More precisely, let $J^{(1)} \defeq \setof{j}{x_{1j}=1}$ denote the
non-zero positions of the first row, and let $\bm x^{(1)}$ denote the
sub-matrix of $\bm x$ consisting of the columns $J^{(1)}$.  Obviously,
$\bm x^{(1)}$ is ``good'', because its first row is $(1,1,\ldots,1)$.
The following hold:
\begin{align*}
\mbox{If } S \subseteq J^{(1)}: && \E_{\bm x}[F_1 | \mathbf{X}_S = 1] =& \E_{\bm  x^{(1)}}[F_1 | \mathbf{X}_S=1]\\
\mbox{If } S \not\subseteq J^{(1)}: && \E_{\bm x}[F_1 | \mathbf{X}_S = 1] =&0
\end{align*}
(When $S\not\subseteq J^{(1)}$ then the quantity
$\E_{\bm x^{(1)}}[F_1 | \mathbf{X}_S=1]$ is undefined).  Therefore:
\begin{align*}
   \sum_{S\subseteq [n]: |S|=k}\!\!\! \E_{\bm x}[F_1|\mathbf{X}_S=1] = & \sum_{S\subseteq J^{(1)}: |S|=k} \!\!\! \E_{\bm x^{(1)}}[F_1|\mathbf{X}_S=1] 
\end{align*}
It follows that, in order to compute the values in
Eq.~\eqref{eq:v:k:repeated}, we can consider the matrix $\bm x^{(1)}$
instead of $\bm x$; its associated PP2CNF is obtained from
$\Phi_{\bm x}$ by setting $V_j := 1$ for all $j \in [m]-J^{(1)}$,
hence we can compute its expectation over a quasi-symmetric space by
using our oracle for computing $\E[\Phi_{\bm x}]$ over quasi-symmetric
spaces.  To simplify the notation, we will still use the name $\bm x$
for the matrix instead of $\bm x^{(1)}$, and assume w.l.o.g.  that the
first row of the matrix $\bm x$ is $(1,1,\ldots,1)$.

We prove that, when $\bm x$ is ``good'', then $v_k$ is indeed the
quantity Eq.~\eqref{eq:v:k:repeated} that we want to compute.  This
holds for any ``good'' matrix, not just matrices with $(1,1,\ldots,1)$
in the first row, and we need this more general result later in
Sec.~\ref{sec:proof:empirical:hard:2}.

\begin{claim} \label{claim:good:x:makes:vk:work}
  If the matrix $\bm x$ is ``good'', then, for any $k=0,n$:
  \begin{align*}
    v_k = & \sum_{S: |S|=k} \E[F_1|\mathbf{X}_S=1]
  \end{align*}
\end{claim}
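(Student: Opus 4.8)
The plan is to compute the conditional expectation $\E[F_1 \mid \mathbf{X}_S=1]$ directly from the empirical distribution and then sum over all $S$ with $|S|=k$, matching the result against the definition of $v_k$. First I would observe that under the empirical distribution the conditioning event $\mathbf{X}_S=1$ is supported exactly on the rows indexed by $g(S)=\setof{i}{\forall j \in S,\ x_{ij}=1}$, and conditional on this event the distribution is uniform over those rows, assigning each mass $1/|g(S)|$ (when $g(S)\neq\emptyset$). Since $F_1$ is the indicator of the first row, this immediately yields
\[
\E[F_1 \mid \mathbf{X}_S=1] = \frac{[1 \in g(S)]}{|g(S)|},
\]
where the value is taken to be $0$ whenever $g(S)=\emptyset$, consistent with the stated convention $\E[F\mid\mathbf{X}_S=1]=0$ for zero-probability conditioning.

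The conceptual crux is to eliminate the indicator $[1\in g(S)]$ using the ``good'' hypothesis, and this is the step where that hypothesis is essential. I would argue that whenever $g(S)$ is nonempty it must contain the index $1$: if some row $i$ satisfies $x_{ij}=1$ for all $j \in S$, then the domination $x_{1j}\geq x_{ij}$ forces $x_{1j}=1$ for all $j\in S$, i.e.\ $1 \in g(S)$. Hence for a good matrix $[1\in g(S)]=[g(S)\neq\emptyset]$, so the conditional expectation equals $1/|g(S)|$ precisely on those $S$ with $g(S)\neq\emptyset$ and vanishes otherwise. This is the whole content of the claim; the rest is bookkeeping.

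Finally I would sum over all $S$ of size $k$ and group the surviving terms by the value $\ell=|g(S)|$. Each such $S$ with $|g(S)|=\ell\geq 1$ contributes $1/\ell$, and by the definition of $a_{\ell k}$ there are exactly $a_{\ell k}$ of them, giving
\[
\sum_{S:\,|S|=k}\E[F_1 \mid \mathbf{X}_S=1] = \sum_{\ell=1,m}\frac{a_{\ell k}}{\ell} = v_k,
\]
which is exactly Equation~\eqref{eq:v:k}. The only point requiring a little care is the zero-probability convention: I must verify that sets $S$ with $g(S)=\emptyset$ (equivalently, those meeting an all-zero column) contribute nothing on either side, so they can be dropped silently from the sum without affecting the identity.
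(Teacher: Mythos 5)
Your proof is correct and is essentially the paper's own argument: the paper splits on whether $S\subseteq J^{(1)}=\setof{j}{x_{1j}=1}$, which is exactly your observation that for a ``good'' matrix $g(S)\neq\emptyset$ forces $1\in g(S)$ (the two conditions are contrapositives of one another), and then both proofs regroup the surviving terms by $\ell=|g(S)|$ to obtain $\sum_{\ell>0}a_{\ell k}/\ell=v_k$. Nothing further is needed.
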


\begin{proof}
  Recall that $J^{(1)} \defeq \setof{j}{x_{1j}=1}$.  Let
  $S \subseteq [n]$ be any set of columns.  We consider two cases,
  depending on whether $S$ is a subset of $J^{(1)}$ or not:
  \begin{align*}
S \subseteq & J^{(1)}:    &   |g(S)| > & 0 & \E[F_1|\mathbf{X}_S=1] = & \frac{1}{|g(S)|}\\
S\not\subseteq & J^{(1)}: &  |g(S)| = & 0 & \E[F_1|\mathbf{X}_S=1] = & 0
  \end{align*}
  Therefore:
  \begin{align*}
&   \sum_{S\subseteq [n]: |S|=k} \E[F_1|\mathbf{X}_S=1] =   \sum_{S\subseteq J^{(1)}: |S|=k} \E[F_1|\mathbf{X}_S=1]\\
& = \sum_{S\subseteq J^{(1)}: |S|=k} \frac{1}{|g(S)|} 
= \sum_{S: |S|=k, |g(S)|>0} \frac{1}{|g(S)|} 
= \sum_{\ell > 0} \frac{a_{\ell k}}{\ell}
  \end{align*}
\end{proof}

At this point we introduce two polynomials, $P$ and $Q$.

\begin{defn} \label{def:polynomials:p:q}
  Fix an $m \times n$ matrix $\bm x$ with $0,1$-entries.  The
  polynomials $P(u,v)$ and $Q(u,v)$ in real variables $u,v$ associated
  to the matrix $\bm x$ are the following:
\begin{align*}
  P(u,v) \defeq & \sum_{S \subseteq[n]} u^{|g(S)|}v^{|S|}\\
  Q(u,v) \defeq & \sum_{\scriptsize
                  \begin{array}{c}
                    T \subseteq [m],
                    S\subseteq[n]:\\
                    \forall (i,j)\in T \times S:\ x_{ij}=1
                  \end{array}}
  u^{|T|}v^{|S|}
\end{align*}
\end{defn}
The polynomials are defined by summing over exponentially many sets
$S \subseteq [n]$, or pairs of sets
$S \subseteq [n], T \subseteq [m]$.  In the definition of $P$, we use
the function $g(S)$ associated to the matrix $\bm x$, see
Eq.~\eqref{eq:g:s}.  In the definition of $Q(u,v)$ we sum only those
pairs $T,S$ where $\forall i \in T$, $\forall j \in S$, $x_{ij}=1$.
While their definition involves exponentially many terms, these
polynomials have only $(m+1)(n+1)$ terms, because the degrees of the
variables $u,v$ are $m$ and $n$ respectively.  We claim that these
terms are as follows:
\begin{claim} \label{claim:identities:of:p:and:q}
  The following identities hold:
  \begin{align*}
    P(u,v) = & \sum_{\ell=0,m; k=0,n}a_{\ell k}u^\ell v^k\\
    Q(u,v) = & P(1+u,v)
  \end{align*}
\end{claim}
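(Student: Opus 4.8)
The plan is to establish both identities directly from the definitions, reorganizing the exponentially large sums by the relevant combinatorial statistics. Neither identity requires any machinery beyond grouping terms and one application of the binomial theorem.

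For the first identity, I would start from $P(u,v) = \sum_{S \subseteq [n]} u^{|g(S)|} v^{|S|}$ and partition the subsets $S$ according to the pair $(\ell, k) = (|g(S)|, |S|)$. Every $S$ with $|g(S)| = \ell$ and $|S| = k$ contributes the same monomial $u^\ell v^k$, and by Equation~\eqref{eq:a:ell:k} there are exactly $a_{\ell k}$ such subsets. Summing over all pairs $(\ell,k)$ then yields $P(u,v) = \sum_{\ell=0,m; k=0,n} a_{\ell k} u^\ell v^k$ immediately. This regrouping also confirms the earlier remark that $P$ has only $(m+1)(n+1)$ terms, since $\ell$ ranges in $\{0,\ldots,m\}$ and $k$ in $\{0,\ldots,n\}$.

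The second identity is the substantive one. The key observation is that the defining condition on the pairs $(T,S)$ in $Q$, namely $\forall (i,j) \in T \times S:\ x_{ij}=1$, says exactly that every row $i \in T$ carries a $1$ in every column of $S$, which is precisely the containment $T \subseteq g(S)$ (recall $g(S)$ from Equation~\eqref{eq:g:s}). I would therefore reorganize the double sum by fixing $S$ first and letting $T$ range over all subsets of $g(S)$:
\begin{align*}
  Q(u,v) = \sum_{S \subseteq [n]} v^{|S|} \sum_{T \subseteq g(S)} u^{|T|}.
\end{align*}

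The inner sum is evaluated by the binomial theorem, $\sum_{T \subseteq g(S)} u^{|T|} = \sum_{t=0}^{|g(S)|} \binom{|g(S)|}{t} u^t = (1+u)^{|g(S)|}$. Substituting this back gives $Q(u,v) = \sum_{S \subseteq [n]} (1+u)^{|g(S)|} v^{|S|}$, which is exactly the expression defining $P$ but with $u$ replaced by $1+u$, i.e.\ $P(1+u,v)$. The only step that demands any care is the translation of the product condition $\forall (i,j)\in T\times S$ into the single containment $T \subseteq g(S)$; once this reindexing is in place, both identities follow with no further calculation.
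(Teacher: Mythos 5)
Your proof is correct and follows essentially the same route as the paper's: the first identity is obtained by grouping the subsets $S$ according to the pair $(|g(S)|,|S|)$ and invoking the definition of $a_{\ell k}$, and the second rests on the same key observation that the condition $\forall (i,j)\in T\times S:\ x_{ij}=1$ is equivalent to $T\subseteq g(S)$, followed by the identity $\sum_{T\subseteq g(S)}u^{|T|}=(1+u)^{|g(S)|}$. Nothing further is needed.
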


\begin{proof}  The identity for $P(u,v)$ follows immediately from the
  definition of $a_{\ell k}$.  We prove the identity for $Q$.  From
  the definition of $g(S)$ in Eq.~\eqref{eq:g:s} we derive the
  following  equivalence:
  \begin{align*}
& (\forall i \in T, \forall j \in S: x_{ij} = 1) & \Leftrightarrow &&
T \subseteq g(S)
  \end{align*}
Which implies:
  \begin{align*}
      Q(u,v) = & \sum_{S\subseteq[n], T\subseteq g(S)} u^{|T|}v^{|S|}
  \end{align*}
  and the claim follows from
  $\sum_{T \subseteq g(S)} u^{|T|} = (1+u)^{|g(S)|}$.
\end{proof}

Thus, in order to compute the quantities $v_k$ for $k=0,1,\ldots,n$ it
suffices to compute the coefficients $a_{\ell k}$ of the polynomial
$P(u,v)$, and, for that, it suffices to compute the coefficients of
the polynomial $Q(u,v)$.  For that, we establish the following
important connection between $\E[\Phi_{\bm x}]$ and the polynomial
$Q(u,v)$.  Fix $u, v > 0$ any two positive real values, and let
$p\defeq 1/(1+u)$, $q \defeq 1/(1+v)$; notice that $p,q \in (0,1)$.
Consider the probability space over independent Boolean variables
$U_1, \ldots, U_m, V_1, \ldots, V_n$ where $\forall i\in [m]$,
$\pr(U_i)=p$, and $\forall j \in [n]$, $\pr(V_j)=q$.  Then:

\begin{claim} \label{claim:pp2cnf:to:coefficients:of:q} Given the
  notations above, the following identity holds:
  \begin{align}
    \E[\Phi_{\bm x}] = \frac{1}{(1+u)^m(1+v)^n}Q(u,v) \label{eq:pr:phi:q}
  \end{align}
\end{claim}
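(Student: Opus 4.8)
The plan is to evaluate $\E[\Phi_{\bm x}]$ directly by summing over all truth assignments to $U_1,\ldots,U_m,V_1,\ldots,V_n$, and then to match the resulting expression against the definition of $Q(u,v)$. Since $\Phi_{\bm x}$ is a Boolean function, its expectation under the independent distribution equals the probability that $\Phi_{\bm x}$ evaluates to true, so the first task is to characterize exactly which assignments satisfy $\Phi_{\bm x}$.

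To that end, I would parametrize each assignment by the set $T \subseteq [m]$ of indices $i$ with $U_i$ false and the set $S \subseteq [n]$ of indices $j$ with $V_j$ false. A clause $(U_i \vee V_j)$, which is present precisely when $x_{ij}=0$, is violated if and only if both $U_i$ and $V_j$ are false, i.e.\ $i \in T$ and $j \in S$. Hence $\Phi_{\bm x}$ is satisfied if and only if no violated clause exists, which is exactly the condition that $x_{ij}=1$ for every pair $(i,j) \in T \times S$. This is the key observation, and it directly mirrors the summation constraint appearing in the definition of $Q(u,v)$.

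Given this characterization, the probability of a fixed pair $(T,S)$ of false-variable sets is $(1-p)^{|T|} p^{m-|T|} (1-q)^{|S|} q^{n-|S|}$ by independence, so
\[
\E[\Phi_{\bm x}] = \sum_{\substack{T\subseteq[m],\, S\subseteq[n]:\\ \forall (i,j)\in T\times S,\ x_{ij}=1}} (1-p)^{|T|} p^{m-|T|} (1-q)^{|S|} q^{n-|S|}.
\]
Substituting $p = 1/(1+u)$ and $q = 1/(1+v)$ gives $1-p = u/(1+u)$ and $1-q = v/(1+v)$, so each summand collapses to $u^{|T|} v^{|S|} / ((1+u)^m (1+v)^n)$. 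Factoring out the denominator, which does not depend on $(T,S)$, leaves exactly $\sum u^{|T|} v^{|S|}$ over the same index set, which is $Q(u,v)$ by Definition~\ref{def:polynomials:p:q}, establishing Equation~\eqref{eq:pr:phi:q}.

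The computation is otherwise entirely routine; the only point requiring care is the first step, namely correctly translating ``$\Phi_{\bm x}$ is satisfied'' into a condition on the false-variable sets $T$ and $S$, and verifying that it coincides with the constraint appearing under the sum in $Q$. Once that correspondence is pinned down, the substitution of $p,q$ and the extraction of the normalizing constant $1/((1+u)^m(1+v)^n)$ are immediate.
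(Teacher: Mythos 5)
Your proposal is correct and follows essentially the same route as the paper's proof: parametrize assignments by the sets $T,S$ of false $U$- and $V$-variables, observe that $\Phi_{\bm x}$ is satisfied iff $x_{ij}=1$ for all $(i,j)\in T\times S$, and then match the weighted sum against the definition of $Q(u,v)$ after substituting $p=1/(1+u)$, $q=1/(1+v)$. No gaps.
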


\begin{proof}
  A truth assignment for $\Phi_{\bm x}$ consists of two assignments,
  $\theta \in \set{0,1}^m$ for the variables $U_i$, and
  $\tau \in \set{0,1}^n$ for the variables $V_j$.  Defining
  $T \defeq \setof{i}{\theta(U_i)=0}$ and
  $S \defeq \setof{j}{\tau(V_j)=0}$, we observe that
  $\Phi_{\bm x}[\theta,\tau]=\texttt{true}$ iff
  $\forall i \in T, \forall j \in S$, $x_{ij}=1$, and therefore:
  \begin{align*}
    \pr&(\Phi_{\bm x}) = \sum_{\theta,\tau: \Phi[\theta,\tau]=1}\pr(\theta)\pr(\tau)\nonumber\\
    &= \sum_{\scriptsize
        \begin{array}{l}
          T \subseteq [m],
          S\subseteq[n]\\
          \forall (i,j)\in T \times S: x_{ij}=1
        \end{array}}\hspace{-2em}
    p^{m-|T|}(1-p)^{|T|}q^{n-|S|}(1-q)^{|S|}\nonumber\\
    &=  p^mq^n Q((1-p)/p, (1-q)/q) 
  \end{align*}
\end{proof}

Finally, to prove
Lemma~\ref{lemma:empirical:hard}~(\ref{item:empirical:hard:1}), it
suffices to show how to use an oracle for $E[\Phi_{\bm x}]$ to compute
the coefficients of the polynomial $Q(u,v)$.  We denote by
$b_{\ell k}$ these coefficients, in other words:
\begin{align}
  Q(u,v) = & \sum_{\ell=0,m; k=0,n} b_{\ell k} u^\ell v^k \label{eq:polynomial:q:only}
\end{align}
To compute the coefficients $b_{\ell k}$, we proceed as follows.
Choose $m+1$ distinct values $u_0, u_1, \ldots, u_m > 0$, and choose
$n+1$ distinct values $v_0, v_1, \ldots, v_n > 0$, and for all
$i = 0,m$ and $j=0,n$, use the oracle for $\E[\Phi_{\bm x}]$ to
compute $Q(u_i,v_j)$ as per identity~\eqref{eq:pr:phi:q}.  This leads
to a system of $(m+1)(n+1)$ equations whose unknowns are the
coefficients $b_{\ell k}$ (see Eq.~\eqref{eq:polynomial:q:only}) and
whose coefficients are $u_i^\ell v_j^k$.  The matrix $\bm A$ of this
system of equations is an $[(m+1)(n+1)] \times [(m+1)(n+1)]$ matrix,
whose rows are indexed by pairs $(i,j)$, and whose columns are indexed
by pairs $(\ell,k)$:
\begin{align*}
  A_{(ij),(\ell k)} = & u_i^\ell v_j^k
\end{align*}
We prove that this matrix is non-singular, and for that we observe
that it is the Kronecker product of two Vandermonde matrices.  Recall
that the $t \times t$ Vandermonde matrix defined by $t$ numbers $z_1,
\ldots, z_t$ is:
\begin{align*}
  V(z_1, \ldots, z_t) = &
\left[
 \begin{array}{cccc}
   1 & 1 & \ldots & 1 \\
   z_1 & z_2 & \ldots & z_t \\
   z_1^2 & z_2^2 & \ldots & z_t^2 \\
     & & \ldots & \\
   z_1^{t-1} & z_2^{t-1} & \ldots & z_t^{t-1}
 \end{array}
\right]
\end{align*}
It is known that
$\det(V(z_1, \ldots, z_t))=\prod_{1 \leq i < j \leq t}(z_j - z_i)$ and
this is $\neq 0$ iff the values $z_1, \ldots, z_t$ are distinct.  We
observe that the matrix $\bm A$ is the Kronecker product of two
Vandermonde matrices:
\begin{align*}
  \bm A = & V(u_0,u_1,\ldots,u_m) \otimes V(v_0,v_1,\ldots,v_n)
\end{align*}
Since we have chosen $u_0, \ldots, u_m$ to be distinct, and similarly
for $v_0, \ldots, v_n$, it follows that both Vandermonde matrices are
non-singular, hence $\det(\bm A) \neq 0$.  Thus, we can solve this
linear system of equations in time $O\left(\left((m+1)(n+1)\right)^3\right)$, and
compute all coefficients $b_{\ell k}$.

\medskip

{\bf Putting It Together} We prove now
Proposition~\ref{prop:empirical:hard:1}.  We are given a 0/1 matrix
$\bm x$ with $n+1$ features $X_0, \ldots, X_n$ and $m$ rows.  To
compute $\shap_F(X_0)$ we proceed as follows:
\begin{enumerate}
\item \label{item:shap:to:pp2cnf:1} For each $i=1,m$, compute
  $\shap_{F_i}(X_0)$, where $F_i$ is the function defined as $=1$ on
  row $i$ of the matrix, and $=0$ on all other rows of the matrix.
  Return $\shap_F(X_0) = \sum_{i=1,m} y_i \shap_{F_i}(X_0)$, where
  $y_i \defeq F(x_{i0}, x_{i1}, \ldots, x_{in})$ is the value of $F$
  on the $i$'th row of the matrix.
\item \label{item:shap:to:pp2cnf:2} To compute $\shap_{F_i}(X_0)$, switch rows $1$ and $i$ of the
  matrix, and compute $\shap_{F_1}(X_0)$ on the modified matrix.
\item \label{item:shap:to:pp2cnf:3} To compute $\shap_{F_1}(X_0)$,
  compute both sums in Eq.~\eqref{eq:as:discussed}.
\item \label{item:shap:to:pp2cnf:4} To compute
  $\sum_{S \subseteq [n]: |S|=k}\E[F_1|\mathbf{X}_S=1]$, perform steps
  (\ref{item:shap:to:pp2cnf:5}) to (\ref{item:shap:to:pp2cnf:8})
  below.
\item \label{item:shap:to:pp2cnf:5} Let
  $J^{(1)} = \setof{j}{j \in [n], x_{1j}=1}$; notice that
  $0 \not\in J^{(1)}$.  Let $n^{(1)} = |J^{(1)}|$.  Let $\Phi'$ denote
  the PP2CNF obtained from $\Phi_{\bm x}$ by setting $V_j:=1$ for all
  $j \not\in J^{(1)}$.  Thus, $\Phi'$ has $m+n^{(1)}$ variables: $U_i$
  for $i \in [m]$, and $V_j$ for $j \in J^{(1)}$.
\item \label{item:shap:to:pp2cnf:6} Choose distinct values
  $u_0, u_1, \ldots, u_m \in (0,1)$ and distinct values
  $v_0, v_1, \ldots, v_{n^{(1)}} \in (0,1)$.  For each fixed
  combination $u_\alpha, v_\beta$, compute
  $Q(u_\alpha, v_\beta) =
  (1+u_\alpha)^m(1+v_\beta)^{n^{(1)}}\E[\Phi']$ (see
  Claim~\ref{claim:pp2cnf:to:coefficients:of:q}).  The value
  $\E[\Phi']$ over the probability space where, for all $i,j$:
  $\pr(U_i)=u_\alpha$, $\pr(V_j)=v_\beta$: this can be done by
  computing $\E[\Phi_{\bm x}]$ over a quasi-symmetric space.
\item \label{item:shap:to:pp2cnf:7} Using the $(m+1)(n^{(1)}+1)$
  results from the previous step, form a system of Equations where the
  unknowns are the coefficients $b_{\ell k}$, $\ell=0,m$,
  $k=0,n^{(1)}$, of the polynomial $Q(u,v)$,
  see~\eqref{eq:polynomial:q:only}.  Solve for the coefficients
  $b_{\ell k}$.
\item \label{item:shap:to:pp2cnf:8} Compute the coefficients
  $a_{\ell k}$ of the polynomial $P(u,v)=Q(u-1,v)$, see
  Claim~\ref{claim:identities:of:p:and:q}, then compute
  $v_k = \sum_\ell a_{\ell k}/\ell$.  By
  Claim~\ref{claim:good:x:makes:vk:work},
  $v_k = \sum_{S: |S|=k} \E[F_1|\mathbf{X}_S=1]$, completing Step
  (\ref{item:shap:to:pp2cnf:4}).
\item \label{item:shap:to:pp2cnf:9} To compute
  $\sum_{S \subseteq [n]: |S|=k}\E[F_1|\mathbf{X}_{S \cup \set{0}}=1]$, first
  set $U_i:=0$ for all rows $i$ where $x_{i0}=0$, then repeat steps
  (\ref{item:shap:to:pp2cnf:5}) to (\ref{item:shap:to:pp2cnf:8}).
\item This completes Step (\ref{item:shap:to:pp2cnf:3}), and we obtain
  $\shap_{F_1}(X_0)$.
\end{enumerate}

\section{Proof of Lemma~\ref{lemma:empirical:hard}~(\ref{item:empirical:hard:2})}

\label{sec:proof:empirical:hard:2}

Here we prove item (\ref{item:empirical:hard:2}) of
Lemma~\ref{lemma:empirical:hard}: one can compute $\E[\Phi]$ over a
quasi-symmetric probability space in polynomial time, given an oracle
for $\shap$ on empirical distributions.  If the probability space sets
$\pr(U_i)=1$ for some variable, then we can simply replace $\Phi$ with
$\Phi[U_i:=1]$, and similarly if $\pr(V_j)=1$.  Hence, w.l.o.g., we
can assume that the probability space is symmetric.

More precisely, we fix a PP2CNF formula
$\Phi = \bigwedge (U_i \vee V_j)$, and let
$p=\pr(U_1)=\cdots = \pr(U_m)$ and $q =\pr(V_1) = \cdots = \pr(V_n)$
define a symmetric probability space.  Our task is to compute
$\E[\Phi]$ over this space, given an oracle for computing
$\shap$-explanations on empirical distributions.  Throughout this
section we will use the notations introduced in
Sec.~\ref{sec:proof:empirical:hard:1}.

Let $\bm x$ the matrix associated to $\Phi$: $x_{ij}=0$ iff $\Phi$
contains a clause $U_i \vee V_j$.  We describe our algorithm for
computing $\E(\Phi)$ in three steps.

{\bf Step 1:} $\E[\Phi] \leq^P (v_0,v_1,\ldots,v_k)$.  More
precisely:, we claim that we can compute $\pr(\Phi)$ using an oracle
for computing the quantities $v_0, v_1, \ldots, v_n$ defined in
Eq.~\eqref{eq:v:k}.  We have seen in Eq.~\eqref{eq:pr:phi:q} that
$\E[\Phi] = \frac{1}{(1+u)^m(1+v)^n}Q(u,v)$ where $u=(1-p)/p$ and
$v =(1-q)/q$.  From Claim~\ref{claim:identities:of:p:and:q} we know
that $Q(u,v)=P(1+u,v)$, and the coefficients of $P(u,v)$ are the
quantities $a_{\ell k}$ defined in Eq.~\eqref{eq:a:ell:k}. To complete
Step 1, we will describe a polynomial time algorithm that computes the
quantities $a_{\ell k}$ associated to our matrix $\bm x$, with access
to an oracle for computing the quantities $v_0, \ldots, v_k$
associated to any matrix $\bm x'$.

Starting from the matrix $\bm x$, construct $m+1$ new matrices,
denoted by $\bm x^{(1)}, \bm x^{(2)}, \ldots, \bm x^{(m+1)}$, where,
for each $\Gamma=1,m+1$, $\bm x^{(\Gamma)}$ consists of the matrix
$\bm x$ extended with $\Gamma$ rows consisting of $(1,1,\ldots,1)$.
That is, the matrix $\bm x^{(\Gamma)}$ has $\Gamma+m$ rows, the first
$\Gamma$ rows are $(1,\ldots,1)$, and the remaining $m$ rows are those
in $\bm x$.  We run our oracle to compute the quantities $v_k$ on each
matrix $\bm x^{(\Gamma)}$.  We continue to use the notations
$g(S), a_{\ell k}, v_k$ introduced in Equations~\eqref{eq:g:s},
\eqref{eq:a:ell:k}, \eqref{eq:v:k} for the matrix $\bm x$, and add the
superscript $(\Gamma)$ for the same quantities associated to the
matrix $\bm x^{(\Gamma)}$.  We observe:
\begin{align*}
  g^{(\Gamma)} = & g(S) \cup \set{\mbox{the $\Gamma$ new rows}}\\
  a_{\ell +\Gamma,k}^{(\Gamma)} = & a_{\ell k}
\end{align*}
and therefore:
\begin{align*}
  v^{(1)}_k & = \frac{1}{1}a_{0k} + \frac{1}{2}a_{1k} + \cdots + \frac{1}{m+1}a_{mk}\\
  v^{(2)}_k & = \frac{1}{2}a_{0k} + \frac{1}{3}a_{1k} + \cdots + \frac{1}{m+2}a_{mk}\\
              & \cdots \\
  v^{(m+1)}_k & = \frac{1}{m+2}a_{0k} + \frac{1}{m+3}a_{1k} + \cdots + \frac{1}{2m+2}a_{mk}
\end{align*}
By solving this system of equations, we compute the quantities
$a_{\ell k}$ for $\ell = 0, m$.  The matrix of this system is a
special case of Cauchy's double alternant determinant:
\begin{align*}
  \det\left[
\frac{1}{x_i+y_j}
\right] = &
\frac{\prod_{1 \leq i < j \leq n}(x_i-x_j)(y_i-y_j)}{\prod_{i,j}(x_i+y_j)}
\end{align*}
where $x_i=i$ and $y_j=j-1$, and therefore the matrix of the system is
non-singular.

We observe that all matrices $\bm x^{(1)}, \ldots, \bm x^{(m+1)}$ are
``good'' (see Definition~\ref{def:x:good}), because their first row is
$(1,\ldots,1)$.

{\bf Step 2:} Let $\bm x$ be a ``good'' matrix
(Definition~\ref{def:x:good}).
Then: $(v_0,v_1,\ldots,v_n) \leq^P V$ ($V$ defined in Eq.~\eqref{eq:v:v}).
In other words, given a matrix $\bm x$, we claim that we can compute
the quantities $v_0, v_1, \ldots, v_n$ associated to $\bm x$ by
Eq.~\eqref{eq:v:k} in polynomial time, given access to an oracle for
computing the quantity $V$ associated to any matrix $\bm x'$.
The algorithm proceeds as follows. For each
$\Delta=0,1,\ldots,n$, construct a new $m \times (2n)$ matrix
$x^{(\Delta)}$ by extending $\bm x$ with $\Delta$ new columns set to
$1$ and $n-\Delta$ new columns set to $0$. Thus, $x^{(\Delta)}$ is:
\begin{align*}
&  \left(
  \begin{array}{ccccccccccc}
    x_{11}&x_{12}&\ldots &x_{1n}&1&1&\ldots&1&0&\ldots&0\\
    x_{21}&x_{22}&\ldots &x_{2n}&1&1&\ldots&1&0&\ldots&0\\
         &     &\ldots &     & & & & & & & \\
    x_{m1}&x_{m2}&\ldots &x_{mn}&1&1&\ldots&1&0&\ldots&0\\
  \end{array}
\right)
\end{align*}
Notice that $x^{(\Delta)}$ is ``good'', for any $\Delta$.  We run the
oracle on each matrix $x^{(\Delta)}$ to compute the quantity
$V^{(\Delta)}$.  We start by observing the following relationships
between the parameters of the matrix $\bm x$ and those of the matrix
$\bm x^{(\Delta)}$:
\begin{align*}
  g^{(\Delta)}(S) = & g(\Delta \cap [n]) \\
  a^{(\Delta)}_{\ell p} = & \sum_{k=0,\min(p,n)}{{\Delta \choose p-k}} a_{\ell k}\\
  v^{(\Delta)}_p = & \sum_{k=0,\min(p,n)} {{\Delta \choose p-k}} v_k
\end{align*}
Notice that, when $p > n+\Delta$, then $v_p^{(\Delta)}=0$.  We use the
oracle to compute the quantity $V^{(\Delta)}$, which is:
\begin{align*}
  V^{(\Delta)} = & \sum_{p=0,2n}\frac{p!(2n-p)!}{(2n+1)!}v_p^{(\Delta)} \\
 = &  \frac{1}{2n+1} \sum_{p=0,n+\Delta}\frac{1}{{2n  \choose p}}v^{\Delta}_p\\
 = & \frac{1}{2n+1} \sum_{p=0,n+\Delta}\sum_{k=0,\min(p,n)}\frac{{\Delta \choose p-k}}{{2n  \choose p}}v_k\\
 = & \frac{1}{2n+1} \sum_{k=0,n}\sum_{p=k,k+\Delta}\frac{{\Delta \choose p-k}}{{2n  \choose p}}v_k\\
 = & \frac{1}{2n+1} \sum_{k=0,n}\left(\sum_{q=0,\Delta}\frac{{\Delta \choose q}}{{2n  \choose k+q}}\right)v_k\\
  \defeq & \frac{1}{2n+1} \sum_{k=0,n}A_{\Delta,k}\cdot v_k
\end{align*}
Thus, after running the oracle on all matrices
$\bm x^{(0)}, \ldots, \bm x^{(n)}$, we obtain a system of $n+1$
equations with $n+1$ unknowns $v_0, v_1, \ldots, v_n$.  It remains to
prove that system's matrix, $A_{\Delta, k}$, is non-singular matrix.
Let us denote following matrices by:
\begin{align*}
A_{\Delta, k} \defeq & \sum_{q=0,\Delta}  \frac{{\Delta \choose q}}{{2n  \choose k+q}} & \Delta=0,n; k=0,n;\\
B_{\Delta, q} \defeq & {\Delta \choose q} & \Delta=0,n; q=0,n;\\
C_{q,k} \defeq & \frac{1}{{2n \choose k+q}} & q=0,n; k=0,n;
\end{align*}
It is immediate to verify that $\bm A = \bm B \cdot \bm C$, so it
suffices to prove $\det(\bm B)\neq 0$, $\det(\bm C)\neq 0$.  We start
with $\bm B$, and for that consider the Vandermonde matrix
$\bm X \defeq V(x_0, x_1, \ldots, x_n)$, $X_{qt} \defeq x_t^q$.
Denoting $\bm Y \defeq \bm B\cdot \bm X$, we have that
\begin{align*}
  Y_{\Delta t} = & \sum_{q=0,n} B_{\Delta,q} X_{q,t} = \sum_{q=0,n}{\Delta \choose q}x_t^q= (1+x_t)^\Delta
\end{align*}
is also a Vandermonde matrix $\bm Y = V(1+x_0, 1+x_1, \ldots, 1+x_n)$.
We have $\det(\bm Y) \neq 0$ when $x_0, x_1, \ldots, x_n$ are
distinct, proving that $\det(\bm B) \neq 0$.

Finally, we prove $\det(\bm C)\neq 0$. For that, we prove a slightly
more general result.  For any $N \geq 2n$, denote by $\bm C^{(n,N)}$ the
following $(n+1)\times (n+1)$ matrix:

\begin{align*}
  \bm C^{(n,N)} \defeq &
                         \left(
               \begin{array}{cccc}
                 \frac{1}{{N \choose 0}} &\frac{1}{{N \choose 1}} &\ldots&\frac{1}{{N \choose n}} \\
                 \frac{1}{{N \choose 1}} &\frac{1}{{N \choose 2}} &\ldots&\frac{1}{{N \choose n+1}} \\
                                         &    & \ldots & \\
                 \frac{1}{{N \choose n}} &\frac{1}{{N \choose n+1}} &\ldots&\frac{1}{{N \choose 2n}}
               \end{array}
\right)
\end{align*}
We will prove that $\det(\bm C^{(n,N)}) \neq 0$; our claim follows
from the special case $N=2n$.  For the base case, $n=0$,
$\det(\bm C^{(0,N)})=1$ because $\bm C^{(0,N)}$ is a $1\times 1$
matrix equal to $1/{N \choose 0}$, hence $\det(\bm C^{(0,N)})=1$.  To
show the induction step, we will perform elementary column operations
(which preserve the determinant) to make the last row of the resulting
matrix consist of zeros, except for the last entry.

Consider an arbitrary row $i$, and two adjacent columns $j, j+1$ in
that row:
\begin{align*}
  \begin{array}{cccc}
    \ldots & \frac{1}{{N \choose i+j}} & \frac{1}{{N \choose i+j+1}} & \ldots
  \end{array}
\end{align*}
We use the fact that ${N \choose i+j} = {N \choose i+j+1} \frac{i+j+1}{N-i-j}$
and rewrite the two adjacent elements as:
\begin{align*}
  \begin{array}{cccc}
    \ldots & \left(\frac{1}{{N \choose i+j+1}}  \times \frac{N-i-j}{i+j+1}\right) & \frac{1}{{N \choose i+j+1}} & \ldots
  \end{array}
\end{align*}
Now, for each $j = 0,1,2,...,n-1$, we subtract column $j+1$, multiplied by $\frac{N-(n+j)}{(n+j)+1}$, from column $j$. The last row becomes $0,0,\ldots,0,\frac{1}{{N \choose 2n}}$, which means that $det(C^{(n,N)})$ is equal to $\frac{1}{{N \choose 2n}}$ times the upper left $(n \times n)$ minor.

Now, we check what happens with element at $(i, j)$. After subtraction, it becomes
\begin{align*}
\frac{1}{{N \choose i+j+1}}  \times \left(\frac{N-(i+j)}{(i+j)+1}-\frac{N-(n+j)}{(n+j)+1}\right)
\end{align*}
This expression can be rewritten as:
\begin{align*}
& \frac{1}{{N \choose (i+j)+1}}  \times  \left(\frac{N-(i+j)}{(i+j)+1}-\frac{N-(n+j)}{(n+j)+1}\right)  \\
& = \frac{(N-i-j-1)!(i+j+1)!}{N!} \frac{(N+1)(n-i)}{(i+j+1)(n+j+1)} \\ 
& = \frac{(N-i-j-1)!(i+j)!}{(N-1)!N} \frac{(N+1)(n-i)}{(n+j+1)} \\ 
&= \frac{1}{{N-1 \choose (i+j)}} \frac{(N+1)(n-i)}{N(n+j+1)} 
\end{align*}

Note that this expression holds with the whole $(n \times n)$
upper-left minor of $\bm C^{(n,N)}$: the element in the lower-right
corner of the matrix remains $1/{N \choose 2n}$.  Observe that the
$(i,j)$-th entry of this minor is precisely the $(i,j)$-entry of
$C^{(n-1, N-1)}$, multiplied by $\frac{(N+1)(n-i)}{N(n+j+1)} $.  Here
$\frac{N+1}{N}$ is a global constant, $n-i$ is the same constant in
the entire row $i$, and $\frac{1}{n+j+1}$ is the same constant in the
entire column $j$. We factor out the global constant $\frac{N+1}{N}$,
factor out $n-i$ from each row $i$, and factor out $\frac{1}{n+j+1}$
from each column $j$, and obtain the following recurrence:

\begin{align*}
  \det(\bm C^{(n,N)}) = & \frac{1}{{N \choose 2n}} \left(\frac{N+1}{N}\right)^n \times \frac{\prod_{i=0}^{n-1} (n-i)}{\prod_{j=0}^{n-1}(n+j+1)}\\
& \times\det(\bm C^{(n-1,N-1)})
\end{align*}
It follows by induction on $n$ that $\det(\bm C^{(n,N)})\neq 0$.

{\bf Step 3:} Let $\bm x$ be a ``good'' matrix
(Definition~\ref{def:x:good}).  Then $V \leq^P \shap$. More precisely,
we claim that we can compute the quantity $V$ associated to a matrix
$\bm x$ as defined in Eq.~\eqref{eq:v:v} in polynomial time, by using
an oracle for computing  $\shap_{F_1}(X_j)$ over any matrix $\bm x'$.

We modify the matrix $\bm x$ as follows.  We add a new attribute $X_0$
whose value is 1 only in the first row, and let $F_1 = X_0$ denote the
function that returns the value of feature $X_0$.  We show here the
new matrix $\bm x'$, augmented with the values of the function $F_1$:
  \begin{align*}
&
\left(
  \begin{array}{ccccc|c}
    X_0 & X_1 &X_2 &\ldots&X_n & F_1 \\ \hline
    1 & x_{11} & x_{12} & \ldots & x_{1n} & 1 \\
    0 & x_{21} & x_{22} & \ldots & x_{2n} & 0 \\
      & \ldots & \ldots & \ldots & \ldots & \\
    0 & x_{m1} & x_{m2} & \ldots & x_{mn} & 0 \\
  \end{array}
\right)
\end{align*}
We run our oracle to compute $\shap_{F_1}(X_0)$ over the matrix
$\bm x'$.  The value $\shap_{F_1}(X_0)$ is given by
Eq.~\eqref{eq:as:discussed}, but notice that the matrix $\bm x'$ has
$n+1$ columns, while Eq.~\eqref{eq:as:discussed} is given for a matrix
with $n$ columns.  Therefore, since $\E[F_1|X_{S\cup \set{0}}]=1$ for
any set $S$, we have:
\begin{align*}
  \shap_{F_1}(X_0)= & 1 - \sum_{k=0,n}\frac{k!(n-k)!}{(n+1)!}\E[F_1|\mathbf{X}_S=1]
\end{align*}
Since $\bm x$ is ``good'', so is the new matrix $\bm x'$ and, by
Claim~\ref{claim:good:x:makes:vk:work}, for any $k=0,n$
  \begin{align*}
    \sum_{S: |S|=k} \E[F_1|\mathbf{X}_S=1]= &  v_k
  \end{align*}
This implies that we can use the value $\shap_{F_1}(X_0)$ returned by
the oracle to compute the quantity:
\begin{align*}
  \sum_{k=0,n}\frac{k!(n-k)!}{(n+1)!}\E[F_1|\mathbf{X}_S=1] = \sum_{k=0,n}\frac{k!(n-k)!}{(n+1)!}v_k=V
\end{align*}
which completes Step 3

{\bf Putting It Together} Given a PP2CNF formula $\Phi = \bigwedge
(U_i \vee V_j)$, and two probability values $p = \pr(U_1) = \cdots =
\pr(U_m)$ and $q = \pr(V_1) = \cdots = \pr(V_n)$, to compute
$\E[\Phi]$ we proceed as follows:

\begin{itemize}
\item  Construct the 0,1-matrix associated to $\Phi$, denote it $\bm
  x$.
\item Construct $m+1$ matrices $\bm x^{(\Gamma)}$, $\Gamma = 1,m+1$,
  by adding $\Gamma$ rows $(1,1,\ldots,1)$ at the beginning of the
  matrix.
\item For each matrix $\bm x^{(\Gamma)}$, construct $n+1$ matrices
  $\bm x^{(\Gamma,\Delta)}$, $\Delta = 0,n$, by adding $n$ columns,
  of which the first $\Delta$ columns are 1, the others are 0.
\item For each $\bm x^{(\Gamma,\Delta)}$, construct one new matrix
  $(\bm x^{(\Gamma,\Delta)})'$ by adding a column $(1,0,0,\ldots,0)$.
  Call this new column $X_0$.
\item Use the oracle to compute $\shap_{F_1}(X_0)$.  From here, compute
  the value $V^{(\Gamma,\Delta)}$ associated with the matrix
  $\bm x^{(\Gamma,\Delta)}$.
\item Using the values  $V^{(\Gamma,0)}, V^{(\Gamma,1)}, \ldots,
  V^{(\Gamma,n)}$, compute the values $v_0^{(\Gamma)}, v_1^{(\Gamma)},
  \ldots, v_n^{(\Gamma)}$ associated to the matrix $\bm x^{(\Gamma)}$.
\item For each $k=0,n$, use the values $v_k^{(1)}, v_k^{(2)}, \ldots,
  v_k^{(m+1)}$ to compute the coefficients $a_{0k}, a_{1k}, \ldots,
  a_{mk}$ associated to the matrix $\bm x$.
\item At this point we have all coefficients $a_{\ell k}$ of the
  polynomial $P(u,v)$.
\item Compute the coefficients $b_{\ell k}$ of the polynomial $Q(u,v)
  = P(1+u,v)$.
\item Finally, return
  $\E[\Phi]=\frac{p^mq^n}{(1-p)^m(1-q)^n}Q(\frac{1-p}{p},\frac{1-q}{q})$.
\end{itemize}

This concludes the entire proof of Lemma~\ref{lemma:empirical:hard}.

\end{document}